\newcommand{\E}{\mathbb{E}}
\newcommand{\prob}{\mathbb{P}}
\newtheorem{proposition}{Proposition}
\newtheorem{lemma}{Lemma}
\newtheorem{assumption}{Assumption}
\title{The Symmetry between Arms and Knapsacks: A Primal-Dual Approach for Bandits with Knapsacks}
\author{Xiaocheng Li$^\dagger$ \and Chunlin Sun$^\ddagger$ \and  Yinyu Ye$^\Diamond$}
\date{\small 
$^\dagger$ Imperial College Business School\\
$^\ddagger$ Institute for Computational and Mathematical Engineering, Stanford University\\
$^\Diamond$ Department of Management Science and Engineering, Stanford University
}
\begin{document}
\maketitle

\onehalfspacing

\begin{abstract}
In this paper, we study the bandits with knapsacks (BwK) problem and develop a primal-dual based algorithm that achieves a problem-dependent logarithmic regret bound. The BwK problem extends the multi-arm bandit (MAB) problem to model the resource consumption associated with playing each arm, and the existing BwK literature has been mainly focused on deriving asymptotically optimal distribution-free regret bounds. We first study the primal and dual linear programs underlying the BwK problem. From this primal-dual perspective, we discover symmetry between arms and knapsacks, and then propose a new notion of sub-optimality measure for the BwK problem. The sub-optimality measure highlights the important role of knapsacks in determining algorithm regret and inspires the design of our two-phase algorithm. In the first phase, the algorithm identifies the optimal arms and the binding knapsacks, and in the second phase, it exhausts the binding knapsacks via playing the optimal arms through an adaptive procedure. Our regret upper bound involves the proposed sub-optimality measure and it has a logarithmic dependence on length of horizon $T$ and a polynomial dependence on $m$ (the numbers of arms) and $d$ (the number of knapsacks). To the best of our knowledge, this is the first problem-dependent logarithmic regret bound for solving the general BwK problem.
\end{abstract}

\section{Introduction}

The Multi-Armed Bandit (MAB) problem is a problem in which a limited amount of resource must be allocated between competing (alternative) choices in a way that maximizes the expected gain \citep{gittins2011multi}. It is a benchmark problem for decision making under uncertainty that has been studied for nearly a century. As a prototypical reinforcement learning problem, MAB problem exemplifies the exploration–exploitation tradeoff dilemma \citep{weber1992gittins}. The original problem first formulated in its predominant version in \citep{robbins1952some}, has inspired a recent line of research that considers additional constraints that reflect more accurately the reality of the online decision making process. \textit{Bandits with Knapsacks} (BwK) was introduced by \cite{badanidiyuru2013bandits} to allow more general constraints on the decisions across time, in addition to the customary limitation on the time horizon. The BwK problem, as a general framework, encompasses a wide range of applications, including dynamic pricing and revenue management \citep{besbes2012blind}, online advertisement \citep{mehta2005adwords}, network and routing \citep{agrawal2014dynamic}, etc. 

While the existing BwK literature \citep{badanidiyuru2013bandits,agrawal2014bandits} has derived algorithms that achieve optimal problem-independent regret bounds, a problem-dependent bound that captures the optimal performance of an algorithm on a specific BwK problem instance remains an open question. For the setting of standard MAB problem, the problem-dependent bound has been well understood, and an upper bound with logarithmic dependence on $T$ can be achieved by both UCB-based algorithm \citep{auer2002finite} and Thompson sampling-based algorithm \citep{agrawal2012analysis}. In this paper, we focus on developing a problem-dependent bound for the BwK problem and identify parameters that characterize the hardness of a BwK problem instance. 

Two existing works along this line are \citep{flajolet2015logarithmic} and \citep{sankararaman2020advances}. The paper \citep{flajolet2015logarithmic} considers several specific settings for the problem, and for the general BwK problem, it achieves an $O(2^{m+d}\log T)$ regret bound (with other problem-dependent parameters omitted) where $m$ is the number of arms and $d$ is the number of the knapsacks/resource constraints. In addition, the results in \citep{flajolet2015logarithmic} require the knowledge of some parameters of the problem instance a priori. The recent work \citep{sankararaman2020advances} considers the BwK problem under the assumptions that there is only one single knapsack/resource constraint and that there is only one single optimal arm. In contrast to these two pieces of work, we consider the problem in its full generality and do not assume any prior knowledge of the problem instance. We will further compare with their results after we present our regret bound. 


Specifically, we adopt a primal-dual perspective to study the BwK problem. Our treatment is new in that
we highlight the effects of resources/knapsacks on regret from the dual perspective and define the sub-optimality measure based on the primal and dual problems jointly. Specifically, we first derive a generic upper bound that works for all BwK algorithms. The upper bound consists of two elements: (i) the number of times for which a sub-optimal arm is played; (ii) the remaining knapsack resource at the end of the horizon. It emphasizes that the arms and knapsacks are of equal importance in determining the regret of a BwK algorithm. By further exploiting the structure of the primal and dual LPs, we develop a new sub-optimality measure for the BwK problem which can be viewed as a generalization of the sub-optimality measure for the MAB problem first derived in \citep{lai1985asymptotically}. The sub-optimality measure accounts for both arms and knapsacks, and it aims to distinguish optimal arms from non-optimal arms, and binding knapsacks from non-binding knapsacks. We use this measure as a key characterization of the hardness of a BwK problem instance.

Inspired by these findings, we propose a two-phase algorithm for the problem. The first phase of our algorithm is elimination-based and its objective is to identify the optimal arms and binding knapsacks of the BwK problem. The second phase of our algorithm utilizes the output of the first phase and it uses the optimal arms to exhaust the remaining resources through an adaptive procedure. Our algorithm and analysis feature for its full generality and we only make a mild assumption on the non-degeneracy of the underlying LP. In addition, the algorithm requires no prior knowledge of the underlying problem instance. 

\textbf{Other related literature:} \citep{agrawal2015linear, agrawal2016efficient} study the contextual BwK problem where the reward and resource consumption are both linear in a context vector. \citep{immorlica2019adversarial,kesselheim2020online} study the BwK problem under an adversarial setting. \citep{ferreira2018online} analyzes Thompson sampling-based algorithms for the BwK problem.

\section{Model and Setup}

\label{model_setup}

The problem of \textit{bandits with knapsacks} (BwK) was first defined in \cite{badanidiyuru2013bandits}, and the notations presented here are largely consistent with \cite{badanidiyuru2013bandits, agrawal2014bandits}. 
Consider a fixed and known finite set of $m$ arms (possible actions) available to the decision maker, henceforth called the algorithm. There are $d$ type of resources and a finite time horizon $T$, where $T$ is known to the algorithm. In each time step $t$, the algorithm plays an arm of the m arms, receives reward $r_t$, and consumes amount $C_{j,t} \in [0,1]$ of each resource $j \in [d]$. The reward $r_t$ and consumption
$\bm{C}_t = (C_{1,t},....,C_{d,t})^\top \in \mathbb{R}^d$ are revealed to the algorithm after choosing arm $i_t\in[m]$. The rewards and costs
in every round are generated i.i.d. from some unknown fixed underlying distribution. More precisely, there is some fixed but unknown $\bm{\mu} = (\mu_1,...,\mu_m)^\top\in \mathbb{R}^m$ and $\bm{C}= (\bm{c}_1,...,\bm{c}_m)\in \mathbb{R}^{d\times m}$ such that
$$\E[r_t|i_t] = \mu_{i_t}, \ \E[\bm{C}_{t}|i_t] = \bm{c}_{i_t}$$
where $\mu_i\in \mathbb{R}$ and $\bm{c}_i=(c_{1i},...,c_{di})^\top\in \mathbb{R}^d$ are the expected reward and the expected resource consumption of arm $i\in[m].$
In the beginning of every time step t, the algorithm needs to pick an arm $i_t$, using only the history of plays and outcomes until time step $t-1.$ There is a hard constraint capacity $B_j$ for the $j$-th type of resource. The algorithm stops at the earliest time $\tau$ when one or more of the constraints is violated, i.e. $\sum_{t=1}^{\tau+1} c_{j,t} > B_j$ for some $j\in[d]$ or if the time horizon ends, if i.e. $\tau\ge T.$ Its total reward is given by the sum of rewards
in all rounds preceding $\tau$, i.e $\sum_{t=1}^{\tau} r_t.$ The goal of the algorithm is to maximize the
expected total reward. The values of $B_j$ are known to the algorithm, and without loss
of generality, we make the following assumption.

\begin{assumption} We assume $B_j = B = \min_{j} B_j$ for all $j\in[d]$ (by scaling the consumption matrix $\bm{C}$). Let $\bm{B} = (B,...,B)^\top \in \mathbb{R}^d$. Moreover, we assume the resource capacity $\bm{B}$ scales linearly with $T$, i.e., $\bm{B}=T\cdot\bm{b}=T\cdot (b,...,b)^\top \in \mathbb{R}^d$ for some $b>0.$
\label{assume_b}
\end{assumption}
The assumption is mainly for notation simplicity and it will not change the nature of the analysis in this paper. Given that our focus is to derive asymptotic problem-dependent bound, it is natural to have the resource capacity scales linearly with the length of horizon. Throughout this paper, we use bold symbols to denote vectors/matrices and normal symbols to denote scalars.

Furthermore, without loss of generality, a ``null'' arm is introduced to represent the time constraint (time horizon $T$). Specifically, let $\mu_m=0,$ $\bm{c}_m=(b,0,...,0)^\top$, and $c_{1,i}=b$ for all $i\in[m].$ In this way, the first constraint captures the constraint of finite time horizon $T$ and the ``null'' arm can be played with no reward achieved and with no cost induced to the other factual constraints except for the time constraint.

Regret is defined as the difference in the total reward obtained by the algorithm and OPT, where OPT denotes the total expected reward for
the optimal dynamic policy. In this paper, we are interested in the (expected) problem-dependent regret,
$$\text{Regret}_T^{\pi}(\mathcal{P}, \bm{B}) \coloneqq \text{OPT} - \E\left[\sum_{t=1}^{\tau} r_t\right]$$
where $\pi$ denotes the algorithm, and $\mathcal{P}$ encapsulates all the parameters related to the distributions of reward and resource consumption, including $\bm{\mu}$ and $\bm{C}.$ The expectation is taken with respect to the randomness of the reward and resource consumption.

Consider the following linear program:
\begin{align}
  \label{primalLP} \text{OPT}_{\text{LP}} \coloneqq  \max_{\bm{x}} \ \ & \bm{\mu}^\top \bm{x} \\
    \text{s.t.}\ \ &  \bm{C} \bm{x} \le \bm{B}  \nonumber  \\
    & \bm{x}\ge \bm{0} \nonumber 
\end{align}
where the decision variables are $\bm{x} = (x_1,...,x_m)^\top \in \mathbb{R}^m.$ One can show that (see \cite{badanidiyuru2013bandits}) $$\text{OPT}_{\text{LP}} \ge \text{OPT}$$
so that $\text{OPT}_{\text{LP}}$ provides a deterministic upper bound for the expected reward under the optimal dynamic policy. Let $\bm{x}^*=(x_1^*, ...,x_m^*)^\top$ denote the optimal solution to (\ref{primalLP}). 

\section{Primal-dual Perspective for BwK} 

In this section, we present a generic regret upper bound and explore the properties of the underlying primal and dual LPs.

Let $\mathcal{I}^*$ and $\mathcal{I}'$ denote the set of optimal basic variables and the set of optimal non-basic variables of \eqref{primalLP}, respectively. Let $\mathcal{J}^*$ and $\mathcal{J}'$ denote the set of binding and non-binding constraints of \eqref{primalLP}, respectively. That is,
\begin{align*}
   \mathcal{I}^* & \coloneqq \left\{x_i^*>0, i\in [m]\right\}, \\
    \mathcal{I}' & \coloneqq \left\{x_i^*=0, i\in [m]\right\}, \\
 \mathcal{J}^* &  \coloneqq \left\{ B-\sum_{i=1}^m c_{ji}x_i^* =0, j\in[d]\right\},\\
 \mathcal{J}' &  \coloneqq \left\{ B-\sum_{i=1}^m c_{ji}x_i^*>0, j\in[d]\right\}.
\end{align*}
So, we know $\mathcal{I}^*\cap \mathcal{I}'=[m]$ and $\mathcal{J}^*\cap \mathcal{J}'=[d].$ Accordingly, we call an arm $i\in\mathcal{I}^*$ as an optimal arm and $i\in\mathcal{I}'$  as a sub-optimal arm. Here and hereafter, we will refer to knapsack as constraint so that the terminology is more aligned with the LPs.

We make the following assumption on the LP's optimal solution. 
\begin{assumption}
\label{ass_LP}
The LP \eqref{primalLP} has an unique optimal solution. Moreover, the optimal solution is non-degenerate, i.e., 
$$|\mathcal{I}^*| = |\mathcal{J}^*|.$$
\end{assumption}
The assumption is a standard one in LP's literature, and any LP can satisfy the assumption with an arbitrarily small perturbation \citep{megiddo1989varepsilon}. To interpret the non-degeneracy, consider if $|\mathcal{I}^*| = |\mathcal{J}^*|=l$, then the optimal solution to LP \eqref{primalLP} is to play the only $l$ arms in $\mathcal{I}^*$. When there is no linear dependency between the columns of $\bm{C}$, that will result in a depletion of $l$ resource constraints.

The dual problem of \eqref{primalLP} is 
\begin{align} 
\label{DualLP}
\min_{\bm{y}} \ \ & \bm{B}^\top \bm{y} \\
\text{s.t. \ } \ & \bm{C}^\top\bm{y} \ge \bm{\mu} \nonumber \\
&  \bm{y} \ge \bm{0}.\nonumber
\end{align}
Denote its optimal solution as $\bm{y}^*=(y_1^*,...,y_d^*).$ From LP's complementarity condition, we know the following relation holds under Assumption \ref{ass_LP},
\begin{align*}
   j \in \mathcal{J}^* \Leftrightarrow y_j^*>0,\ \ j \in \mathcal{J}'  \Leftrightarrow y_j^*=0.
\end{align*}

The following lemma summarizes the LP's properties.
\begin{lemma}
\label{I_J_set}
Under Assumption \ref{ass_LP}, we have the primal LP \eqref{primalLP} and the dual LP \eqref{DualLP} share the same optimal objective value. Also,
$$|\mathcal{I}^*|+|\mathcal{J}'|=d,$$
$$|\mathcal{I}'|+|\mathcal{J}^*|=m.$$
\end{lemma}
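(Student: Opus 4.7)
The plan is to split the lemma into two independent claims and handle them separately: first strong duality to get equal objective values, then a short cardinality/counting argument to get the two index-set equalities.

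For the strong duality part, I would argue that the primal LP \eqref{primalLP} is both feasible and bounded, so strong duality applies and the optimal value of \eqref{primalLP} equals that of \eqref{DualLP}. Feasibility is immediate, since $\bm{x}=\bm{0}$ satisfies $\bm{C}\bm{x}\le \bm{B}$ and $\bm{x}\ge \bm{0}$. For boundedness, I would invoke the null-arm construction introduced in the model: since $c_{1,i}=b>0$ for every $i\in[m]$, the first (time) constraint forces $b\sum_{i=1}^m x_i \le Tb$, i.e., $\sum_i x_i \le T$, so the feasible region is a bounded polytope and $\text{OPT}_{\text{LP}}$ is finite. Standard LP strong duality then yields equality of the primal and dual optimal objective values.

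For the cardinality claims, the counting is essentially bookkeeping using Assumption~\ref{ass_LP}. By construction the sets $\mathcal{I}^*$ and $\mathcal{I}'$ partition $[m]$ (a coordinate $x_i^*$ is either strictly positive or zero), so $|\mathcal{I}^*|+|\mathcal{I}'|=m$; similarly $\mathcal{J}^*$ and $\mathcal{J}'$ partition $[d]$, giving $|\mathcal{J}^*|+|\mathcal{J}'|=d$. (I would also note that the statement $\mathcal{I}^*\cap\mathcal{I}'=[m]$ in the text is a typo for $\mathcal{I}^*\cup\mathcal{I}'=[m]$ and likewise for $\mathcal{J}$.) The non-degeneracy hypothesis gives $|\mathcal{I}^*|=|\mathcal{J}^*|$. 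Substituting this into the two partition identities yields
\[
|\mathcal{I}^*|+|\mathcal{J}'| \;=\; |\mathcal{J}^*|+|\mathcal{J}'| \;=\; d, \qquad |\mathcal{I}'|+|\mathcal{J}^*| \;=\; |\mathcal{I}'|+|\mathcal{I}^*| \;=\; m,
\]
which are the two desired identities.

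There is essentially no obstacle in this lemma; both pieces are textbook facts once the setup is in hand. The only thing worth being careful about is justifying boundedness of the primal LP (so that strong duality, rather than just weak duality, applies), which is exactly where the null-arm modeling device from Section~\ref{model_setup} pays off. If one wanted to avoid relying on the null arm, one could instead invoke Assumption~\ref{ass_LP}: the existence of a unique optimal solution already implies finiteness of $\text{OPT}_{\text{LP}}$, and hence strong duality gives equality with the dual.
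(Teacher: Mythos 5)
Your proof is correct and follows essentially the same route as the paper: the index-set identities come from exactly the same counting argument (the partitions $\mathcal{I}^*\cup\mathcal{I}'=[m]$, $\mathcal{J}^*\cup\mathcal{J}'=[d]$ combined with $|\mathcal{I}^*|=|\mathcal{J}^*|$ from Assumption \ref{ass_LP}). The only cosmetic difference is in how strong duality is justified --- the paper exhibits a feasible point for each of the primal and the dual, whereas you show the primal is feasible and bounded via the time constraint $\sum_i x_i\le T$; both are standard and equally valid.
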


\subsection{A Generic Regret Upper Bound}

We begin our discussion with deriving a new upper bound for a generic BwK algorithm. First, we define the knapsack process as the remaining resource capacity at each time $t$. Specifically, we define $\bm{B}^{(0)} \coloneqq \bm{B}$ and 
$$\bm{B}^{(t+1)} \coloneqq \bm{B}^{(t)} - \bm{C}_t$$
for $t\in[T].$ Recall that $\bm{C}_t$ is the (random) resource consumption at time $t$. The process $\bm{B}^{(t)}$ is pertaining to the BwK algorithm. In addition, we follow the convention of the bandits literature and define the count process $n_{i}(t)$ as the number of times the $i$-th arm is played up to the end of time period $t$. 

\begin{proposition}
\label{prop_upper_bound}
The following inequality holds for any BwK algorithm,
\begin{align}
    \text{Regret}_T^{\pi}(\mathcal{P}, \bm{B}) \le   \sum_{i\in \mathcal{I}'} n_i(t)\Delta_i +\mathbb{E}\left[\bm{B}^{(\tau)}\right]^\top\bm{y}^*. \label{upperBound}
\end{align}
where $\Delta_i = \bm{c}_i^\top\bm{y}^*-\mu_i$ for $i\in[m].$
\end{proposition}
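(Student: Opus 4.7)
The plan is to start from the natural LP upper bound on the regret and rearrange terms using weak duality and complementary slackness, so that the dual variable $\bm{y}^*$ converts ``unused resource'' and ``sub-optimal arm play'' directly into their dual costs.

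First, I would use the fact $\text{OPT} \le \text{OPT}_{\text{LP}}$ stated in Section 2 together with strong LP duality (Lemma \ref{I_J_set}) to write $\text{OPT} \le \bm{B}^\top \bm{y}^*$. Thus
\[
\text{Regret}_T^\pi(\mathcal{P},\bm{B}) \le \bm{B}^\top \bm{y}^* - \mathbb{E}\!\left[\sum_{t=1}^{\tau} r_t\right],
\]
which reduces the proposition to showing that the right-hand side above equals the claimed bound.

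Next, I would rewrite the expected reward in terms of the count processes $n_i(\tau)$. By the tower property and $\mathbb{E}[r_t\mid i_t]=\mu_{i_t}$, one gets $\mathbb{E}\bigl[\sum_{t=1}^{\tau} r_t\bigr] = \sum_{i=1}^m \mu_i \, \mathbb{E}[n_i(\tau)]$. Dual feasibility of $\bm{y}^*$ gives $\mu_i = \bm{c}_i^\top \bm{y}^* - \Delta_i$ with $\Delta_i \ge 0$, and complementary slackness under Assumption \ref{ass_LP} yields $\Delta_i = 0$ for every $i \in \mathcal{I}^*$. Substituting,
\[
\mathbb{E}\!\left[\sum_{t=1}^{\tau} r_t\right] = \left(\sum_{i=1}^m \mathbb{E}[n_i(\tau)]\,\bm{c}_i\right)^{\!\top}\!\bm{y}^* \;-\; \sum_{i\in \mathcal{I}'} \mathbb{E}[n_i(\tau)]\,\Delta_i.
\]

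Then I would identify the bracketed vector with the total expected consumption. By the knapsack recursion, $\bm{B} - \bm{B}^{(\tau)} = \sum_{t=1}^{\tau}\bm{C}_t$, and taking expectations (again via the tower property, which is valid because $\tau\le T+1$ is a bounded stopping time and $\bm{C}_t\in[0,1]^d$) gives $\mathbb{E}[\bm{B}]-\mathbb{E}[\bm{B}^{(\tau)}] = \sum_{i=1}^m \mathbb{E}[n_i(\tau)]\,\bm{c}_i$. Plugging this into the previous display and then into the regret bound, the $\bm{B}^\top\bm{y}^*$ terms cancel and we obtain
\[
\text{Regret}_T^\pi(\mathcal{P},\bm{B}) \le \mathbb{E}[\bm{B}^{(\tau)}]^\top \bm{y}^* + \sum_{i\in \mathcal{I}'} \mathbb{E}[n_i(\tau)]\,\Delta_i,
\]
which is exactly the claimed inequality.

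The only subtlety, and the main thing to write carefully, is the interchange of expectation with the random sum up to the stopping time $\tau$: because the algorithm commits to $i_t$ using only the history up through time $t-1$, $i_t$ is predictable; because $\tau\le T+1$ is bounded and both $r_t$ and $\bm{C}_t$ are bounded, one can invoke a Wald/optional-stopping argument (or simply condition on $\{\tau \ge t\}$ being $\mathcal{F}_{t-1}$-measurable) to justify both $\mathbb{E}[\sum_{t\le \tau} r_t] = \sum_i \mu_i\,\mathbb{E}[n_i(\tau)]$ and the analogous identity for $\bm{C}_t$. Everything else is algebraic.
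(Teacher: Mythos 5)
Your proposal is correct and follows essentially the same route as the paper's proof: bound the regret by $\text{OPT}_{\text{LP}} - \E[\sum_{t\le\tau} r_t]$, expand the expected reward via a Wald-type identity and the reduced-cost decomposition $\mu_i = \bm{c}_i^\top\bm{y}^* - \Delta_i$ (with $\Delta_i=0$ on $\mathcal{I}^*$ by complementary slackness), and identify the total expected consumption with $\bm{B}-\E[\bm{B}^{(\tau)}]$ so that strong duality cancels the $\bm{B}^\top\bm{y}^*$ term. Your explicit attention to justifying the interchange of expectation with the sum up to the stopping time $\tau$ is a welcome addition the paper only gestures at.
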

Here $\Delta_i$ is known as reduced cost/profit in LP literature and it quantifies the cost-efficiency of each arm (each basic variable in LP). The upper bound in Proposition \ref{prop_upper_bound} is new to the existing literature and it can be generally applicable to all BwK algorithms. It  consists of two parts: (i) the number of times for which a sub-optimal arm is played multiplied by the corresponding reduced cost; (ii) the remaining resource at time $\tau$, either when any of the resource is depleted or at the end of time horizon $T$.  The first part is consistent with the classic bandits literature in that we always want to upper bound the number of sub-optimal arms being played throughout the horizon. At each time a sub-optimal arm $i\in\mathcal{I}'$ is played, a cost of $\Delta_i$ will be induced. Meanwhile, the second part is particular to the bandits with knapsacks setting and can easily be overlooked. Recall that the definition of $\tau$ refers to the first time that any resource $j\in[d]$ is exhausted (or the end of the horizon). It tells that the left-over of resources when the process terminates at time $\tau$ may also induce regret. For example, for two binding resources $j,j'\in \mathcal{J}^*$, it would be less desirable for one of them $j$ to have a lot of remaining while the other one $j'$ is exhausted. Since the binding resources are critical in determining optimal objective value for LP \eqref{primalLP}, intuitively, it is not profitable to waste any of them at the end of the procedure.

\subsection{Symmetry between Arms and Bandits}

\label{symmetry}

From Proposition \ref{prop_upper_bound}, we see the importance of dual problem \eqref{DualLP} in bounding an algorithm's regret. Now, we pursue further along the path and propose a new notion of sub-optimality for the BwK problem. Our sub-optimality measure is built upon both the primal and dual LPs, and it reveals the combinatorial structure of the problem. In the following, we define two classes of LPs, one for the arm $i\in[m]$ and the other for the constraints $j\in[d].$

First, for each arm $i\in[m],$ define
\begin{align} 
\text{OPT}_i \coloneqq \max_{\bm{x}} \ \ & \bm{\mu}^\top \bm{x}, \label{opt_i} \\
\text{s.t. \ } \ & \bm{C}\bm{x} \le \bm{B},\nonumber \\
&  x_i=0, \bm{x} \ge \bm{0}. \nonumber
\end{align}
By definition, $\text{OPT}_i$ denotes the optimal objective value of an LP that takes the same form as the primal LP \eqref{primalLP} except with an extra constraint $x_i=0$. It represents the optimal objective value if the $i$-th arm is not allowed to use. For a sub-optimal arm $i\in\mathcal{I}',$ $\text{OPT}_i=\text{OPT}_{\text{LP}}$, while for an optimal arm $i\in\mathcal{I}^*$, $\text{OPT}_i<\text{OPT}_{\text{LP}}$. In this way, $\text{OPT}_i$ characterizes the importance of arm $i$.

Next, for each constraint $j\in[d]$, define
\begin{align}
\text{OPT}_j \coloneqq \min_{\bm{y}} \ \ \label{opt_j}& \bm{B}^{\top}\bm{y} - B, \\
\text{s.t.}\ \ & {{\bm{C}}}^{\top}\bm{y} \geq \bm{\mu} +\bm{C}_{j,\cdot}, \nonumber \\
& \bm{y}\geq\bm{0},\nonumber
\end{align}   
where $\bm{C}_{j,\cdot}$ denotes the $j$-th row of the constraint matrix $\bm{C}.$ Though it may not be as obvious as the previous case of OPT$_i$, the definition of $\text{OPT}_j$ aims to characterize the bindingness/non-bindingness of a constraint $j$. The point can be illustrated by looking at the primal problem for \eqref{opt_j}. From LP's strong duality, we know 
    \begin{align}
        \label{p-opt_j}
     \text{OPT}_j =  \max_{\bm{x}} \ \  & \bm{\mu}^\top \bm{x} -(B-\bm{C}_{j,\cdot}^{\top}\bm{x}), \\
        \text{s.t.}\ \ & \sum_{i=1}^{m} \bm{C} \bm{x} \leq \bm{B}, \nonumber \\
        & \bm{x} \geq 0. \nonumber
\end{align}    
Compared to the original primal LP \eqref{primalLP}, there is an extra term in the objective function in LP \eqref{p-opt_j}. The extra term is a penalization for the left-over of the $j$-th constraint, and thus it encourages the usage of the $j$-th constraint. For a binding constraint $j\in\mathcal{J}^*,$ it will be exhausted under the optimal solution $\bm{x}^*$ to LP \eqref{primalLP} so the penalization term does not have any effect, i.e., \text{OPT}$_j=$\text{OPT}$_{\text{LP}}$. In contrast, for a non-binding constraint $j\in\mathcal{J}',$ the extra term will result in a reduction in the objective value, i.e., $\text{OPT}_j<\text{OPT}_{\text{LP}}$. We note that one can introduce any positive weight to the penalization term so as to trade off between the reward and the left-over of the $j$-th constraint in \eqref{p-opt_j}, but its current version suffices our discussion. 

The following proposition summarizes the properties of OPT$_i$ and OPT$_j.$
\begin{proposition}
\label{OPTij}
Under Assumption \ref{ass_LP}, we have 
\begin{align*}
\text{OPT}_i<\text{OPT}_{\text{LP}} &\Leftrightarrow i\in\mathcal{I}^*, \\
\text{OPT}_i=\text{OPT}_{\text{LP}} &\Leftrightarrow i\in\mathcal{I}', \\
\text{OPT}_j=\text{OPT}_{\text{LP}} &\Leftrightarrow j\in\mathcal{J}^*, \\
\text{OPT}_j<\text{OPT}_{\text{LP}} &\Leftrightarrow j\in\mathcal{J}'.
\end{align*}

\end{proposition}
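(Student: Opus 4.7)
The plan is to treat the two assertions about $\mathrm{OPT}_i$ and the two about $\mathrm{OPT}_j$ separately, with both halves turning on the same principle: a non-negativity/feasibility argument for the weak inequality, and Assumption \ref{ass_LP} (uniqueness of $\bm{x}^*$) for the strict inequality.

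For $\mathrm{OPT}_i$, first I would observe that the feasible region of \eqref{opt_i} is obtained from that of \eqref{primalLP} by appending $x_i=0$, so $\mathrm{OPT}_i\le \mathrm{OPT}_{\mathrm{LP}}$ in every case. If $i\in\mathcal{I}'$ then $x_i^*=0$, so $\bm{x}^*$ is itself feasible for \eqref{opt_i}, giving $\mathrm{OPT}_i\ge \mathrm{OPT}_{\mathrm{LP}}$ and hence equality. Conversely, if $i\in\mathcal{I}^*$ then $x_i^*>0$; any feasible $\bm{x}$ for \eqref{opt_i} has $x_i=0\neq x_i^*$, so by the uniqueness part of Assumption \ref{ass_LP} it cannot be optimal for \eqref{primalLP}, forcing $\bm{\mu}^\top\bm{x}<\mathrm{OPT}_{\mathrm{LP}}$ and therefore $\mathrm{OPT}_i<\mathrm{OPT}_{\mathrm{LP}}$.

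For $\mathrm{OPT}_j$, I would work from the primal representation \eqref{p-opt_j}, which rewrites the objective as $\bm{\mu}^\top\bm{x}-(B-\bm{C}_{j,\cdot}^\top\bm{x})$ over the same feasible set as \eqref{primalLP}. For any feasible $\bm{x}$ the slack $B-\bm{C}_{j,\cdot}^\top\bm{x}\ge 0$ and $\bm{\mu}^\top\bm{x}\le\mathrm{OPT}_{\mathrm{LP}}$, yielding $\mathrm{OPT}_j\le\mathrm{OPT}_{\mathrm{LP}}$ unconditionally. If $j\in\mathcal{J}^*$, then at $\bm{x}^*$ the slack vanishes and the objective equals $\mathrm{OPT}_{\mathrm{LP}}$, so equality holds. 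If $j\in\mathcal{J}'$, attaining $\mathrm{OPT}_{\mathrm{LP}}$ in \eqref{p-opt_j} would require some feasible $\hat{\bm{x}}$ with simultaneously $\bm{\mu}^\top\hat{\bm{x}}=\mathrm{OPT}_{\mathrm{LP}}$ and $B-\bm{C}_{j,\cdot}^\top\hat{\bm{x}}=0$; uniqueness forces $\hat{\bm{x}}=\bm{x}^*$, contradicting $B-\bm{C}_{j,\cdot}^\top\bm{x}^*>0$.

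The only genuine subtlety, and what I expect to be the main obstacle, is invoking Assumption \ref{ass_LP} cleanly in the strict-inequality directions; in particular, for the $\mathrm{OPT}_j$ argument one must recognize that the penalized objective can only match $\mathrm{OPT}_{\mathrm{LP}}$ if \emph{both} terms are individually at their extreme values, and then use uniqueness of $\bm{x}^*$ (rather than, say, some dual argument) to rule that out. The non-degeneracy half of Assumption \ref{ass_LP} is not actually needed for this proposition; uniqueness alone suffices, which is worth pointing out when writing up the argument.
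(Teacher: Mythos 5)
Your proof is correct and takes essentially the same route as the paper's: the weak inequalities follow from feasibility (resp.\ nonnegativity of the slack penalty), and the strict inequalities for $i\in\mathcal{I}^*$ and $j\in\mathcal{J}'$ follow from the uniqueness of $\bm{x}^*$, which is exactly how the paper argues the $\mathrm{OPT}_i$ case (by contradiction) before deferring the $\mathrm{OPT}_j$ case to ``the same analysis'' via \eqref{p-opt_j}. Your observation that only uniqueness, and not non-degeneracy, is needed here is also accurate.
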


In this way, the definition of $\text{OPT}_i$ distinguishes optimal arms $\mathcal{I}^*$ from sub-optimal arms $\mathcal{I}'$, while the definition of $\text{OPT}_j$ distinguishes the binding constraints $\mathcal{J}^*$ from non-binding constraints $\mathcal{J}'$. The importance of such a distinguishment arises from the upper bound in Proposition \ref{prop_upper_bound}: on one hand, we should avoid playing sub-optimal arms, and on the other hand, we should exhaust the binding resources. A second motivation for defining both OPT$_i$ and OPT$_j$ can be seen after we present our algorithm. Furthermore, we remark that a measurement of the sub-optimality of the arms has to be defined through the lens of LP due to the combinatorial nature of the problem. The effect of the $i$-th arm's removal on the objective value can only be gauged by solving an alternative LP of OPT$_i.$ Similarly, a measurement of the bindingness of the constraints should also take into account the combinatorial relation between constraints. 

Next, define
$$\delta \coloneqq \frac{1}{T} \left(\text{OPT}_{\text{LP}} - \max\left\{\max_{i \in \mathcal{I}^*} \text{OPT}_i, \max_{j\in \mathcal{J}'} \text{OPT}_j\right\}\right).$$
where the factor $\frac{1}{T}$ is to normalize the optimality gap by the number of time periods. Under Assumption \ref{assume_b}, all the objective values in above should scale linearly with $T.$  

To summarize, $\delta$ characterizes the hardness of distinguishing optimal arms from non-optimal arms (and binding constraints from non-binding constraints). It can be viewed as a generalization of the sub-optimality measure $\delta_{\text{MAB}} = \min_{i\neq i^*} \mu_{i^*} - \mu_i$ for the MAB problem \citep{lai1985asymptotically}. $\delta_{\text{MAB}}$ characterizes the hardness of an MAB problem instance, i.e., the hardness of distinguishing the optimal arm from sub-optimal arms. In the context of BwK, $\delta$ is a more comprehensive characterization in that it takes into account both the arms and the constraints. Imaginably, it will be critical in both algorithm design and analysis for the BwK problem.

\subsection{Key Parameters of the LPs}
\label{param_def}

Now, we define two LP-related quantities that will appear in our regret bound:

\begin{itemize}
    \item Linear Dependency between Arms: Define $\sigma$ be the minimum singular value of the matrix $\bm{C}_{\mathcal{I}^*, \mathcal{J}^*}.$ Specifically, 
    $$\sigma \coloneqq \sigma_{\min} \left(\bm{C}_{\mathcal{J}^*, \mathcal{I}^*}\right) = \sigma_{\min} \left((c_{ji})_{j\in\mathcal{J}^*, i\in\mathcal{I}^*}\right).$$
    In this light, $\sigma$ represents the linear dependency between optimal arms across the binding constraints. For a smaller value of $\sigma$, the optimal arms are more linearly dependent, and then it will be harder to identify the optimal numbers of plays. Under Assumption \ref{ass_LP}, the uniqueness of optimal solution implies $\sigma>0.$
    \item Threshold on the optimal solution:
    \begin{align*}
    \chi &\coloneqq \frac{1}{T}\cdot \min \{x_i^*\neq 0,i\in[m]\}  
    \end{align*}
    If the total resource $\bm{B}=T\cdot \bm{b},$ both the optimal solution $(x_1^*,...,x_m^*)$ should scale linearly with $T$. The factor $\frac{1}{T}$ normalizes the optimal solution into a probability vector. $\chi$ denotes the smallest non-zero entry for the optimal solution. Intuitively, a small value of $\chi$ implies that the optimal proportion of playing an arm $i\in\mathcal{I}^*$ is small and thus it is more prone to ``overplay'' the arm. 
\end{itemize}

\textbf{Remarks.} By the definition, it seems that the above parameters $\delta$ and $\chi$ both involve a factor of $T$. But if we replace $\bm{B}$ (the right-hand-side of LP) with $\bm{b}$ from Assumption \ref{assume_b}, then the factor $T$ disappears, and the parameters $\chi$ and $\delta$ are essentially dependent on $\bm{\mu}$, $\bm{C}$, and $\bm{b}$ which are inherent to the problem instance but bear no dependency on the horizon $T$. In other words, Assumption \ref{assume_b} frees the dependency on $T$ by introducing the quantity $b$. Practically, the assumption states the resource capacity should be sufficiently large and it is natural in many application contexts (for example, the small bids assumption in AdWords problem \cite{mehta2005adwords}). Theoretically, in two previous works \cite{flajolet2015logarithmic, sankararaman2020advances}, either a factor of $1/T$ appears in the parameter definition \cite{sankararaman2020advances} or the assumption is explicitly imposed \cite{flajolet2015logarithmic}. Such an assumption might be inevitable for a logarithmic regret to be derived. 

\subsection{LCB and UCB}

Throughout this paper, we denote the reward and resource consumption of the $s$-th play of the $i$-th arm as 
$r_{i,s}$ and $\bm{C}_{i,s}=(C_{1i,s},...,C_{di,s})^\top$
respectively, for $i \in[m]$ and $s\in[T].$ Let $n_{i}(t)$ be the number of times the $i$-th arm is played in the first $t$ time periods. Accordingly, we denote the estimators at time $t$ for the $i$-th arm  as 
$$\hat{\mu}_{i}(t) \coloneqq \frac{1}{n_{i}(t)}\sum_{s=1}^{n_{i}(t)} r_{i,s},$$
$$\hat{C}_{ji}(t) \coloneqq \frac{1}{n_{i}(t)} \sum_{s=1}^{n_{i}(t)} C_{ji,s}$$
for $i\in[m]$ and $j\in[d].$ In a similar manner, we define the estimator for the $i$-th arm's resource consumption vector as $\hat{\bm{C}}_{i}(t) \coloneqq (\hat{C}_{1i}(t),...,\hat{C}_{di}(t))^\top$, and the resource consumption matrix as $\hat{\bm{C}}(t) \coloneqq (\hat{\bm{C}}_{1}(t),...,\hat{\bm{C}}_{m}(t))$. Specifically, without changing the nature of the analysis, we ignore the case that when $n_{i}(t)=0$. Then, we define the lower confidence bound (LCB) and upper confidence bound (UCB) for the parameters as 
\begin{align*}
    \mu_i^L(t)& \coloneqq  proj_{[0,1]}\left(\hat{\mu}_{i}(t)-\sqrt{\frac{2\log T}{n_i(t)}}\right) \\
    \mu_i^U(t)&\coloneqq  proj_{[0,1]}\left(\hat{\mu}_{i}(t)+\sqrt{\frac{2\log T}{n_i(t)}}\right)\\
    C_{ji}^L(t)& \coloneqq proj_{[0,1]}\left(\hat{C}_{ji}(t)-\sqrt{\frac{2\log T}{n_i(t)}}\right) \\
    C_{ji}^U(t)& \coloneqq proj_{[0,1]}\left(\hat{C}_{ji}(t)+\sqrt{\frac{2\log T}{n_i(t)}}\right)\\
\end{align*}
where $proj_{[0,1]}(\cdot)$ projects a real number to interval $[0,1].$ The following lemma is standard in bandits literature and it characterizes the relation between the true values and the LCB/UCB estimators. It states that all the true values will fall into the intervals defined by the corresponding estimators with high probability.

\begin{lemma}[Concentration]
\label{paramEst}
The following event holds with probability no less than $1-\frac{4md}{T^2}$,
\begin{align*}
\mu_i&\in\left(\mu_i^L(t), \mu_i^U(t)\right),\\
c_{ji}&\in\left(C_{ji}^L(t), C_{ji}^U(t)\right), 
\end{align*}
for all $i\in[m],j\in[d],t\in T$.
\end{lemma}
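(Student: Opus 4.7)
The plan is to derive this via a standard Hoeffding plus union-bound argument, the main subtlety being that the sample count $n_i(t)$ is itself a random variable depending on the algorithm's past decisions, so we cannot apply Hoeffding's inequality directly with $n = n_i(t)$.

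First, I would fix an arm $i \in [m]$ and a deterministic sample size $n \in \{1,\ldots,T\}$. Because the rewards $r_{i,1},\ldots,r_{i,n}$ are i.i.d.\ with values in $[0,1]$ and mean $\mu_i$, Hoeffding's inequality gives
\begin{align*}
\prob\!\left(\left|\tfrac{1}{n}\textstyle\sum_{s=1}^n r_{i,s} - \mu_i\right| \ge \sqrt{\tfrac{2\log T}{n}}\right) \le 2\exp(-4\log T) = \tfrac{2}{T^4}.
\end{align*}
The same bound applies to the empirical consumption $\tfrac{1}{n}\sum_{s=1}^n C_{ji,s}$ for each $j \in [d]$, since $C_{ji,s} \in [0,1]$ i.i.d.\ with mean $c_{ji}$.

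Next, I would handle the randomness of $n_i(t)$ by a union bound over its finitely many possible values. Since $n_i(t) \in \{1,\ldots,T\}$ (the case $n_i(t)=0$ is excluded by the paragraph preceding the lemma), the event that $\mu_i \notin (\mu_i^L(t),\mu_i^U(t))$ for some $t \in [T]$ is contained in the union, over $n \in \{1,\ldots,T\}$, of the bad event at sample size $n$. This gives a failure probability of at most $\tfrac{2T}{T^4} = \tfrac{2}{T^3}$ for the reward of a fixed arm $i$, and likewise $\tfrac{2}{T^3}$ for each fixed $(i,j)$ pair in the consumption. A final union bound over $i\in[m]$ and $(i,j)\in[m]\times[d]$ yields a total failure probability of
\begin{align*}
\tfrac{2m}{T^3} + \tfrac{2md}{T^3} = \tfrac{2m(1+d)}{T^3} \le \tfrac{4md}{T^3} \le \tfrac{4md}{T^2},
\end{align*}
which is exactly the bound claimed. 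Finally, since $\mu_i, c_{ji} \in [0,1]$ by assumption, applying the projection $\mathrm{proj}_{[0,1]}$ to the Hoeffding intervals can only tighten the bounds without excluding the true parameters, so the containment relations $\mu_i \in (\mu_i^L(t), \mu_i^U(t))$ and $c_{ji} \in (C_{ji}^L(t), C_{ji}^U(t))$ survive.

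The only conceptual point requiring care is that $n_i(t)$ is measurable with respect to the algorithm's filtration and thus not independent of the samples, so one cannot plug $n = n_i(t)$ directly into Hoeffding. The union-bound-over-all-possible-values-of-$n$ step resolves this at the cost of a factor of $T$, which is absorbed by the $T^{-4}$ Hoeffding tail. There are no other real obstacles; this is the textbook concentration lemma for UCB-style analyses.
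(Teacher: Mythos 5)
Your proof is correct and follows essentially the same route as the paper's: Hoeffding's inequality at a fixed deterministic sample size, a union bound to decouple the data-dependent count $n_i(t)$, a final union bound over arms and constraints, and the observation that projection onto $[0,1]$ only tightens the intervals. The only cosmetic difference is that you union directly over the $T$ possible values of $n$ (giving $2/T^3$ per arm), whereas the paper unions over all $(t,k)$ pairs (giving $2/T^2$ per arm); both land at the claimed $1-\frac{4md}{T^2}$.
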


With the UCB/LCB estimators for the parameters, we can construct UCB/LCB estimators for the objective of the primal LP. Specifically, 
\begin{align}
\label{UCB_LP}
 \text{OPT}_{\text{LP}}^U \coloneqq  \max_{\bm{x}} \ \ & \left(\bm{\mu}^U\right)^\top \bm{x}, \\
    \text{s.t.}\ \ &  \bm{C}^L \bm{x} \le \bm{B},  \nonumber  \\
    & \bm{x}\ge \bm{0}. \nonumber\\
\label{LCB_LP} \text{OPT}_{\text{LP}}^L \coloneqq  \max_{\bm{x}} \ \ & \left(\bm{\mu}^L\right)^\top \bm{x}, \\
    \text{s.t.}\ \ &  \bm{C}^U \bm{x} \le \bm{B},  \nonumber  \\
    & \bm{x}\ge \bm{0}. \nonumber 
\end{align}
The following lemma states the relation between $\text{OPT}_{\text{LP}}^U$, $\text{OPT}_{\text{LP}}^L$, and $\text{OPT}_{\text{LP}}.$ Intuitively, if we substitute the original constraint matrix $\bm{C}$ with its LCB (or UCB) and the objective coefficient $\bm{\mu}$ with its UCB (or LCB), the resultant optimal objective value will be a UCB (or LCB) for $\text{OPT}_{\text{LP}}$. 

\begin{lemma}
\label{optl_u}
The following inequality holds with probability no less $1-\frac{4md}{T^2},$
$$\text{OPT}_{\text{LP}}^L \le \text{OPT}_{\text{LP}} \le \text{OPT}_{\text{LP}}^U.$$
\end{lemma}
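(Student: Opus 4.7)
The plan is to condition on the high-probability concentration event from Lemma \ref{paramEst} and then argue the two inequalities separately by exhibiting, in each case, a feasible solution whose objective value sandwiches the middle quantity. Since the bound in Lemma \ref{paramEst} already holds with probability at least $1 - 4md/T^2$, no further probabilistic bookkeeping is needed; the remainder of the argument is deterministic on this event.

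On the good event we have $\mu_i^L(t) \le \mu_i \le \mu_i^U(t)$ and $C_{ji}^L(t) \le c_{ji} \le C_{ji}^U(t)$ for all $i,j$. For the upper bound $\text{OPT}_{\text{LP}} \le \text{OPT}_{\text{LP}}^U$, I would take the primal optimum $\bm{x}^*$ of \eqref{primalLP} and check it is feasible for \eqref{UCB_LP}: since $\bm{C}^L \le \bm{C}$ entrywise and $\bm{x}^* \ge \bm{0}$, we get $\bm{C}^L \bm{x}^* \le \bm{C}\bm{x}^* \le \bm{B}$. Then the objective of \eqref{UCB_LP} evaluated at $\bm{x}^*$ satisfies $(\bm{\mu}^U)^\top \bm{x}^* \ge \bm{\mu}^\top \bm{x}^* = \text{OPT}_{\text{LP}}$, so $\text{OPT}_{\text{LP}}^U \ge \text{OPT}_{\text{LP}}$ by optimality.

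Symmetrically, for $\text{OPT}_{\text{LP}}^L \le \text{OPT}_{\text{LP}}$, let $\bm{x}^L$ denote an optimal solution to \eqref{LCB_LP}. Then $\bm{C}\bm{x}^L \le \bm{C}^U \bm{x}^L \le \bm{B}$, so $\bm{x}^L$ is feasible for \eqref{primalLP}, and its objective value there is $\bm{\mu}^\top \bm{x}^L \ge (\bm{\mu}^L)^\top \bm{x}^L = \text{OPT}_{\text{LP}}^L$. Combining yields the claim on the good event, hence with probability at least $1 - 4md/T^2$.

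There is no genuine obstacle in this lemma; it is a straightforward LP monotonicity argument (shrinking the constraint matrix enlarges the feasible region, and enlarging the cost vector enlarges the optimum). The only mild point to be careful about is that the inequalities between $\bm{C}^L, \bm{C}, \bm{C}^U$ are componentwise, which together with $\bm{x} \ge \bm{0}$ is exactly what makes feasibility transfer in the right direction; the projection onto $[0,1]$ inside the definitions of the LCB/UCB estimators does not disturb these componentwise inequalities, since the true parameters already lie in $[0,1]$.
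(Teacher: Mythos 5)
Your proof is correct and follows essentially the same route as the paper: condition on the concentration event of Lemma \ref{paramEst}, transfer feasibility of an optimal solution between the perturbed and true LPs using the componentwise inequalities $\bm{C}^L \le \bm{C} \le \bm{C}^U$ together with $\bm{x}\ge\bm{0}$, and compare objectives via $\bm{\mu}^L \le \bm{\mu} \le \bm{\mu}^U$. The paper writes out only the $\text{OPT}_{\text{LP}}^L$ direction and declares the other symmetric, whereas you spell out both; this is an immaterial difference.
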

A similar approach is used in \citep{agrawal2014bandits} to develop an UCB-based algorithm for the BwK problem. For our algorithm presented in the following section, we will construct estimates not only for the primal LP \eqref{primalLP}, but also for the LPs \eqref{opt_i} and \eqref{opt_j}. By comparing the estimates of OPT$_{\text{LP}}$, OPT$_{i}$, and OPT$_{j}$, we will be able to identify the optimal arms $\mathcal{I}^*$ and the non-binding constraints $\mathcal{J}'.$

\section{Two-Phase Algorithm}

In this section, we describe our two-phase algorithm for the BwK problem. The main theme is to use the underlying LP's solution to guide the plays of the arms. The two phases in the algorithm correspond to the two parts of the regret upper bound in Proposition \ref{prop_upper_bound}. In the following, we describe the two phases of the algorithm and their intuitions respectively.

Phase I of Algorithm \ref{alg_BwK} is an elimination algorithm and it aims to identify the optimal arms $\mathcal{I}^*$ and the non-binding constraints $\mathcal{J}'.$ In each round of the while loop in Phase I, all the arms are played once to improve the estimators for $\mu$ and $\bm{C}.$ After each round of plays, an identification procedure is conducted by comparing the LCB of the original optimal value ($\text{OPT}^L_{\text{LP}}$) against the UCBs ($\text{OPT}^U_{i}$ and $\text{OPT}^U_{j}$). Recall that Proposition \ref{OPTij}, there will be a non-zero sub-optimality gap if $i\in\mathcal{I}^*$ or $j\in\mathcal{J}'.$ So, if the algorithm observes a gap between the corresponding LCBs/UCBs, it will assert $i\in\mathcal{I}^*$ or $j\in\mathcal{J}'$, and the assertion will be true with high probability. 

The stopping rule in Phase I originates from the complementary condition in Lemma \ref{I_J_set}, i.e., $|\hat{\mathcal{I}}^*|+|\hat{\mathcal{J}}'|<d$. This is a key in the primal-dual design of the algorithm and it further justifies the consideration of the dual problem. Without maintaining the set $\hat{\mathcal{J}}',$ we cannot decide whether we have obtain the true primal set, i.e., $\hat{\mathcal{I}}^*=\mathcal{I}^*$. Specifically, since there is no precise knowledge of the number of optimal arms $|\mathcal{I}^*|,$ while we keep adding arms into $\hat{\mathcal{I}}^*$, we do not know when to stop. The complementarity in Lemma  \ref{I_J_set} provides a condition on the number of arms in $\mathcal{I}^*$ and the number of constraints in $\mathcal{J}'$. Accordingly, Phase I is terminated when this condition is met. Moreover, we emphasize that a by-product of the Phase I is a \textit{best arm identification} procedure. To the best of our knowledge, this is the first result on identifying optimal arms for the BwK problem.

\begin{algorithm}[ht!]
\caption{Primal-dual Adaptive Algorithm for BwK}
\label{alg_BwK}
\begin{algorithmic}[1] 
\State Input: Resource capacity $\bm{B}$, $T$
\State \textcolor{blue}{\%\% Phase I: Identification of $\mathcal{I}^*$ and $\mathcal{J}'$}
\State Initialize $\hat{\mathcal{I}}^*=\hat{\mathcal{J}}'=\emptyset$, $t=0$
\State Initialize the knapsack process $\bm{B}^{(0)} = \bm{B}$
\While{$|\hat{\mathcal{I}}^*|+|\hat{\mathcal{J}}'|<d$}
\State Play each arm $i\in[m]$ once 
\State Update $t=t+m$ and the knapsack process $\bm{B}^{(t)}$
\State Update the estimates $\hat{\bm{\mu}}(t)$ and $\hat{\bm{C}}(t)$
\State Solve the LCB problem \eqref{LCB_LP} and obtain OPT$_{\text{LP}}^L(t)$
\For{$i\notin \hat{\mathcal{I}}^*$}
\State Solve the following UCB problem for OPT$_{i}$
\begin{align*}
 \text{OPT}_{i}^U(t) \coloneqq  \max_{\bm{x}} \ \ & \left(\bm{\mu}^U(t)\right)^\top \bm{x}, \\
    \text{s.t.}\ \ &  \bm{C}^L(t) \bm{x} \le \bm{B},  \nonumber  \\
    & x_i=0, \bm{x}\ge \bm{0}. \nonumber
    \end{align*}   
\If{ $\text{OPT}_{\text{LP}}^L(t) > \text{OPT}_i^{\text{U}}(t)$ }
\State Update $\hat{\mathcal{I}}^* = \hat{\mathcal{I}}^* \cup \{i\}$
\EndIf
\EndFor
\For{$j \notin \hat{\mathcal{J}}'$}
\State Solve the following UCB problem for OPT$_{j}$
    \begin{align*}
        \text{OPT}^{U}_j(t) :=\min_{\bm{y}} \ \ & \bm{B}^{\top}\bm{y} -B\nonumber,\\
        \text{s.t.}\ \ & ({{\bm{C}}}^{L}(t))^{\top}\bm{y} \geq {\bm{\mu}}^{U}(t) +\bm{C}_{j,\cdot}^{U}(t),\\
        &\bm{y}\ge 0. \nonumber
    \end{align*} 
\If{ $\text{OPT}_{\text{LP}}^L(t) > \text{OPT}_j^{\text{U}}(t)$ }
\State Update $\hat{\mathcal{J}}' = \hat{\mathcal{J}}' \cup \{j\}$
\EndIf
\EndFor
\EndWhile
\State Update $t=t+1$
\State \textcolor{blue}{\%\% Phase II: Exhausting the binding resources}
\While{$t\le \tau$}
\State Solve the following LP
\begin{align}
    \max_{\bm{x}} \ \ & \left(\bm{\mu}^U(t-1) \right)^\top \bm{x}, \label{adaptLP} \\
    \text{s.t.}\ \ &  \bm{C}^L(t-1) \bm{x} \le \bm{B}^{(t-1)},  \nonumber  \\
    & x_i=0, \ i\notin \mathcal{I}^*, \nonumber \\
    & \bm{x}\ge \bm{0}. \nonumber
\end{align}
\State Denote its optimal solution as $\tilde{\bm{x}}$
\State Normalize $\tilde{\bm{x}}$ into a probability and randomly play an arm according to the probability
\State Update estimates $\hat{\bm{\mu}}(t)$, $\hat{\bm{C}}(t)$, and $\bm{B}^{(t)}$
\State Update $t=t+1$
\EndWhile
\end{algorithmic}
\end{algorithm}

Phase II of Algorithm \ref{alg_BwK} is built upon the output of Phase I. At each time $t$, the algorithm solves an adaptive version LP \eqref{adaptLP} and normalizes its optimal solution into a sampling scheme on arms. Also, in Phase II, the algorithm will only play arms $i\in\hat{\mathcal{I}}^*$; this is achieved by enforcing $x_i=0$ for $i\in\hat{\mathcal{I}}^*$ in \eqref{adaptLP}. The adaptive design is exemplified on the right-hand-side of the LP \eqref{adaptLP}, where instead of the static resource capacity $\bm{B}$, it uses the remaining resource at the end of last time period $\bm{B}^{(t-1)}$. To see its intuition, consider if a binding resource $j\in \mathcal{J}^*$ is over-used in the first $t$ time periods, then it will result in a smaller value of $B^{(t-1)}_j$, and then the adaptive mechanism will tend to be more reluctant to consume the $j$-th resource in the future, and vice versa for the case of under-use.

We emphasize that the adaptive design is not only intuitive but also necessary to achieve a regret that is logarithmic in $T$. If we adopt a static right-hand-side $\bm{B}$ in \eqref{adaptLP} (as in \citep{badanidiyuru2013bandits, agrawal2014bandits}), then the fluctuation of the process $\bm{B}_t$ will be on the order of $\Omega(\sqrt{t})$. Consequently, when it approaches to the end of the horizon, certain type of binding resource may be exhausted while other binding resources still have $\Omega(\sqrt{T})$ left-over, and this may result in an $\Omega(\sqrt{T})$ upper bound in Proposition \ref{prop_upper_bound}. The intuition is made rigorous by \citep{arlotto2019uniformly}; the paper establishes that without an adaptive design, the regret is at least $\Omega(\sqrt{T})$ for the multi-secretary problem (can be viewed as a one-constraint BwK problem) even if the underlying distribution is known.

\section{Regret Analysis}

In this section, we derive an regret upper bound for Algorithm \ref{alg_BwK} by analyzing the two phases separately. 

\subsection{Analysis of Phase I}

Proposition \ref{PhaseI_UB} provides an upper bound on the number of time periods within which Phase I will terminate. It also states that the identification of $\mathcal{I}^*$ and $\mathcal{J}'$ will be precise conditional on the high probability event in Lemma \ref{paramEst}.

\begin{proposition}
\label{PhaseI_UB}
In Phase I of Algorithm \ref{alg_BwK}, each arm $i\in[m]$ will be played for no more than $\left(2+\frac{1}{b}\right)^2\cdot\frac{72\log T}{\delta^2}$ times where $b$ is defined in Assumption \ref{assume_b}. If the resources are not exhausted in Phase I, then its output satisfies
$$\prob\left(\hat{\mathcal{I}}^*=\mathcal{I}^*, \hat{\mathcal{J}}'=\mathcal{J}'\right)\ge 1-\frac{4md}{T^2}.$$
\end{proposition}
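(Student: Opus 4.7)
The plan is to condition on the concentration event $\mathcal{E}$ from Lemma~\ref{paramEst}, which holds with probability at least $1-4md/T^2$, and to prove both halves of the statement under $\mathcal{E}$. Correctness is the easier half. By exactly the same sandwich argument that proves Lemma~\ref{optl_u}, applied to the LPs defining $\text{OPT}_i$ and $\text{OPT}_j$ with the primal-side $\bm{C}$ replaced by $\bm{C}^L$ and $\bm{\mu}$ by $\bm{\mu}^U$, we have $\text{OPT}_i^U(t)\ge \text{OPT}_i$ and $\text{OPT}_j^U(t)\ge \text{OPT}_j$ on $\mathcal{E}$. Combined with $\text{OPT}_{\text{LP}}^L(t)\le \text{OPT}_{\text{LP}}$, every addition to $\hat{\mathcal{I}}^*$ forces $\text{OPT}_i<\text{OPT}_{\text{LP}}$ and hence $i\in\mathcal{I}^*$ via Proposition~\ref{OPTij}; similarly every addition to $\hat{\mathcal{J}}'$ forces $j\in\mathcal{J}'$. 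Thus $\hat{\mathcal{I}}^*\subseteq\mathcal{I}^*$ and $\hat{\mathcal{J}}'\subseteq\mathcal{J}'$ at all times, so once the loop exits we have $|\hat{\mathcal{I}}^*|+|\hat{\mathcal{J}}'|\ge d=|\mathcal{I}^*|+|\mathcal{J}'|$ by Lemma~\ref{I_J_set}, which forces equality of both pairs of sets and proves the identification claim.

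For the iteration bound, set $N\coloneqq \lceil (2+1/b)^2\cdot 72\log T/\delta^2\rceil$ and let $t$ be the first round at which every arm has been played $N$ times; the goal is to show that on $\mathcal{E}$ the loop has already exited at or before $t$. For $i\in\mathcal{I}^*$ and $j\in\mathcal{J}'$, the definition of $\delta$ gives $\text{OPT}_{\text{LP}}-\text{OPT}_i\ge \delta T$ and $\text{OPT}_{\text{LP}}-\text{OPT}_j\ge \delta T$, so it suffices to control the three deviations $|\text{OPT}_{\text{LP}}^L(t)-\text{OPT}_{\text{LP}}|$, $|\text{OPT}_i^U(t)-\text{OPT}_i|$, and $|\text{OPT}_j^U(t)-\text{OPT}_j|$ each by at most $(1+1/b)\varepsilon T$ up to absolute constants, where $\varepsilon\coloneqq \sqrt{2\log T/N}$. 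With these bounds, the choice of $N$ makes the resulting sum strictly smaller than $\delta T$, so $\text{OPT}_{\text{LP}}^L(t)>\text{OPT}_i^U(t)$ for every $i\in\mathcal{I}^*$ and $\text{OPT}_{\text{LP}}^L(t)>\text{OPT}_j^U(t)$ for every $j\in\mathcal{J}'$; together with the correctness step this forces $\hat{\mathcal{I}}^*=\mathcal{I}^*$ and $\hat{\mathcal{J}}'=\mathcal{J}'$ and triggers the stopping criterion.

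The main obstacle is the LP perturbation estimate. For $\text{OPT}_{\text{LP}}^L$ and $\text{OPT}_i^U$ the approach is to work on the primal: take the relevant optimal solution $\bm{x}^*$, which satisfies $\|\bm{x}^*\|_1\le T$ because the null-arm row makes the first constraint read $b\sum_i x_i\le Tb$, then shrink it by $\lambda=b/(b+\varepsilon)$ so that it remains feasible against the inflated constraint $\bm{C}^U\bm{x}\le\bm{B}$. The loss decomposes into $(1-\lambda)\text{OPT}_{\text{LP}}\le \varepsilon T/b$ from the shrinkage and $\varepsilon\|\bm{x}^*\|_1\le \varepsilon T$ from the $\bm{\mu}$ perturbation, yielding the desired $O((1+1/b)\varepsilon T)$ bound; the reverse inequality is obtained by applying the same idea to the optimal solution of the perturbed LP. For $\text{OPT}_j^U$ I would run the same argument on its primal form~\eqref{p-opt_j}, where the extra $\bm{C}_{j,\cdot}^\top\bm{x}$ term contributes only an additional $\varepsilon T$. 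Substituting $\varepsilon=\sqrt{2\log T/N}$ back into the resulting inequality yields the claimed iteration count.
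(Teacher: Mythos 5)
Your proposal is correct and follows essentially the same route as the paper's proof: the same split into a correctness half (one-sided bounds $\text{OPT}_i^U(t)\ge\text{OPT}_i$, $\text{OPT}_j^U(t)\ge\text{OPT}_j$, $\text{OPT}_{\text{LP}}^L(t)\le\text{OPT}_{\text{LP}}$ plus the counting identity $|\mathcal{I}^*|+|\mathcal{J}'|=d$ from Lemma~\ref{I_J_set}) and an iteration-bound half driven by LP perturbation estimates, which is exactly the content of the paper's Lemma~\ref{lem:conv}. The only deviations are cosmetic: you bound the $\text{OPT}_j$ deviation by perturbing the primal form \eqref{p-opt_j} where the paper shifts the dual solution, and your multiplicative shrinkage $\lambda=b/(b+\varepsilon)$ sidesteps the paper's side condition \eqref{opt_conv_cond}, at the cost of leaving the absolute constants (hence the exact factor $72$) implicit.
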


The surprising point of Proposition \ref{PhaseI_UB} lies in that there are $O(2^{m+d})$ possible configurations of $(\mathcal{I}^*, \mathcal{J}')$ and, without any prior knowledge, the true configuration can be identified within $O(\log T)$ number of plays for each arm. In contrast, \citep{flajolet2015logarithmic} does not utilize the primal-dual structure of the problem and conducts a brute-force search in all possible configurations which results in an $O(2^{m+d} \log T)$ regret. In addition, the search therein requires the knowledge of a non-degeneracy parameter a priori. The result also explains why \citep{sankararaman2020advances} imposes a single-best-arm condition for the BwK problem, which assumes $\mathcal{I}^*=1$. This additional greatly simplifies the combinatorial structure and reduces the BwK problem more closely to the standard MAB problem. In this light, Proposition \ref{PhaseI_UB} can be viewed as a best-arm-identification result \citep{audibert2010best} for the BwK problem in full generality and without any prior knowledge.

\subsection{Analysis of Phase II}

Proposition \ref{prop_B_tau} provides an upper bound on the remaining resource for binding constraints when the procedure terminate, which corresponds to the second part of Proposition \ref{prop_upper_bound}. Notably, the upper bound has no dependency on $T$. In a comparison with the $\Omega(\sqrt{T})$ fluctuation under the static design, it demonstrates the effectiveness of our adaptive design.

\begin{proposition}
\label{prop_B_tau}
For each binding constraint $j\in \mathcal{J}^*,$ we have
$$\E\left[B_{j}^{(\tau)}\right] = O\left(\frac{d^3}{b\min\{\chi^2, \delta^2\}\min\{1,\sigma^2\}}\right)$$
where $\chi$ and $\sigma$ are defined in Section \ref{param_def}, and $b$ is defined in Assumption \ref{assume_b}.
\end{proposition}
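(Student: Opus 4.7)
The plan is to condition on the clean event of Proposition~\ref{PhaseI_UB} so that $\hat{\mathcal{I}}^* = \mathcal{I}^*$ and $\hat{\mathcal{J}}' = \mathcal{J}'$, which fixes Phase~II to play only arms in $\mathcal{I}^*$. By Lemma~\ref{I_J_set} and Assumption~\ref{ass_LP} the submatrix $\bm{C}_{\mathcal{J}^*,\mathcal{I}^*}$ is square and invertible with smallest singular value $\sigma$; once the confidence widths from Lemma~\ref{paramEst} drop below $\min\{\chi,\delta,\sigma\}$, which is already guaranteed at the end of Phase~I by Proposition~\ref{PhaseI_UB}, the adaptive LP~\eqref{adaptLP} is necessarily solved at the vertex where precisely the rows $\mathcal{J}^*$ are tight, giving the closed form
\begin{align*}
\tilde{\bm{x}}_{\mathcal{I}^*}(t) = \bigl(\bm{C}^L(t-1)_{\mathcal{J}^*,\mathcal{I}^*}\bigr)^{-1}\bm{B}^{(t-1)}_{\mathcal{J}^*}.
\end{align*}
The failure probabilities of these two steps together contribute only $O(md/T^2)\cdot O(Tb)$ to $\E[B_j^{(\tau)}]$ and are absorbed into the stated bound.

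On this good event I would extract an approximate martingale. Since the tight-row identity holds exactly and the arm at step $t$ is drawn with probability $\tilde{x}_i(t)/\|\tilde{\bm{x}}(t)\|_1$, the conditional expected consumption of any binding resource $j \in \mathcal{J}^*$ is
\begin{align*}
\E[C_{j,t}\mid \mathcal{F}_{t-1}] = \frac{B_j^{(t-1)} + \varepsilon_j(t)}{\|\tilde{\bm{x}}(t)\|_1},
\end{align*}
where $\varepsilon_j(t)$ arises from replacing $\bm{C}^L$ by $\bm{C}$ and is bounded via Lemma~\ref{paramEst} and the closed form by $O\bigl(\sigma^{-1}\sqrt{\log T/n_{\min}(t)}\bigr)\cdot\|\bm{B}^{(t-1)}\|_\infty$. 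The time constraint (row~$1$) is always binding in Phase~II, forcing $b\|\tilde{\bm{x}}(t)\|_1 = B_1^{(t-1)}$, so substituting makes $M_j(t) := B_j^{(t)}/B_1^{(t)}$ an approximate martingale with bounded increments $O(1/(T-t+1))$ and per-step drift of order $\sigma^{-1}(T-t+1)^{-1}\sqrt{\log T/n_{\min}(t)}$. Applying Azuma's inequality to $M_j$, using the fact that each arm $i\in\mathcal{I}^*$ is played with probability at least $\chi/2$ per step so that $n_{\min}(t) = \Omega(\chi(t-t_0))$, and summing the drift yields $|M_j(\tau)-M_j(t_0)| = O\bigl(\mathrm{poly}(d,\log T)/(\sigma\chi\delta)\bigr)$ with high probability. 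At termination, either $\tau=T$ with $B_1^{(\tau)}=O(b)$, or some $j^\dagger\in\mathcal{J}^*$ is exhausted in which case $M_{j^\dagger}(\tau)=0$ pins every other $M_j(\tau)$ to be equally small through the shared concentration bound. Multiplying back by $B_1^{(\tau)}=O(b)$ and accumulating the explicit factors of $d$ coming from the matrix inverse and from $|\mathcal{J}^*|,|\mathcal{I}^*|\leq d$ delivers the advertised rate $O\bigl(d^3/(b\min\{\chi^2,\delta^2\}\min\{1,\sigma^2\})\bigr)$.

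The principal obstacle is the coupling between the random LP basis, the random stopping time, and the estimation quality: a rare bad fluctuation of $\bm{B}^{(t)}$ early in Phase~II could flip the active basis and make the matrix inverse $(\bm{C}^L_{\mathcal{J}^*,\mathcal{I}^*})^{-1}$ blow up, at which point both the closed form and the martingale analysis break. I expect the cleanest handling is an inductive ``good-tube'' invariant asserting that $|B_j^{(t)}-(T-t+1)b| \leq \mathrm{poly}(d,\log T)/(\sigma\chi\delta)$ for all $t$ in Phase~II and all $j\in\mathcal{J}^*$; on this tube the LP basis remains pinned to $(\mathcal{I}^*,\mathcal{J}^*)$, the counts $n_{\min}(t)$ grow as $\chi t$, and the Azuma-plus-drift argument closes with the stated constants. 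Establishing and carrying this invariant through the stopping time, rather than the concentration step itself, is where the $1/\chi^2$ and $1/\sigma^2$ dependencies enter honestly.
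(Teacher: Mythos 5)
Your proposal follows essentially the same route as the paper: the closed form $\tilde{\bm{x}}(t)=(\bm{C}^L(t)_{\mathcal{J}^*,\mathcal{I}^*})^{-1}\bm{B}^{(t)}_{\mathcal{J}^*}$ is exactly the paper's Lemma~\ref{adaptLP_sol}; your ratio $M_j(t)=B_j^{(t)}/B_1^{(t)}$ coincides (up to the constant factor $b$, since $B_1^{(t)}=(T-t)b$) with the paper's auxiliary process $b_j^{(t)}=B_j^{(t)}/(T-t)$; the decomposition into an Azuma martingale part plus a summable $O\bigl((T-t)^{-1}\sqrt{\log T/(\chi t)}\,\sigma^{-1}\bigr)$ drift is the paper's Lemma~\ref{lemma:tilde_tau_control}; the lower bound $n_{\min}(t)=\Omega(\chi t)$ via the per-step playing probability is the paper's Lemma~\ref{lemma:random_resource_stability}; and your ``good-tube'' invariant is precisely the paper's stopping time $\tau'=\min\{t:\bm{b}^{(t)}\notin[b-\epsilon,b+\epsilon]^d\}$ together with the events $\mathcal{E}_t$ that condition on the tube holding up to time $t-1$. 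The paper then bounds $\E[T-\tau']$ and uses $B_j^{(\tau)}\le B_j^{(\tau')}\le (T-\tau')(b+\epsilon)$, which is equivalent in substance to your end-game argument.

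There is one concrete overclaim you should repair: you assert that the confidence widths needed to pin the LP basis and control the matrix inverse are ``already guaranteed at the end of Phase~I by Proposition~\ref{PhaseI_UB}.'' That is not true. Phase~I only drives the widths down to roughly $\delta b/(6(2b+1))$ --- just enough to separate $\text{OPT}_i$ and $\text{OPT}_j$ from $\text{OPT}_{\text{LP}}$ --- whereas the threshold required for the closed form and for $\sigma_{\min}(\bm{C}^L(t)_{\mathcal{J}^*,\mathcal{I}^*})\ge\sigma/2$ is the paper's $\theta$, which scales like $\sigma^2\min\{\chi,\delta\}/d^2$ and can be far smaller than anything Phase~I certifies; neither $\chi$ nor $\sigma$ enters the Phase~I stopping rule at all. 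The paper closes this hole with a separate warm-start argument (Section~\ref{sectionWarmStart}): it shows that even before the warm-start accuracy is reached, the Phase~II LP still assigns each optimal arm probability $\Omega(\chi\theta)$, so the estimates keep refining and the warm-start condition is met within $O(\log T/(\chi\theta^3))$ additional periods with high probability. Your tube invariant cannot substitute for this, because the invariant itself presupposes the closed form (hence the warm start) to control the per-step drift; you need the separate bootstrap before the induction can start.
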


The idea of proof is to introduce an auxiliary process $b_j^{(t)}=\frac{B_j^{(t)}}{T-t}$ for $t\in[T-1]$ and $j\in\mathcal{J}^*$. Recall that $B_j^{(t)}$ is the $j$-th component of the knapsack process $\bm{B}^{(t)}$, we know its initial value $b_j^{(0)}=b$. Then define 
$$\tau_j = \min\{t: b_j^{(t)}\notin [b-\epsilon, b+\epsilon]\} \cup \{T\}$$
for a fixed $\epsilon>0.$ With the definition, $b_j^{(t)}$ can be interpreted as average remaining resource (per time period) and $\tau_j$ can be interpreted as the first time that $b_j^{(t)}$ deviates from its initial value $b$ by a small amount. It is easy to see that $\tau_j\le \tau$ with a proper choice of $\epsilon.$ Next, we aim to upper bound $\E[T-\tau_j]$ by analyzing the process $\{b_j^{(t)}\}_{t=0}^T.$ From the dynamic of the knapsack process $\bm{B}_t,$ we know that
\begin{align}
   b_j^{(t)} & =  \frac{B_j^{(t)}}{T-t} = \frac{B_j^{(t-1)}-C_{j,t}}{T-t} \nonumber \\
   & =  b_j^{(t-1)} - \frac{1}{T-t}\left(C_{j,t}-b_j^{(t-1)}\right) \label{dynamic_b}
\end{align}
where $C_{j,t}$ as defined earlier is the resource consumption of $j$-th constraint at time $t$. The above formula \eqref{dynamic_b} provides a technical explanation for the motivation of the adaptive design in \eqref{adaptLP}. Intuitively, when the right-hand-side of \eqref{adaptLP} is $\bm{B}^{(t-1)},$ it will lead to a solution that (approximately and on expectation) consumes $b_j^{(t-1)}$ of the $j$-th resource for each of the following time periods. Ideally, this will make the second term in \eqref{dynamic_b} have a zero expectation. However, due to estimation error for the LP's parameters, this may not be the case. The idea is to first provide an upper bound for the ``bias'' term
$$\left\vert\E\left[C_{j,t}-b_j^{(t-1)}|\mathcal{H}_{t-1}\right]\right|$$
where $\mathcal{H}_{t-1}=\{(r_s, \bm{C}_s, i_s)\}_{s=1}^{t-1}$ encapsulates all the information up to time $t-1.$ Unsurprisingly, the upper bound is on the order of $O\left(\frac{1}{\sqrt{t}}\right)$. Next, with this bias upper bound, we can construct a super-martingale (sub-martingale) based on the dynamic \eqref{dynamic_b} and employ Azuma–Hoeffding inequality to provide a concentration result for the value of the martingale. Through the analysis of the process $b_j^{(t)}$, we can derive an upper bound for $\mathbb{E}[T-\tau_j]$, and consequently, it leads to an upper bound on $\mathbb{E}[B^{(\tau)}_j]$.

The importance of the adaptive design has been widely recognized in other constrained online learning problems, such as online matching problem \cite{manshadi2012online}, online assortment problem \cite{golrezaei2014real}, online linear programming problem \cite{li2019online}, network revenue management problem \cite{jasin2012re}, etc. The common pattern of these problem is to allocate limited resources in a sequential manner, and the idea of adaptive design is to adjust the allocation rule dynamically according to the remaining resource/inventory. This is in parallel with LP \eqref{adaptLP} where the solution at each time $t$ is contingent on the remaining resources $\bm{B}^{(t)}.$ The significance of our algorithm design and analysis lies in that (i) to the literature of BwK, our paper is the first application of the idea of adaptive design; (ii) to the existing literature of adaptive design in constrained online learning problems, our work provides its first application and analysis in a partial-information environment. For the second aspect, all the existing analysis on the adaptive design fall in the paradigm of ``first-observe-then-decide'' while the BwK problem is ``first-decide-then-observe''. Specifically, in matching/resource allocation/revenue management problems, at each time period, a new agent arrives, and upon the observation, we decide the matching for the agent; or a new customer arrives, and upon the observation of her preference, we decide the assortment decision for the customer. So, the existing analyses are analogous to a BwK ``setting'' where the reward and resource consumption of playing an arm are first observed (magically), and then we decide whether we want to play the arm or not.

\subsection{Regret Upper Bound for Algorithm \ref{alg_BwK}}

Combining the two parts, we have the following result on the regret of Algorithm \ref{alg_BwK}.

\begin{proposition}
\label{regret_prop}
The regret of Algorithm \ref{alg_BwK} has the following upper bound, $$O\left(\left(2+\frac{1}{b}\right)^2\frac{md\log T}{b\delta^2} + \frac{d^4}{b^2\min\{\chi^2, \delta^2\}\min\{1,\sigma^2\}}\right)$$
where $b$ is defined in Assumption \ref{assume_b}, $\delta$ is defined in Section \ref{symmetry}, and $\sigma$ and $\chi$ are defined in Section \ref{param_def}.
\end{proposition}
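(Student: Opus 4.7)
The plan is to apply the generic regret decomposition of Proposition \ref{prop_upper_bound} and then invoke Propositions \ref{PhaseI_UB} and \ref{prop_B_tau} to control the two resulting terms. Throughout I would condition on the concentration event of Lemma \ref{paramEst}, which holds with probability at least $1-4md/T^2$; on the complementary event the regret is trivially $O(T)$, contributing an additive $O(md/T)$ which is negligible compared to both terms of the target bound. On the good event, Proposition \ref{PhaseI_UB} further gives $\hat{\mathcal{I}}^*=\mathcal{I}^*$ and $\hat{\mathcal{J}}'=\mathcal{J}'$, so the Phase II constraint $x_i=0$ for $i\notin\hat{\mathcal{I}}^*$ guarantees that no sub-optimal arm is ever played in Phase II.

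To bound the first term $\sum_{i\in\mathcal{I}'} n_i(\tau)\Delta_i$, I would use the Phase I play-count bound from Proposition \ref{PhaseI_UB}: each arm is played at most $(2+1/b)^2\cdot 72\log T/\delta^2$ times during Phase I, and since Phase II contributes zero sub-optimal plays on the good event, the same bound applies to $n_i(\tau)$ for every $i\in\mathcal{I}'$. For the reduced cost, I would write $\Delta_i=\bm{c}_i^\top\bm{y}^*-\mu_i\le\bm{c}_i^\top\bm{y}^*\le\|\bm{c}_i\|_1\|\bm{y}^*\|_\infty$. The entries of $\bm{c}_i$ lie in $[0,1]$, so $\|\bm{c}_i\|_1\le d$; from strong duality $B\sum_j y_j^*=\text{OPT}_{\text{LP}}\le T$ (using $\mu_i\le 1$), one has $\|\bm{y}^*\|_\infty\le\sum_j y_j^*\le 1/b$. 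Hence $\Delta_i=O(d/b)$, and summing over $|\mathcal{I}'|\le m$ sub-optimal arms yields the first term of the claimed rate.

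For the second term $\E[\bm{B}^{(\tau)}]^\top\bm{y}^*$, I would first invoke complementary slackness to drop non-binding constraints: $y_j^*=0$ for $j\in\mathcal{J}'$, so
$$\E[\bm{B}^{(\tau)}]^\top\bm{y}^*=\sum_{j\in\mathcal{J}^*}\E\bigl[B_j^{(\tau)}\bigr]\,y_j^*.$$
Proposition \ref{prop_B_tau} bounds each $\E[B_j^{(\tau)}]$ by $O(d^3/(b\min\{\chi^2,\delta^2\}\min\{1,\sigma^2\}))$, and the same bound $y_j^*\le 1/b$ applies. Summing over $|\mathcal{J}^*|\le d$ binding constraints produces the $O(d^4/(b^2\min\{\chi^2,\delta^2\}\min\{1,\sigma^2\}))$ term.

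The main obstacle I anticipate is a bookkeeping one: I must verify that the Phase I arm counts we cite really upper bound the entire trajectory, which requires ruling out the event that Phase I exhausts some resource before the stopping rule $|\hat{\mathcal{I}}^*|+|\hat{\mathcal{J}}'|=d$ fires. Total Phase I consumption per resource is at most $m\cdot(2+1/b)^2\cdot 72\log T/\delta^2$, which is $o(T)$ and hence below $B=Tb$ for $T$ sufficiently large, so the issue is harmless asymptotically; however one should also ensure that the adaptive LP \eqref{adaptLP} in Phase II remains feasible throughout, which follows because at the start of Phase II every binding resource still has $\Theta(T)$ capacity (only polylogarithmically many units were consumed in Phase I), so the LP with restricted support on $\mathcal{I}^*$ admits a scaled version of $\bm{x}^*$ as a feasible solution. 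The remaining work is routine: assemble the two bounds, add back the $O(md/T)$ from the bad event, and absorb constants into the $O(\cdot)$.
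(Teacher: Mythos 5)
Your proposal is correct and follows essentially the same route as the paper's own proof: apply the decomposition of Proposition \ref{prop_upper_bound}, bound the sub-optimal-arm term using the Phase I play counts from Proposition \ref{PhaseI_UB} together with $\Delta_i = O(d/b)$ from dual feasibility, and bound the leftover-resource term by summing Proposition \ref{prop_B_tau} over the at most $d$ binding constraints with $y_j^*\le 1/b$. Your extra bookkeeping (the bad-event contribution of $O(md/T)$ and the check that Phase I does not exhaust resources) is sound and slightly more explicit than the paper, but does not change the argument.
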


The result reduces the exponential dependence on $m$ and $d$ in \citep{flajolet2015logarithmic} to polynomial, and also it does not rely on any prior knowledge. Specifically, the authors consider several settings for BwK problem, many of which assume special structures such as one or two resource constraints. The most general settings therein, which is comparable to ours, allows arbitrary number of constraints and number of optimal arms. In terms of the key parameters, the way we define $\delta$ is the same as their definition of $\Delta_x$. However, the regret bound (Theorem 8 therein) involves a summation of exponentially many $\frac{1}{\Delta_x}$'s (the same as the total number of the bases of the LP). Our parameter $\sigma$ is related to their $\epsilon$ (Assumption 8 therein) while the latter is more restrictive. Because $\sigma$ in our paper represents the minimal singular value of the matrix corresponding to only the optimal basis of the primal LP, whereas the parameter $\epsilon$ therein represents a lower bound of the determinant of the matrices corresponding to all the possible (exponentially many) bases of the primal LP. In this light, if they adopt our parameter $\sigma$, their bound would be improved by a factor of $d!$ ($C!$ therein). Moreover, $\epsilon$ is a lower bound for our parameter $\chi$ and \citep{flajolet2015logarithmic} explicitly requires the knowledge of $\epsilon$ a priori.

The proposition also generalizes the one-constraint bound in \citep{sankararaman2020advances} and relaxes the deterministic resource consumption assumption therein. Specifically, the authors assume there is one single optimal arm and one single resource (other than time), i.e., the optimal solution to the primal LP has only one non-zero entry ($|\mathcal{I}^*|=|\mathcal{J}^*|=1$ and $d=2$). They also assume the underlying LP is non-degenerate. Our results generalize their work in allowing arbitrary $d$, $|\mathcal{I}^*|$ and $|\mathcal{J}^*|.$ In terms of the key parameters, under their assumption, our parameter $\sigma=1$ because it is defined by a 1-by-1 matrix. Our parameter $\chi$ is deemed as a constant in their paper. For $\delta$, under their assumptions, our definition of OPT$_i$ can be adjusted accordingly. Specifically, we can replace the constraint $x_i=0$ in defining OPT$_i$ with $x_{i'}=0, i'\neq i$. Then our definition of $\delta$ would reduce to their definition of $G_{LAG}(a)$, both of which characterize the sub-optimality gap of an arm. 

The above comparison against the existing literature highlights that the parameters $\sigma,$ $\delta$, and $\chi$ or other LP-based parameters might be inevitable in characterizing logarithmic regret bound. Our paper makes some preliminary efforts along this line, but we do not believe our bound is tight: parameters such as $\chi$ and $\sigma$ may be replaced with some tighter parameter through a better algorithm and sharper analysis. As remarked in Section \ref{param_def}, the parameters are not dependent on $T$ under Assumption \ref{assume_b}. But to characterize the dependency of these parameters (such as $\delta$ and $\chi$) on the problem size $m$ and $d$ remains an open question. Moreover, our algorithm and analysis highly rely on the structural properties of the underlying LP, which might not be the unique method to handle the BwK problem.

\section{Conclusion}

In this paper, we introduce a new BwK algorithm and derive problem-dependent bound for the algorithm. In the Phase I of the algorithm, it involves a round-robin design and may result in playing sub-optimal arms for inefficiently many times. Our regret bound can be large when the parameters $\sigma,$ $\delta$, and $\chi$ are small and the inefficient plays of the sub-optimal arms may prevent the algorithm from achieving an $O(\sqrt{T})$ worst-case regret. In the extreme case, these parameters may scale with $O(\frac{1}{T})$, though Assumption \ref{assume_b} prevents such a possibility. So the question is whether Assumption \ref{assume_b} is necessary in admitting a logarithmic problem-dependent bound.

We conclude our discussion with a new one-phase algorithm -- Algorithm \ref{alg_Adapt}. The algorithm is also LP-based, and at each time $t$, it solves a UCB version of the primal LP to sample the arm. The algorithm has an adaptive design to exhaust the resources. On one hand, the algorithm naturally incorporates the Phase I of Algorithm \ref{alg_BwK} as a part of its Phase II. It directly enters the Phase II of Algorithm \ref{alg_BwK} and lets the adaptive LP to fully determine the arm(s) to play (without the extra constraint in \eqref{adaptLP}). On the other hand, the algorithm can be viewed as an adaptive version of the algorithm in \citep{agrawal2014bandits}. Our conjecture is that Algorithm \ref{alg_Adapt} is the optimal algorithm for BwK: it is optimal in the sense that it achieves optimal problem-dependent bound, but also admits $O(\sqrt{T})$ problem-independent bound. Unfortunately, its analysis is more challenging than Algorithm \ref{alg_BwK}, which we leave as an open question. 

\begin{algorithm}[ht!]
\caption{One-Phase Adaptive Algorithm for BwK}
\label{alg_Adapt}
\begin{algorithmic}[1] 
\State Input: Resource capacity $\bm{B}$, $T$
\State Initialize the knapsack process $\bm{B}^{(0)} = \bm{B}$
\State Initialize the estimates $\hat{\bm{\mu}}(0)$ and $\hat{\bm{C}}(0)$
\State Set $t=1$
\While{$t\le \tau$}
\State Solve the following LP
\begin{align}
    \max_{\bm{x}} \ \ & \left(\bm{\mu}^U(t-1) \right)^\top \bm{x}, \label{adapt_LP} \\
    \text{s.t.}\ \ &  \bm{C}^L(t-1) \bm{x} \le \bm{B}^{(t-1)},  \nonumber  \\
    & \bm{x}\ge \bm{0}. \nonumber
\end{align}
\State Denote its optimal solution as $\tilde{\bm{x}}$
\State Normalize $\tilde{\bm{x}}$ into a probability and randomly play an arm according to the probability
\State Update estimates $\hat{\bm{\mu}}(t)$, $\hat{\bm{C}}(t)$, and $\bm{B}^{(t)}$
\State Update $t=t+1$
\EndWhile
\end{algorithmic}
\end{algorithm}

\section*{Acknowledgements}
We would like to thank anonymous reviewers for their helpful suggestions and insightful comments.

\bibliographystyle{informs2014} 
\bibliography{sample.bib} 

\newpage

\appendix

\section{Proof of Section 2 and 3}

\renewcommand{\thesubsection}{A\arabic{subsection}}

\subsection{Proof of Lemma \ref{I_J_set}}
\begin{proof}
    Notice $\bm{x}=\bm{0}$ is a feasible solution to the primal LP \eqref{primalLP} and $\bm{y}=(b,...,b)^\top$ is a feasible solution to the dual LP \eqref{DualLP}. We have both of the primal and dual LPs are feasible. So, by the strong duality of linear programming, both of the primal and dual LPs have optimal solutions and share the same optimal object objective value. 
    
    Then, based on assumption \eqref{ass_LP} that $|\mathcal{I}^{*}|=|\mathcal{J}^{*}|$, we have 
    \begin{align*}
        |\mathcal{I}^{*}|+|\mathcal{J}'|&=|\mathcal{J}^{*}|+|\mathcal{J}'|=d,\\
        |\mathcal{I}^{*}|+|\mathcal{I}'|&=|\mathcal{J}^{*}|+|\mathcal{I}'|=m.
    \end{align*}
\end{proof}

\subsection{Proof of Proposition \ref{prop_upper_bound}}

\begin{proof}
From $\text{OPT}_{\text{LP}}\geq\text{OPT}$, we know
    \begin{align}
        \label{reg_OPTLP}
        \text{Regret}_T^{\pi}(\mathcal{P}, \bm{B}) 
        &\coloneqq \text{OPT} - \E\left[\sum_{t=1}^{\tau} r_t\right]\\
        &\leq
        \text{OPT}_{\text{LP}}- \E\left[\sum_{t=1}^{\tau} r_t\right]\nonumber
    \end{align}

The idea is to represent $\E\left[\sum_{t=1}^{\tau} r_t\right]$ with reduced cost and number of plays for the arms. We have
    \begin{align*}
        \mathbb{E}\left[\sum\limits_{t=1}^{\tau}r_t\right]
        &=
        \mathbb{E}\left[\sum\limits_{t=1}^{\tau}r_{i_t}\right]\\
        &=
        \sum\limits_{i\in[m]}\mu_i\mathbb{E}\left[n_i(\tau)\right]\\
        &=
        \sum\limits_{i\in[m]}(\bm{c}_i^{\top}\bm{y}^*-\Delta_i)\mathbb{E}\left[n_i(\tau)\right]\\
        &=
        \mathbb{E}\left[\sum\limits_{t=1}^{\tau}\bm{c}_{t}\right]^{\top}\bm{y}^*
        -
        \sum\limits_{i\in\mathcal{I}'}\Delta_i\mathbb{E}\left[n_i(\tau)\right],
    \end{align*}    
where $i_t$ denotes the arm chosen to play at time $t$. In above, the second line comes from the independence between stopping time and the reward generated from a fixed arm. The third line comes from the definition of the reduced cost $\Delta_i$ for all $i\in[d]$ that $\Delta_i=\bm{c}_i^{\top}\bm{y}^*-\mu_i$, and the last line can be obtained by the same proof of the first three lines. Thus, plugging the equations above in the inequality \eqref{reg_OPTLP} and rearranging terms, we have
    \begin{align*}
        \text{Regret}_T^{\pi}(\mathcal{P}, \bm{B}) 
        &\leq
        \text{OPT}_{\text{LP}}- \E\left[\sum_{t=1}^{\tau} r_t\right]\\
        =
        &\mathbb{E}\left[\bm{B}-\sum\limits_{t=1}^{\tau}\bm{c}_{t}\right]^{\top}\bm{y}^*
        +
        \sum\limits_{i\in\mathcal{I}'}\Delta_i\mathbb{E}\left[n_i(\tau)\right]\\
        =
        &\mathbb{E}\left[\bm{B}^{(\tau)}\right]^{\top}\bm{y}^*
        +
        \sum\limits_{i\in\mathcal{I}'}\Delta_i\mathbb{E}\left[n_i(\tau)\right].
    \end{align*}
Thus we complete the proof.
\end{proof}

\subsection{Proof of Proposition \ref{OPTij}}
\begin{proof}
It is sufficient to show that 
\begin{align}
        \text{OPT}_i<\text{OPT}_{\text{LP}} \Leftrightarrow i\in\mathcal{I}^*, \label{OPTi<OPT} \\
        \text{OPT}_j<\text{OPT}_{\text{LP}} \Leftrightarrow i\in\mathcal{J}^*. \label{OPTj<OPT}
    \end{align}
 
Here we prove  \eqref{OPTi<OPT} and the proof of \eqref{OPTj<OPT} follows the same analysis. By definition, an feasible solution to LP \eqref{opt_i} is also feasible to LP \eqref{primalLP}. So,
$$\text{OPT}_i\leq \text{OPT}_{\text{LP}}.$$
If $\text{OPT}_{\text{LP}}=\text{OPT}_i$ for $i\in\mathcal{I}^*$, the optimal solution of LP \eqref{opt_i} is also a optimal solution to \eqref{primalLP}. Denote that solution as $\tilde{\bm{x}}^*$. We have both of $\tilde{\bm{x}}^*$ and $\bm{x}^*$ are optimal solutions of LP \eqref{primalLP}. From the uniqueness in Assumption \ref{ass_LP}, we know $\tilde{\bm{x}}^*=\bm{x}^*$ and hence $x^*_i\not=0$ and $\tilde{x}^*_i\not=0$ for $i \in \mathcal{I}^*$. However, this contradicts with constraint of \eqref{opt_i} where $x_i=0$. 
\end{proof}

\subsection{Proof of Lemma \ref{paramEst}}
\begin{proof}
    The proof follows a standard analysis of the tail bounds for the estimators. Specifically, we first consider the complement of these events and then apply a union bound to complete the proof. The analysis first concerns the case when there is no projection and then extends the results to the case with projection.
    
    For each $i\in[m],$ we have
    \begin{align}
       & \mathbb{P}\left(
            \bigcup_{t\in[T]}
            \left\{
                \frac{1}{n_i(t)}\left|\sum\limits_{s=1}^{n_i(t)}r_{i,s}-n_i(t)\mu_i\right|\geq \sqrt{\frac{2\log T}{n_i(t)}}
            \right\}
        \right) \nonumber \\
        \leq&
        \sum\limits_{t=1}^{T}
        \mathbb{P}\left(
                \frac{1}{n_i(t)}\left|\sum\limits_{s=1}^{n_i(t)}r_{i,s}-n_i(t)\mu_i\right|\geq \sqrt{\frac{2\log T}{n_i(t)}}
        \right)\nonumber \\
        \leq&
        \sum\limits_{t=1}^{T} \sum\limits_{k=1}^{t}
        \mathbb{P}\left(
                \frac{1}{k}\left|\sum\limits_{s=1}^{k}r_{i,s}-k\mu_i\right|\geq \sqrt{\frac{2\log T}{k}}
        \right)\mathbb{P}\left(n_i(t)=k\right)\nonumber \\
        \leq&
        \sum\limits_{t=1}^{T} \sum\limits_{k=1}^{t}
        \mathbb{P}\left(
                \frac{1}{k}\left|\sum\limits_{s=1}^{k}r_{i,s}-k\mu_i\right|\geq \sqrt{\frac{2\log T}{k}}
        \right)\nonumber \\
        \leq&  \sum\limits_{t=1}^{T} \sum\limits_{k=1}^{t} \frac{2}{T^4}
        \leq \frac{2}{T^2}. \label{concentration_on_r}
    \end{align}
    where $r_{i,s}$ denotes the reward of the $s$-th play for the $i$-th arm. The first inequality is obtained by the union probability bound, the second one follows from Bayes rule, the third one is obtained by $\mathbb{P}\left[n_i(t)=k\right]\leq1$, and the last one is obtained by an application of the Hoeffding's inequality with $r_{i,s}\in[0,1]$ and $\mathbb{E}[r_{i,s}]=\mu_i$ for $i\in[m]$ and $s\in[T]$.  
    
Then, recall that 
\begin{align*}
    \mu_i^L(t)& \coloneqq  proj_{[0,1]}\left(\hat{\mu}_{i}(t)-\sqrt{\frac{2\log T}{n_i(t)}}\right), \\
    \mu_i^U(t)&\coloneqq  proj_{[0,1]}\left(\hat{\mu}_{i}(t)+\sqrt{\frac{2\log T}{n_i(t)}}\right),
\end{align*}
and $\mu_i\in[0,1]$, following the definitions of the sets, we can have 
\begin{align*}
   \mathbb{P}\left(
        \bigcup_{t\in[T]}
\left\{\mu_i\notin\left(\mu_i^{L}(t),\mu_i^{U}(t)\right)\right\}
    \right)
    &=
        \mathbb{P}\left(
        \bigcup_{t\in[T]}
        \left\{
            \frac{1}{n_i(t)}\left|\sum\limits_{s=1}^{n_i(t)}r_{i,s}-n_i(t)\mu_i\right|\geq \sqrt{\frac{2\log T}{n_i(t)}}
        \right\}
    \right).
\end{align*}
Then we can apply \eqref{concentration_on_r} for the last line in above.
\begin{align*}
&\ \ \ \ \mathbb{P}\left(
            \bigcup_{t\in[T]}
    \left\{\mu_i\notin\left(\mu_i^{L}(t),\mu_i^{U}(t)\right)\right\}
\right) \leq\frac{2}{T^2}.
\end{align*}
Note the proof is only relied on the property that the reward is bounded in $[0,1]$.  

Next, following the exact same analysis, we have
    \begin{align*}
        &\ \ \ \ \mathbb{P}\left(
            \bigcup_{t\in[T]}
    \left\{c_{ji}\notin\left(C_{ji}^{L}(t),C_{ji}^{U}(t)\right)\right\}
        \right)\leq\frac{2}{T^2}
    \end{align*}
    also holds for each $j\in[d],i\in[m]$. Then, we complete the proof by taking an union bound with respect to the arm index $i\in[m]$ and the constraint index $j\in[d].$
\end{proof}

\subsection{Proof of Lemma \ref{optl_u}}
\begin{proof}
It is sufficient to show that 
$$\text{OPT}_{\text{LP}}^L \le \text{OPT}_{\text{LP}} \le \text{OPT}_{\text{LP}}^U$$
holds under the event in Lemma \ref{paramEst}. Then, applying Lemma \ref{paramEst}, we can have that the inequality above holds with probability no less than $1-\frac{4md}{T^2}.$
    
Recall that $\text{OPT}^{L}_{\text{LP}}(t)$ corresponds to the optimal objective value of the following LP \eqref{UCB_LP} and denote ${\bm{x}}^{L}(t)$ as its optimal solution. Since $\hat{{\bm{C}}}^{U}(t)\geq\bm{C}$ holds element-wise under the event in Lemma \ref{paramEst}, we have 
    $$
        \bm{C}{\bm{x}}^{L}(t)\leq{{\bm{C}}}^{U}(t){\bm{x}}^{L}(t)\leq \bm{B},
    $$
    where those inequalities hold element-wise. Thus, we have ${\bm{x}}^{L}(t)$ is a feasible solution to the LP \eqref{primalLP}, and
    \begin{align}
        \text{OPT}^{L}_{\text{LP}}(t)
        &=
        \sum\limits_{i=1}^{m}{{\mu}}^{L}_i(t) {x}^L_i(t)\nonumber\\
        &\leq
         \sum\limits_{i=1}^{m}{{\mu}}_i(t) {x}^L_i(t)\\
         &\leq
         \text{OPT}_{\text{LP}}, \nonumber
    \end{align}
    where the first line comes the definition of ${\bm{x}}^{L}(t)$, the second line comes from ${\hat{\bm{\mu}}}^L(t)\leq\bm{\mu}(t)$ (under the event in Lemma \ref{paramEst}), and the third line comes from that ${\bm{x}}^L(t)$ is feasible to LP \eqref{primalLP} and the definition of $\text{OPT}_{\text{LP}}$. Thus, $\text{OPT}^{L}_{\text{LP}}(t)$ is an underestimate of $\text{OPT}_{\text{LP}}$ for all $t\in[T]$.
    
With a similar analysis, we can show the part for $\text{OPT}^{U}_{\text{LP}}(t)$.
\end{proof}    

\renewcommand{\thesubsection}{B\arabic{subsection}}

\section{Analysis of Phase I of Algorithm \ref{alg_BwK}}

In this section, we provide the analysis of Phase I of Algorithm \ref{alg_BwK} and prove Proposition \ref{PhaseI_UB}. In the proof, we will use some concentration results for the LCB/UCB estimators for the LP, and these concentration results will be summarized and proved in Lemma \ref{lem:conv} following the proof of Proposition \ref{PhaseI_UB}.

\subsection{Proof of Proposition \ref{PhaseI_UB}}
\begin{proof}
In this proof, we show the result under the high-probability event in Lemma \ref{paramEst}.

Recall that an arm $i$ will be added to $\hat{\mathcal{I}}^*$ or resource $j$ will be added to $\hat{\mathcal{J}}'$ when 
$$
    \text{OPT}^{U}_i(t)<\text{OPT}^{L}_{\text{LP}}(t)\text{ or }\text{OPT}^{L}_j(t)>\text{OPT}_{\text{LP}}^{U}(t).
$$
So, our proof will be focusing on discussing the relation between $\text{OPT}^{U}_i(t)$ and $\text{OPT}^{L}_{\text{LP}}(t)$, and the relation between $\text{OPT}^{U}_j(t)$ and $\text{OPT}^{L}_{\text{LP}}(t)$. Our proof can be divided into two parts:
\begin{itemize}
    \item First, we prove that $\hat{\mathcal{I}}^*$ and $\hat{\mathcal{J}}'$ will not include a non-optimal arm $i\in\mathcal{I}'$ and binding resource $j\in\mathcal{J}^*$. Equivalently, we only need to show
    $$\text{OPT}^{U}_i(t)\geq \text{OPT}^{L}_{\text{LP}}(t), \text{OPT}^{U}_j(t)\geq \text{OPT}_{\text{LP}}^{L}(t)$$
    for $i\in\mathcal{I}'$ and $j\in\mathcal{J}^*$ and all $t\in[T]$. 
    \item Second, Phase I of Algorithm \ref{alg_BwK} will terminate within $\left(2+\frac{1}{b}\right)^2\frac{72m\log T}{\delta^2}$ steps. Equivalently, we only need to show $$\text{OPT}^{U}_i(t)<\text{OPT}^{L}_{\text{LP}}(t), \text{OPT}^{U}_j(t)<\text{OPT}^{L}_{\text{LP}}(t)$$ for $i\in \mathcal{I}^*$ and $j\in\mathcal{J}'$ when the time  $$t\geq\left(2+\frac{1}{b}\right)^2\frac{72m\log T}{\delta^2}.$$ 
\end{itemize}  

For the first part, recall that $\text{OPT}^{U}_i$ is the optimal objective value of the LP \eqref{opt_i}.
For $i\in[m]$, with a similar proof as Lemma \ref{optl_u}, we can show that
$$
    \text{OPT}^{U}_i(t)\geq \text{OPT}_i
$$
and from Lemma \ref{optl_u},
$$\text{OPT}^{L}_{\text{LP}}(t)\leq \text{OPT}_{\text{LP}}$$
for all $t\in[T]$. For $i\in\mathcal{I}'$, we have $\text{OPT}_i=\text{OPT}_{\text{LP}}$ from Proposition \ref{OPTij}. So,
$$\text{OPT}^{U}_i(t)\geq \text{OPT}^{L}_{\text{LP}}(t).$$
With a similar argument, 
$$\text{OPT}^{U}_j(t)\geq \text{OPT}_{\text{LP}},\text{ for all $j\in\mathcal{J}^*$}.$$
Therefore, we complete the first part. The implication is that $\hat{\mathcal{I}}^*\subset{\mathcal{I}}^*$ and $\hat{\mathcal{J}}'\subset{\mathcal{J}}'$.

Recall that the algorithm terminates when $|\hat{\mathcal{I}}^*|+|\hat{\mathcal{J}}'|\geq d$ and that from Lemma \ref{I_J_set}, $|{\mathcal{I}}^*|+|{\mathcal{J}}'|=d$. So, we have the algorithm terminates if and only if $\hat{\mathcal{I}}^*={\mathcal{I}}^*$ and ${\hat{\mathcal{J}}'}={\mathcal{J}}'$. That is, the algorithm stops only when it finds all optimal arms and all non-binding resources. 

Now, we move to the second part and prove that $\hat{\mathcal{I}}^*={\mathcal{I}}^*$ and ${\hat{\mathcal{J}}'}={\mathcal{J}}'$ within $$\left(2+\frac{1}{b}\right)^2\frac{72m\log T}{\delta^2}$$ steps. 

For time $t$ that is a multiple of $m$ and that 
    $$t\geq\left(2+\frac{1}{b}\right)^2\frac{72m\log T}{\delta^2},$$ we have 
    $$
        3\sqrt{\frac{2\log T}{n_i(t)}}\leq\frac{b}{2}
    $$
    i.e., the condition \eqref{opt_conv_cond} in the statement of Lemma \ref{lem:conv} holds, where $n_i(t)=\frac{t}{m}$ in this case. Then, by Lemma \ref{lem:conv}, we have
\begin{align*}
    \frac{1}{T}|\text{OPT}^{L}(t)-\text{OPT}_{\text{LP}}|&<\frac{\delta}{2},\\
    \frac{1}{T}|\text{OPT}^{U}_i(t)-\text{OPT}_i|&<\frac{\delta}{2},\\
    \frac{1}{T}|\text{OPT}^{U}_j(t)-\text{OPT}_j|&<\frac{\delta}{2}.
\end{align*}    
Thus, within at most $\left(2+\frac{1}{b}\right)^2\frac{72m\log T}{\delta^2}$ steps, we have
\begin{align*}
    \text{OPT}^{L}(t)-\text{OPT}^{U}_i
    >
    \text{OPT}_{\text{LP}}-\frac{\delta}{2}
    -
    \text{OPT}_i + \frac{\delta}{2}
    \geq
    0,\\
    \text{OPT}^{L}(t)-\text{OPT}^{U}_j
    >
    \text{OPT}_{\text{LP}}-\frac{\delta}{2}
    -
    \text{OPT}_j + \frac{\delta}{2}
    \geq
    0,
\end{align*}
for all $i\in\mathcal{I}^*$ and $j\in\mathcal{J}'$ since $\text{OPT}-\text{OPT}_i\geq\delta$ and $\text{OPT}_j-\text{OPT}_{\text{LP}}\geq\delta$. Thus, all optimal arms and non-binding resources will be detected with at most $\left(2+\frac{1}{b}\right)^2\frac{72m\log T}{\delta^2}$ steps, equivalently, with each arm played at most $\left(2+\frac{1}{b}\right)^2\frac{72\log T}{\delta^2}$ times. 

\end{proof}

\subsection{Concentration of LP UCB/LCB Estimators}

\begin{lemma}
\label{lem:conv}
In Phase I of Algorithm \ref{alg_BwK}, for $t\in[T]$ that is the multiple of $m$ and $t\le B$, the following inequalities hold for all $i\in[m]$ and $j\in[d]$,
    \begin{align}
        \frac{1}{T}|\text{OPT}^{U}_i(t)-\text{OPT}_i|&<3\left(1+\frac{1}{b}\right)\sqrt{\frac{2\log T}{n_i(t)}},\label{opti_converge}\\
        \frac{1}{T}|\text{OPT}^{U}_j(t)-\text{OPT}_j|&<3\left(2+\frac{1}{b}\right)\sqrt{\frac{2\log T}{n_i(t)}}.\label{optj_converge}
    \end{align}  
    Moreover, if time $t$ also satisfies 
    \begin{align}
    \label{opt_conv_cond}
    3\ \sqrt{\frac{2\log T}{n_i(t)}}\leq\frac{b}{2},
    \end{align}
    the following inequality also holds
    \begin{align}
        \frac{1}{T}|\text{OPT}^{L}(t)-\text{OPT}_{\text{LP}}|&<3\left(1+\frac{1}{b}\right)\sqrt{\frac{2\log T}{n_i(t)}}.\label{optl_converge}
    \end{align}
By the design of Phase I of Algorithm \ref{alg_BwK}, all the $n_i(t)$ in this lemma (the statement and the proof) equals to $t/m.$
\end{lemma}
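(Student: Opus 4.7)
The plan is to work throughout under the high-probability event of Lemma~\ref{paramEst}, under which $|\mu_i - \mu_i^U(t)|$, $|\mu_i - \mu_i^L(t)|$, $|c_{ji} - C_{ji}^L(t)|$, and $|c_{ji} - C_{ji}^U(t)|$ are each bounded by $2\sqrt{2\log T / n_i(t)}$. Each of the three target inequalities then becomes an LP sensitivity question: how much can the optimum shift under simultaneous perturbations of the objective coefficients and constraint matrix entries of this magnitude? A single combinatorial fact does the heavy lifting: the null-arm construction forces $c_{1,i}=b$ for every $i\in[m]$, so the first (time) constraint gives $\|\bm{x}\|_1 \le T$ for every primal-feasible $\bm{x}$, and symmetrically each of $\text{OPT}_{\text{LP}}$, $\text{OPT}_i$, and $\text{OPT}_j+B$ is bounded by a constant times $T$. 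This uniform $\ell_1$-norm bound is what converts pointwise confidence widths into uniform objective-value gaps.

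For \eqref{opti_converge}, since $\bm{C}^L(t)\le \bm{C}$ and $\bm{\mu}^U(t)\ge \bm{\mu}$ element-wise, the UCB feasible set contains that of OPT$_i$ and the UCB objective dominates, so $\text{OPT}_i^U(t)\ge\text{OPT}_i$ is immediate. For the reverse direction, let $\tilde{\bm{x}}$ be the UCB optimum and observe that $\bm{C}\tilde{\bm{x}}\le \bm{B}+2T\sqrt{2\log T/n_i(t)}\cdot\bm{1}$, so the rescaled $\alpha\tilde{\bm{x}}$ with $\alpha=1/(1+(2/b)\sqrt{2\log T/n_i(t)})$ is feasible for the OPT$_i$ LP and continues to satisfy $x_i=0$. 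Combining $1-\alpha\le(2/b)\sqrt{2\log T/n_i(t)}$, the objective perturbation $2T\sqrt{2\log T/n_i(t)}$ from replacing $\bm{\mu}^U(t)$ by $\bm{\mu}$, and the boundedness $\text{OPT}_i\le T$ then yields a bound of order $T(1+1/b)\sqrt{2\log T/n_i(t)}$, absorbed into the stated constant $3(1+1/b)$.

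For \eqref{optj_converge}, I apply strong duality to rewrite OPT$_j$ and OPT$_j^U(t)$ in their primal forms as in \eqref{p-opt_j}: both become maximization LPs with the same constraint structure as the primal \eqref{primalLP} and objectives $(\bm{\mu}+\bm{C}_{j,\cdot})^\top\bm{x}-B$ and $(\bm{\mu}^U(t)+\bm{C}_{j,\cdot}^U(t))^\top\bm{x}-B$ respectively. The scaling argument is then identical to the OPT$_i$ case, except that the objective perturbation now involves both $\bm{\mu}^U(t)-\bm{\mu}$ and $\bm{C}_{j,\cdot}^U(t)-\bm{C}_{j,\cdot}$, doubling the $\sqrt{\log T/n_i(t)}$ contribution and producing the constant $3(2+1/b)$; the $-B$ term enters only through the $(1-\alpha)B=(1-\alpha)Tb$ discrepancy, which is subsumed by the $(2/b)\sqrt{\cdot}$ factor.

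For \eqref{optl_converge}, the direction $\text{OPT}^L(t)\le \text{OPT}_{\text{LP}}$ is immediate because $\bm{C}^U(t)\ge \bm{C}$ shrinks the feasible set while $\bm{\mu}^L(t)\le \bm{\mu}$ lowers the objective. For the lower bound on $\text{OPT}^L(t)$, I take the true optimum $\bm{x}^*$ of \eqref{primalLP} and scale it by $\alpha=1/(1+(2/b)\sqrt{2\log T/n_i(t)})$ so that $\alpha\bm{x}^*$ is feasible for the LCB LP; the role of the hypothesis \eqref{opt_conv_cond} is to keep the scaling factor bounded away from zero (it forces $\alpha\ge 3/4$), which prevents the $\alpha$-factor from eating into the perturbation bound and cleanly yields the $3(1+1/b)\sqrt{2\log T/n_i(t)}$ estimate. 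The main obstacle across all three parts is organizational rather than technical: tracking which direction each of $\bm{\mu}^U$, $\bm{\mu}^L$, $\bm{C}^U$, $\bm{C}^L$ pushes the optimum, and bundling the three perturbation arguments into one unified scaling-based template. Once the $\ell_1$-norm bound and the scaling trick are in place, the rest is bookkeeping.
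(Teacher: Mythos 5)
Your overall strategy is sound and, for \eqref{opti_converge} and \eqref{optl_converge}, it is essentially the paper's own argument in a slightly different guise: the paper inflates the right-hand side $\bm{B}$ by a factor $\bigl(1+3\sqrt{2\log T/(n_i(t)b^2)}\bigr)$ and shows the empirical optimum is feasible for that relaxed LP, while you deflate the empirical optimum by $\alpha=1/\bigl(1+(2/b)\sqrt{2\log T/n_i(t)}\bigr)$; by homogeneity of the LP these are the same calculation, and both rest on the same two ingredients you identify, namely the null-arm bound $\|\bm{x}\|_1\le T$ and the elementwise confidence widths. (A nice side effect of your multiplicative scaling is that condition \eqref{opt_conv_cond} is not even needed for feasibility in \eqref{optl_converge}, whereas the paper's additive factor $1-3\sqrt{2\log T/(n_i(t)b^2)}$ genuinely requires it to stay nonnegative.) The one place you diverge is \eqref{optj_converge}: the paper stays on the dual side, taking the optimal dual solution $\tilde{\bm{y}}$ of \eqref{opt_j} and shifting it along the first (time-constraint) coordinate by $3(2b+1)\sqrt{2\log T/n_i(t)}$ to restore feasibility for the empirical dual, using $\bm{1}^\top\tilde{\bm{y}}\le 1/b$ to control the resulting objective increase; you instead dualize both LPs to their primal forms \eqref{p-opt_j} and reuse the primal scaling template. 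Both routes work and yours has the advantage of a single unified argument, but be careful with the constant: bounding $(\bm{\mu}+\bm{C}_{j,\cdot})^\top\tilde{\bm{x}}\le 2T$ crudely gives a total of $(4+4/b)\sqrt{2\log T/n_i(t)}\,T$, which exceeds the stated $(6+3/b)\sqrt{2\log T/n_i(t)}\,T$ whenever $b<1/2$; you should instead use $\bm{C}_{j,\cdot}^\top\tilde{\bm{x}}\le B+O(\sqrt{\log T/n_i(t)})\,T=bT+o(T)$ so that the scaling loss is $(1-\alpha)(1+b)T\le(2+2/b)\sqrt{2\log T/n_i(t)}\,T$, which lands within the claimed constant.
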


\begin{proof}
We prove inequalities in the statement one by one. We note that the condition $t\leq B$ is to provide a guarantee that the algorithm does not exhaust the resource before the termination of Phase I of the algorithm.

\paragraph{Proof of \eqref{opti_converge}:}

For \eqref{opti_converge}, we only need to show that 
 $$\frac{1}{T}(\text{OPT}_i^{U}-\text{OPT}_i)< 3\left(1+\frac{1}{b}\right)\sqrt{\frac{2\log T}{n_i(t)}}$$
since for the other direction, we can use the inequality $\text{OPT}^{U}_i(t)\geq \text{OPT}_i$ with the same proof of Lemma \ref{optl_u}. 

We introduce the following LP \eqref{LP:mid} to relate $\text{OPT}_i^{U}(t)$ with $\text{OPT}_i$.
        \begin{align}
            \label{LP:mid}
                \max_{\bm{x}} \ \   & \bm{\mu}^\top \bm{x} +3\sqrt{\frac{2\log T}{n_i(t)}}T\nonumber\\
                \text{s.t.}\ \ &{\bm{C}} \bm{x} \leq \left(1+3\sqrt{\frac{2\log T}{n_i(t)b^2}}\right)\bm{B} \\
                & \ \ \ \ x_{i} = 0,\ \bm{x} \ge 0. \nonumber
        \end{align}  
Compared to \eqref{opt_i} which defines $\text{OPT}_i$, the LP \eqref{LP:mid} scales the right-hand-side resource capacity with a factor and also appends an additional term to the objective function. It is easy to see that the optimal objective value of the LP \eqref{LP:mid} is 
        \begin{align}
        \label{LPmidOPT}
            \left(1+3\sqrt{\frac{2\log T}{n_i(t)b^2}}\right)\text{OPT}_i+3\sqrt{\frac{2\log T}{n_i(t)}}T.
        \end{align}

Denote the optimal solution of LP \eqref{opt_i} as $\bm{x}^{U}(t)$. First, we show that $\bm{x}^{U}(t)$
is a feasible solution to \eqref{LP:mid}, and then, with the feasibility, we show that \eqref{LPmidOPT} provides an upper bound for OPT$_i^{U}$ and thus establish the relation between OPT$_i^{U}$ and OPT$_i.$

To see the feasibility, note that the $j$-th entry of the resource consumption 
    \begin{align*}
            (\bm{C}\bm{x}^{U}(t))_j
            &=
            ({\bm{C}}^{L}(t)\bm{x}^{U}(t))_j+\left(\left(\bm{C}-{\bm{C}}^{L}(t)\right)\bm{x}^{U}(t)\right)_j\\
            &\leq
            B+\left\|\left(\bm{C}-{\bm{C}}^{L}(t)\right)_j\right\|_{\infty}\|\bm{x}^{U}(t)\|_{1}\\
            &\leq
            B+3\sqrt{\frac{2\log T}{n_i(t)}}T,
        \end{align*}
where the last line comes from the concentration event in Lemma \ref{paramEst} and the fact that $\|\bm{x}^{U}(t)\|_1 \le T.$

With this feasibility, we have 
\begin{align}
            \text{OPT}^{U}_i(t)\nonumber
            =
            &({\bm{\mu}}^{U}(t))^{\top} \bm{x}^{U}(t)\nonumber\\
            =
            &\bm{\mu}^{\top} \bm{x}^{U}(t) + \left({\bm{\mu}}^U(t)-\bm{\mu}\right)^{\top} \bm{x}^{U}(t)\nonumber\\
            \leq
            &\bm{\mu}^{\top} \bm{x}^{U}(t) + \left\|{\bm{\mu}}^{U}(t)-\bm{\mu}\right\|_{\infty} \left\|\bm{x}^{U}(t)\right\|_{1} \nonumber \\
            <
            &\bm{\mu}^{\top} \bm{x}^{U}(t) + 3\sqrt{\frac{2\log T}{n_i(t)}}T\nonumber\\
            \leq
            &\left(1+3\sqrt{\frac{2\log T}{n_i(t)b^2}}\right)\text{OPT}_i+\sqrt{\frac{2\log T}{n_i(t)}}T. \label{upb_LPmid}
        \end{align}
Here the second from last line comes from a similar argument as the analysis of feasibility. It is based on the concentration event in Lemma \ref{paramEst} and the fact $\|\bm{x}^{U}(t)\|_1 \le T.$ The last line plugs in the upper bound \eqref{LPmidOPT} to LP \eqref{LP:mid} and it comes from the fact that $\bm{x}^{U}(t)$ is a feasible solution of LP \eqref{LP:mid}. By rearranging terms in \eqref{upb_LPmid}, we complete the proof of \eqref{opti_converge}.

\paragraph{Proof of \eqref{optj_converge}:}

Just like the proof of \eqref{opti_converge}, we only need to show one side of the inequality, that is 
$$ \frac{1}{T}(\text{OPT}_j^{U}-\text{OPT}_j)\leq 3\left(2+\frac{1}{b}\right)\sqrt{\frac{2\log T}{n_i(t)}}.$$
Denote $\tilde{\bm{y}}$ as the optimal solution of the LP \eqref{opt_j}, which is the LP corresponding to $\text{OPT}_j$. Define
$$\tilde{\bm{y}}'=\tilde{\bm{y}}+3\left(2b+1\right)\sqrt{\frac{2\log T}{n_i(t)}}(1,0,...,0)^\top.$$
We first show that $\tilde{\bm{y}}'$ is feasible the LP corresponding to $\text{OPT}_j^{U}(t)$ in Algorithm \ref{alg_BwK}. To see the feasibility, we have 
    \begin{align*}
        \left(\bm{C}^{L}(t)\right)^{\top}\tilde{\bm{y}}'\nonumber
        =
        &\left(\bm{C}^{L}(t)\right)^{\top}\tilde{\bm{y}}+3\left(2b+1\right)\sqrt{\frac{2\log T}{n_i(t)}}\left(\bm{C}^{L}(t)\right)^{\top}(1,0,...,0)^\top\\
        =
        &\bm{C}^{\top}\tilde{\bm{y}}+(\bm{C}^{L}(t)-\bm{C})^{\top}\tilde{\bm{y}}
        +
        3\left(2b+1\right)\sqrt{\frac{2\log T}{n_i(t)b^2}}\bm{1}\\
        \geq
        &\bm{\mu}+(\bm{C}_{j,\cdot})^{\top}
        -
        \max\limits_{j\in[d]}\left\| \bm{c}^{L}_j(t)-\bm{c}_j\right\|_{\infty}(\bm{1}^{\top}\tilde{\bm{y}})\bm{1}+
        3\left(2+\frac{1}{b}\right)\sqrt{\frac{2\log T}{n_i(t)}}\bm{1}\\
        \geq
        &\bm{\mu}+(\bm{C}_{j,\cdot})^\top+6\sqrt{\frac{2\log T}{n_i(t)}}\bm{1}\\
        =
        &\bm{\mu}^{U}(t)+\left(\bm{C}_{j,\cdot}^{U}(t)\right)^{\top}.
    \end{align*}
Here the first two lines come from rearranging terms. The third line comes from the feasibility of $\tilde{\bm{y}}$. The fourth line comes from the concentration event in Lemma \ref{paramEst} and the fact that $\bm{1}^\top \tilde{\bm{y}} \le \frac{T}{B}.$ The last line comes from the definition of UCB estimators. 

With the feasibility of $\tilde{\bm{y}}',$ we have 
    \begin{align*}
        \text{OPT}^{U}_j
        &\leq
        \bm{B}^{\top}\tilde{\bm{y}}'\\
        &\leq
        \bm{B}^{\top}\tilde{\bm{y}} + 9\sqrt{\frac{2\log T}{n_i(t)}}T\\
        &=
        \text{OPT}_j + 9\sqrt{\frac{2\log T}{n_i(t)}}\\
        &<
        \text{OPT}_j + 3\left(2+\frac{1}{b}\right)\sqrt{\frac{2\log T}{n_i(t)}},
    \end{align*}
where the first comes from the feasibility of $\tilde{\bm{y}}'$, the second and fourth line uses the fact that $b<1,$ and the third line comes from the definition of OPT$_j.$ By rearranging terms, we complete the proof of \eqref{optj_converge}.

\paragraph{Proof of inequality \eqref{optl_converge}:}
The idea of the proof is to first show that 
$$\left(1-3\sqrt{\frac{2\log T}{n_i(t)b^2}}\right)\bm{x}^* $$ is a feasible solution to the LP corresponding to OPT$_{\text{LP}}^L(t)$ under condition \eqref{opt_conv_cond} and then analyze its corresponding objective value.
    
First, to see its feasibility, the condition \eqref{opt_conv_cond} ensures that 
$$\left(1-3\sqrt{\frac{2\log T}{n_i(t)b^2}}\right)\bm{x}^{*}\geq \bm{0}.$$
On the other hand, for the constraint $j\in[d],$
\begin{align*}
            B-\left(1-3\sqrt{\frac{2\log T}{n_i(t)b^2}}\right)({\bm{C}}^{U}(t)\bm{x}^*)_j\nonumber
            =
            &B-({\bm{C}}^{U}(t)\bm{x}^*)_j+3\sqrt{\frac{2\log T}{n_i(t)b^2}}({\bm{C}}^{U}(t)\bm{x}^*)_j\nonumber\\
            = 
            &B-(\bm{C}\bm{x}^*)_j+(({\bm{C}}^{U}(t)-\bm{C})\bm{x}^*)_j
            +
            3\sqrt{\frac{2\log T}{n_i(t)b^2}}(\bm{C}\bm{x}^*)_j\nonumber\\
            \geq
            &B-(\bm{C}\bm{x}^*)_j-\|({\bm{C}}^{U}(t)-\bm{C})_{j,\cdot}\|_{\infty}\|\bm{x}^*\|_1\nonumber
            +3\sqrt{\frac{2\log T}{n_i(t)b^2}}(\bm{C}\bm{x}^*)_j\nonumber\nonumber\\
            \geq
          &B-(\bm{C}\bm{x}^*)_j+3\sqrt{\frac{2\log T}{n_i(t)}}\left(
       \frac{1}{b}(\bm{C}\bm{x}^*)_j-T\nonumber
            \right).
        \end{align*}
Here the first and second equality comes from rearranging terms and the last two line follow a similar analysis as before.

By rearranging terms in the last line of above,
        \begin{align*}
            B-\left(1-3\sqrt{\frac{2\log T}{n_i(t)b^2}}\right)({\bm{C}}^{U}(t)\bm{x}^*)_j
            \geq
            &B-(\bm{C}\bm{x}^*)_j+3\sqrt{\frac{2\log T}{n_i(t)}}\left(
                \frac{1}{b}(\bm{C}\bm{x}^*)_j-T
            \right)\\
            =
            &-\left(1-3\sqrt{\frac{2\log T}{n_i(t)b^2}}\right)(\bm{C}\bm{x}^*)_j+B-
                3\sqrt{\frac{2\log T}{n_i(t)}}T\\
            \geq
                &0.
        \end{align*}
Here the first from last line comes from rearranging terms and the last line comes from the feasibility of $\bm{x}^*.$ In this way, we obtain the feasibility part. 

Given the feasibility, we can prove the inequality \eqref{optl_converge} with a similar roadmap as before
        \begin{align*}
            \text{OPT}^{L}_{\text{LP}}(t)
            &\geq
            \left(1-3\sqrt{\frac{2\log T}{n_i(t)b^2}}\right)({{\bm{\mu}}}^{L}(t))^{\top}\bm{x}^*\\
            &=
            \bm{\mu}^{\top}\bm{x}^* + \left({{\bm{\mu}}}^{L}(t)-\bm{\mu}\right)^{\top}\bm{x}^*-
            3\sqrt{\frac{2\log T}{n_i(t)b^2}}({{\bm{\mu}}}^{L}(t))^{\top}\bm{x}^*\\
            &\geq
          \text{OPT}_{\text{LP}}-\left\|{{\bm{\mu}}}^{L}(t)-\bm{\mu}\right\|_{\infty}\|\bm{x}^*\|_{1}-     
            3\sqrt{\frac{2\log T}{n_i(t)b^2}}({{\bm{\mu}}}^{L}(t))^{\top}\bm{x}^*\\
            &\ge     \text{OPT}_{\text{LP}}-3\left(1+\frac{1}{b}\right)\sqrt{\frac{2\log T}{n_i(t)}}T.
        \end{align*}
Thus we complete the proof of inequality \eqref{optl_converge}. 
\end{proof}

\section{Analysis of Phase II of Algorithm \ref{alg_BwK}}
\renewcommand{\thesubsection}{C\arabic{subsection}}

In this section, we analyze Phase II of Algorithm \ref{alg_BwK} and prove Proposition \ref{prop_B_tau}. To simplify the presentation, we first introduce a notion of ``warm start'' and prove Proposition \ref{prop_B_tau} under this warm start condition and also, the assumption that all the constraints are binding. Indeed, recall that the upper bound in Proposition \ref{prop_upper_bound} only involves the remaining resource for binding constraints. The assumption is only aimed to simplify the presentation but will not change the natural of our analysis. Later in Section \ref{sectionWarmStart} and Section \ref{ana_non_binding}, we discuss how the warm start condition is achieved and what if there are non-binding constraints, respectively. 

\subsection{Proof of Proposition \ref{prop_B_tau} under Warm Start Assumption}

\label{MainProof}
We first define the following quantity
 $$ \theta=\min\left\{\frac{\min\{1,\sigma^2\}\min\{\chi,\delta\}}{12\min\{m^2,d^2\}},\left(2+\frac{1}{b}\right)^{-2}\cdot\frac{\delta}{5}
 \right\}.$$
 
In the following, we will use $\theta$ to define a threshold for parameter estimation. As we will see later, the threshold is mainly to ensure that the estimated constraint matrix $\bm{C}^{L}(t)$ is well-positioned (with a minimum singular value of $\sigma/2$).

\begin{assumption}[Warm Start]
We make the following assumption on the parameter estimates
\begin{itemize}
    \item[(a)] $\bm{\mu}^{U}(t)\geq\bm{\mu},\bm{C}^{L}(t)\leq\bm{C}$ holds element-wise for all $t\in[T]$.
    \item[(b)] The following condition holds for all $i\in\mathcal{I}^*$ and $t\in[T]$
        \begin{align}
        \label{warm_start}
            &\|{\bm{c}}^{L}_{i}(t)-\bm{c}_i\|_{\infty}\leq\theta.
        \end{align}
    \end{itemize}
    \label{warmStart}
\end{assumption}

Here Part (a) of the Assumption restricts to the concentration event in Lemma \ref{paramEst}. Part (b) is to ensure that the estimated LP with constraint matrix $\bm{C}^L(t)$ is well-positioned. In the following Section \ref{sectionWarmStart}, we will show that the assumption can be met automatically in Algorithm \ref{alg_BwK} and we discuss the number of time periods it takes to satisfy this assumption.

\begin{assumption}[All optimal arms and binding constraints]
We assume that all the arm are optimal and all the constraints are binding, i.e., 
$$\mathcal{I}^*=[m], \mathcal{J}^*=d.$$
\label{assume_all_binding}
\end{assumption}

In fact, Assumption \ref{assume_all_binding} is not as strong as it seems to be. First, in Algorithm \ref{alg_BwK}, we have shown that Phase I identifies optimal arms and binding constraints with high probability in Proposition \ref{PhaseI_UB}. Then, for the LP \eqref{adaptLP} solved in Phase II of the algorithm, it restricts the arm play to the optimal arm as if all the arms are all optimal ones. Second, from Proposition \ref{prop_upper_bound}, we know the generic upper bound only involves the remaining resources for the binding constraints, so this assumption allows us to focus on the binding constraints. To make the analysis go through, we only need to show that the non-binding resources are not exhausted before any binding resource is exhausted. Intuitively, this will always happen because the non-binding resources have a slackness. Later in Section \ref{ana_non_binding}, we will return to analyze this case of non-binding resources. Now, we prove Proposition \ref{prop_B_tau} under this two additional assumptions.

\paragraph{Proof of Proposition \ref{prop_B_tau} under Assumption \ref{warmStart} and \ref{assume_all_binding}}

We show that under Assumption \ref{warmStart} and \ref{assume_all_binding}, there exists a constant $\underline{T}$ depending on $m$, $d$, $\sigma$, $\chi$ and $\delta$ such that, for any $T\geq\underline{T}$,  
the left-over of each binding resource (at time $\tau$) $j\in\mathcal{J}^*$ satisfies
    $$\E\left[B_{j}^{(\tau)}\right] = O\left(\frac{d^3}{b\min\{\chi^2, \delta^2\}\min\{1,\sigma^2\}}\right).$$

\begin{proof}
Our proof basically formalize the idea described in Section 5.2. Recall that the average remaining resource
$$\bm{b}^{(t)}\coloneqq\frac{\bm{B}^{(t)}}{T-t}$$
    where $\bm{b}^{(t)}=\left(b_{1}^{(t)},...,b_{d}^{(t)}\right)^\top$ for $t=0,...,T-1.$ Define 
$$\mathcal{Z} \coloneqq  [b-\epsilon,b+\epsilon]^d \subset \mathbb{R}^d$$
where $\epsilon>0$ will be specified later. Ideally, the algorithm consumes approximately $\bm{b}$ resource per time period so that the average remaining resource $\bm{b}^{(t)}$ will always stay in $\mathcal{Z}$. This motivates the definition of a stopping time that represents the first time that $\bm{b}^{(t)}$ escapes from $\mathcal{Z}$.
$$\tau' \coloneqq \min\{t:\bm{b}^{(t)} \notin \mathcal{Z}\}.$$
Recall that the stopping time $\tau$ is the termination of the procedure. Thus, as long as $\epsilon<b$, we have $\tau'\leq \tau$ since no resource is used up at time $\tau'-1$. 
Furthermore, we define the following event, for each $t\in[T]$,
\begin{align*}
    \mathcal{E}_t \coloneqq  &\left\{\bm{b}^{(s)}\in\mathcal{Z}\text{ for each $s\in[t-1]$} \ \text{ and }\ \sup_{\bm{b}'\in \mathcal{Z}}\left\|\mathbb{E}[\bm{c}_{i_{t}}(\bm{b}')|\mathcal{H}_{t-1}]-\bm{b}'\right\|_\infty > \epsilon_t \right\}
\end{align*}
where $\mathcal{H}_{t}=\{(\bm{r}_s, \bm{C}_s)\}_{s=1}^t$ is the filtration up to time $t$, $i_t$ is the arm played by Algorithm \ref{alg_BwK} in time period $t$ during Phase II, and $\bm{c}_{i_t}(\bm{b}')$ is the expected resource consumption of the $i_t$-th arm. The threshold parameters $\epsilon_t$ will be specified later. We remark that in Algorithm \ref{alg_BwK}, the choice of $i_t$ (and its distribution) is dependent on the remaining resource $\bm{b}^{(t-1)}$. Here we represent $\bm{b}^{(t-1)}$ with $\bm{b}'$, and thus the second term captures the maximal ``bias'' of resource consumption when the average remaining resource $\bm{b}^{(t-1)}=\bm{b}'$, where the maximum is taken with respect to all $\bm{b}'\in \mathcal{Z}.$ 

At time $t$, if $\bm{b}^{(t-1)}=\bm{b}'$ for some $\bm{b}'\in \mathcal{Z},$ there are in total $(n-t+1)\bm{b}'$ remaining resources and $n-t+1$ remaining time periods. Ideally, we hope that the resource is consumed for exactly $\bm{b}'$ in each of the remaining time period. In this sense, the event $\mathcal{E}_t$ describes that the average resource level $\bm{b}_s$ stays in $\mathcal{Z}$ for the first $t-1$ time periods and the consumption bias at time $t$ is greater than $\epsilon_t$.

The following lemma states that with a careful choice of $\epsilon$ and $\epsilon_t$, the event $\mathcal{E}_t$ will happen with a small probability. 
\begin{lemma}
\label{lemma:random_resource_stability}
Let $\alpha = \frac{\min\{1,\sigma\}\min\{\chi,\delta\}b}{d^{\frac{2}{3}}}$. If we choose $\epsilon \coloneqq \frac{\alpha}{5}$ and 
    $$
    \epsilon_t \coloneqq \begin{cases}
    \frac{6}{5},& t \leq  \frac{\alpha T}{19}, \\
    \frac{11b\sqrt{\min\{m^3,d^3\}\log T}}{\sqrt{\min\{\chi\sigma^2,\delta\sigma^2\}t}}, & t > \frac{\alpha T}{19},
    \end{cases}
    $$
    then there exists a constant $\underline{T}$ such that,
    $$
        \mathcal{P}\left(
            \bigcup\limits_{t=1}^{T} \mathcal{E}_t
        \right)
        \leq
        \frac{\min\{m,d\}}{T^3}.
    $$
    for all $T\geq\underline{T}$.
\end{lemma}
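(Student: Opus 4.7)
The plan is to partition the time index at $t^\star = \lfloor \alpha T/19 \rfloor$ and handle the two regimes separately on top of the high-probability concentration event of Lemma \ref{paramEst} (whose complement contributes an $O(md/T^2)$ term that is absorbed into the final constants). For $t \leq t^\star$ the threshold $\epsilon_t = 6/5$ is vacuous: because each $C_{j,t} \in [0,1]$ and $\bm{b}' \in [b-\epsilon, b+\epsilon]^d$ with $\epsilon = \alpha/5 \leq 1/5$ (using $\alpha \leq b \leq 1$), every coordinate of $\E[\bm{c}_{i_t}(\bm{b}')\mid \mathcal{H}_{t-1}] - \bm{b}'$ has absolute value at most $\max(1-b+\epsilon,\, b+\epsilon) \leq 1+\epsilon \leq 6/5$, so $\mathcal{E}_t$ is deterministically empty in this regime and contributes zero to the union bound.

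For $t > t^\star$ I would use LP sensitivity. By Assumptions \ref{ass_LP} and \ref{assume_all_binding}, $m=d$ and $\bm{C}$ is square and invertible with smallest singular value $\sigma$. The warm-start Assumption \ref{warmStart} with the chosen value of $\theta$ forces $\bm{C}^L(t-1)$ to agree with $\bm{C}$ entrywise within $\theta$, preserving invertibility with smallest singular value at least $\sigma/2$. Consequently the LP \eqref{adaptLP} has all constraints binding at optimum, and $\tilde{\bm{x}} = (\bm{C}^L(t-1))^{-1}\bm{B}^{(t-1)}$. Setting $\tilde{\bm{p}} = \tilde{\bm{x}}/\|\tilde{\bm{x}}\|_1$, the identity
\[
\bm{C}\tilde{\bm{x}} - \bm{B}^{(t-1)} = (\bm{C} - \bm{C}^L(t-1))\tilde{\bm{x}},
\]
together with the fact that the null/time-constraint row of $\bm{C}$ is known exactly (so $\|\tilde{\bm{x}}\|_1 = B_1^{(t-1)}/b$, close to $T-t+1$ because $\bm{b}^{(t-1)} \in \mathcal{Z}$), yields a bound of the form
\[
\|\bm{C}\tilde{\bm{p}} - \bm{b}^{(t-1)}\|_\infty \;\leq\; \kappa(m,\sigma,b)\cdot \max_{i \in \mathcal{I}^*}\|\bm{c}_i^L(t-1) - \bm{c}_i\|_\infty,
\]
reducing the bias bound to a concentration bound on $\bm{c}_i^L(t-1)$.

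The key step, and the one I expect to be the main obstacle, is bootstrapping the constant warm-start error $\theta$ into the $O(\sqrt{\log T/t})$ rate that the definition of $\epsilon_t$ requires. For this I would argue: under warm start and on the event $\bm{b}^{(s)} \in \mathcal{Z}$ for all $s \leq t-1$, a second LP-sensitivity argument using $\chi$ and $\sigma$ shows that the normalized LP solution at each such step satisfies $\tilde{p}_i \geq \chi/2$ for every $i \in \mathcal{I}^*$. A conditional Azuma--Hoeffding bound on the martingale $\sum_s(\mathbf{1}\{i_s = i\} - \tilde{p}_i\text{ at step }s)$ then gives $n_i(t-1) \geq (\chi/4)(t-1)$ with probability at least $1 - \exp(-\Omega(\chi^2 t))$, which is $o(T^{-3}/m)$ for $t > t^\star$. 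Plugging this lower bound on $n_i(t-1)$ into Lemma \ref{paramEst} upgrades the concentration of $\bm{c}_i^L(t-1)$ to the desired $O(\sqrt{\log T/t})$ rate, and substituting into the sensitivity inequality above produces the stated expression for $\epsilon_t$ after tracking the $m^{3/2}/\sigma$ dependence in $\kappa$.

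A union bound over $t \leq T$ and over the $\min\{m,d\}$ indices that appear in each step's concentration event then yields $\prob(\bigcup_{t=1}^T \mathcal{E}_t) \leq \min\{m,d\}/T^3$, provided $T \geq \underline{T}$ for a constant $\underline{T}$ that absorbs the polylogarithmic and polynomial-in-$m,d,1/\sigma,1/\chi,1/b$ factors. The delicate part is that the three ingredients of the large-$t$ argument --- invariance $\bm{b}^{(s)} \in \mathcal{Z}$, invertibility of $\bm{C}^L(s-1)$ with singular value $\geq \sigma/2$, and the play-count lower bound $n_i(s-1) \geq (\chi/4)(s-1)$ --- must all be maintained simultaneously on a single high-probability event. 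The specific numerical choices of $\alpha$, $\epsilon$, $\theta$, and $\epsilon_t$ in the statement are exactly what is needed to close this loop with matching constants.
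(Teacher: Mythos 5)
Your proposal follows essentially the same route as the paper's proof: for $t$ below the threshold the bias is trivially bounded by $6/5$ so $\mathcal{E}_t$ is empty, and for larger $t$ you use the closed-form solution $\tilde{\bm{x}}=(\bm{C}^L(t-1))^{-1}\bm{B}^{(t-1)}$ under the warm-start and all-binding assumptions, lower-bound each arm's play probability by a constant multiple of $\chi$, apply Azuma--Hoeffding to get $n_i(t)=\Omega(\chi t)$, and feed that back into the concentration bound to obtain the $O(\sqrt{\log T/t})$ bias rate before taking a union bound. The argument is correct and matches the paper's decomposition, differing only in bookkeeping constants.
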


The proof of Lemma \ref{lemma:random_resource_stability} will be postponed to the end of this proof. The following lemma considers the complements of the event $\mathcal{E}_t$'s and analyze the stopping time on their complements. Here we use $\mathcal{E}^{\complement}$ to denote the complement of an event $\mathcal{E}.$

\begin{lemma}
    \label{lemma:tilde_tau_control}
 If $\epsilon$, $\bar{\epsilon}$, $\alpha$ and $\epsilon_t$ satisfy that $\epsilon_t=\bar{\epsilon}$ for all $t\leq \alpha T$ and
    \begin{align}
      \frac{\alpha\bar{\epsilon}}{1-\alpha}+\sum\limits^{T-1}_{t=\alpha T+1}\frac{\epsilon_t}{T-t}\leq\frac{2{\epsilon}}{3}, \label{eps_condition}   
    \end{align}
    the following inequality holds
    \begin{align*}
        &\mathbb{P}\left(
            b^{(s)}_j\not\in[b-\epsilon,b+\epsilon] \text{ for some $s\leq t$, }
            \bigcap\limits_{s=1}^{t}\mathcal{E}_s^{\complement}
        \right)
        \leq
        2\exp\left\{-\frac{(T-t+1)\epsilon^2}{18(1+\epsilon)^2}\right\},    
    \end{align*}
    for $t\le T-2$ and $j\in[d]$, 
    and therefore,
    $$
        \mathbb{P}\left(
           \tau'\le t, \bigcap\limits_{s=1}^{t}\mathcal{E}_s^{\complement}
        \right)
        \leq
        2d\exp\left\{-\frac{(T-t+1)\epsilon^2}{18(1+\epsilon)^2}\right\}.
    $$
\end{lemma}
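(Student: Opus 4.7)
The strategy is to fix $j\in[d]$, establish a per-coordinate estimate, and then obtain the bound on $\tau'$ by a union bound over $j$. Iterating the one-step recursion \eqref{dynamic_b} and splitting each summand into its martingale and predictable parts gives
\begin{align*}
b_j^{(t)} - b \;=\; -N_t - \tilde B_t, \qquad N_t\coloneqq \sum_{s=1}^{t}\frac{\xi_s}{T-s}, \qquad \tilde B_t\coloneqq \sum_{s=1}^{t}\frac{\tilde\beta_s}{T-s},
\end{align*}
where $\xi_s\coloneqq C_{j,s}-\mathbb{E}[C_{j,s}\mid\mathcal{H}_{s-1}]$ is a bounded martingale-difference sequence and $\tilde\beta_s\coloneqq \mathbb{E}[C_{j,s}\mid\mathcal{H}_{s-1}]-b_j^{(s-1)}$ is precisely the one-step bias the event $\mathcal{E}_s$ is designed to control. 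The deviation of the remaining-resource process thus splits cleanly into a martingale with increments bounded by $1/(T-s)$ and a predictable bias whose magnitude is governed by the thresholds $\epsilon_s$.

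The first step is to control $\tilde B_t$ deterministically on the good event. On $\bigcap_{s=1}^{t}\mathcal{E}_s^{\complement}$, as long as $\bm{b}^{(s')}\in\mathcal{Z}$ for all $s'<s$---which holds automatically before the first exit time---the supremum inside $\mathcal{E}_s$ forces $|\tilde\beta_s|\le \epsilon_s$. Splitting the sum $\sum_{s=1}^{t}\epsilon_s/(T-s)$ at $s=\alpha T$, the early block is bounded by $\bar{\epsilon}\sum_{s=1}^{\min(\alpha T, t)}1/(T-s)\le \alpha T\bar{\epsilon}/(T-\alpha T)=\alpha\bar{\epsilon}/(1-\alpha)$, while the late block is bounded by $\sum_{s=\alpha T+1}^{T-1}\epsilon_s/(T-s)$. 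Hypothesis \eqref{eps_condition} then yields $|\tilde B_s|\le 2\epsilon/3$ uniformly for $s\le t$.

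The remaining task is a high-probability bound on $\max_{s\le t}|N_s|$. Since $b_j^{(s-1)}\in[b-\epsilon,b+\epsilon]$ up to the exit, one has $|C_{j,s}-b_j^{(s-1)}|\le 1+\epsilon$ on the stopped trajectory, so the stopped martingale has differences bounded by $(1+\epsilon)/(T-s)$; a simple telescoping of $\sum_{s=1}^{t}1/(T-s)^2$ yields a bound of order $1/(T-t+1)$. A maximal Azuma--Hoeffding inequality applied to the stopped martingale then produces $\mathbb{P}(\max_{s\le t}|N_s|>\epsilon/3)\le 2\exp\{-(T-t+1)\epsilon^2/(18(1+\epsilon)^2)\}$. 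Since $|b_j^{(s)}-b|>\epsilon$ combined with $|\tilde B_s|\le 2\epsilon/3$ forces $|N_s|>\epsilon/3$, this completes the per-coordinate estimate; summing the bound over $j\in[d]$ delivers the corollary for $\tau'$.

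The main technical subtlety is that the bias bound encoded in $\mathcal{E}_s^{\complement}$ is only guaranteed when no coordinate has yet exited $\mathcal{Z}$, which is essentially the event we wish to avoid. The clean way to resolve this apparent circularity is to work throughout with the stopped martingale $N_{s\wedge\tau'}$, so that the deterministic bias estimate is valid on the entire stopped trajectory; decomposing $\{b_j^{(s)}\notin[b-\epsilon,b+\epsilon]\text{ for some }s\le t\}\cap\bigcap_{s=1}^{t}\mathcal{E}_s^{\complement}$ according to which coordinate exits first, each piece is then controlled by the martingale estimate above, and the union bound over $j$ yields the factor of $d$ in the final corollary.
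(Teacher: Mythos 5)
Your proof is correct and follows essentially the same route as the paper's: both decompose $b_j^{(t)}-b$ via the recursion \eqref{dynamic_b} into a martingale part (controlled by a maximal Azuma--Hoeffding bound with increments $O((1+\epsilon)/(T-s))$) and a predictable bias part (bounded deterministically by $2\epsilon/3$ on $\bigcap_s\mathcal{E}_s^{\complement}$ via condition \eqref{eps_condition}), then conclude with a union bound over $j\in[d]$. Your explicit use of the stopped martingale to resolve the circularity is exactly what the paper does implicitly through its stopped process $\tilde{b}_j^{(t)}$ and the case split in the definition of $\beta_t$.
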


The proof of Lemma \ref{lemma:tilde_tau_control} will also be postponed to the end of this proof.

With Lemma \ref{lemma:random_resource_stability} and \ref{lemma:tilde_tau_control}, we can derive an upper bound on $\E[T-\tau'].$ Specifically,
\begin{align*}
         \mathbb{P}\left(
            \tau'\leq t
         \right)
         =
         &\mathbb{P}\left(
           \tau'\leq t, \bigcap\limits_{s=1}^{t}\mathcal{E}_t^{\complement}
         \right)     
         +
         \mathbb{P}\left(
            \tau'\leq t, \bigcup\limits_{s=1}^{t}\mathcal{E}_t
         \right)         \\
        \leq
         &\mathbb{P}\left(
           \tau'\leq t, \bigcap\limits_{s=1}^{t}\mathcal{E}_t^{\complement}
         \right)     
         +
         \mathbb{P}\left(
            \bigcup\limits_{s=1}^{t}\mathcal{E}_t
         \right)
         \\
         \leq
         &2d\exp\left\{-\frac{\min\{1,\sigma^2\}\min\{\chi^2,\delta^2\}b^2\cdot(T-t-1)}{650d^{3}}\right\}+
        \frac{\min\{m,d\}}{T^3} ,
     \end{align*}
where the two parts in the third line come from the Lemma \ref{lemma:random_resource_stability} and Lemma \ref{lemma:tilde_tau_control}, respectively.
Then, if we denote $u \wedge v = \min \{u,v\}$, we have
\begin{align*}
    \E[T-\tau'] \le & \sum_{t=1}^T  \mathbb{P}\left( \tau'\leq t\right) \\
     \le &\sum_{t=1}^{T} 1 \wedge \left( \frac{\min\{m,d\}}{T^3}+2d\exp\left\{-\frac{\min\{1,\sigma^2\}\min\{\chi^2,\delta^2\}b^2\cdot(T-t-1)}{650d^{3}}\right\}\right)\\
     \le &\frac{650d^3}{b^2\min\{\chi^2,\delta^2\}\min\{1,\sigma^2\}}+\frac{\min\{m,d\}}{T^2}.
\end{align*}
Thus, since $\tau'\leq\tau$ and $\bm{B}^{(\tau')}\leq(T-\tau')\cdot(\bm{b}+\epsilon\bm{1})\leq(T-\tau')\cdot\frac{6}{5}\bm{b}$, we have
\begin{align*}
    \mathbb{E}[\bm{B}^{(\tau)}]
    \leq&
    \mathbb{E}[\bm{B}^{(\tau')}]\\
    =&
    O\left(\frac{d^3}{b\min\{\chi^2,\delta^2\}\min\{1,\sigma^2\}}\right).
\end{align*}
Thus we complete the proof.
\end{proof}

\subsection{Proofs of Lemma \ref{lemma:random_resource_stability} and Lemma \ref{lemma:tilde_tau_control}}

In this subsection, we prove the two key lemmas in the previous section. The proof of Lemma \ref{lemma:random_resource_stability} will utilize the following lemma. The statement of Lemma \ref{adaptLP_sol} is quite straightforward: under the assumption that the parameters are well estimated, that all the arms are optimal, and that all the constraints are binding, the optimal solution to LP \eqref{adaptLP} can be simply obtained by $\left(\bm{C}^{L}(t)\right)^{-1}\bm{b}^{(t)}$. Its proof is solely based on linear algebra knowledge, but it is a bit tedious, for which we will postpone to after the proof of Lemma \ref{lemma:random_resource_stability} and Lemma \ref{lemma:tilde_tau_control}. 

\begin{lemma}
\label{adaptLP_sol}
Under Assumption \ref{warmStart} and Assumption \ref{assume_all_binding}, if $\bm{b}^{(t)}\in\mathcal{Z}$, we have $\hat{\bm{x}}(t)=\left(\bm{C}^{L}(t)\right)^{-1}\bm{B}^{(t)}$ is the optimal solution of LP \eqref{adaptLP} at time $t+1$.
\end{lemma}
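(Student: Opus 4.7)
The plan is to produce a matching primal-dual pair $(\hat{\bm{x}}(t),\hat{\bm{y}}(t))$ for LP \eqref{adaptLP} whose slacknesses vanish by construction, and then invoke LP duality to certify optimality of $\hat{\bm{x}}(t)$. Throughout, I would exploit Assumption \ref{assume_all_binding} together with Assumption \ref{ass_LP}: non-degeneracy gives $|\mathcal{I}^*|=|\mathcal{J}^*|$, so $\mathcal{I}^*=[m]$ and $\mathcal{J}^*=[d]$ force $m=d$, which makes $\bm{C}$ and $\bm{C}^L(t)$ square $d\times d$ matrices and renders the constraint ``$x_i=0$ for $i\notin\mathcal{I}^*$'' in LP \eqref{adaptLP} vacuous. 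The substantive work is then (i) showing $\bm{C}^L(t)$ is invertible, (ii) verifying $\hat{\bm{x}}(t)\ge\bm{0}$, and (iii) exhibiting a non-negative dual feasible $\hat{\bm{y}}(t)$ that is tight on every coordinate.

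For invertibility, the entrywise bound $\|\bm{c}_i^L(t)-\bm{c}_i\|_\infty\le\theta$ from Assumption \ref{warmStart}(b) yields $\|\bm{C}^L(t)-\bm{C}\|_{\mathrm{op}}\le d\theta\le\sigma/2$ by the choice of $\theta$, so Weyl's inequality gives $\sigma_{\min}(\bm{C}^L(t))\ge\sigma/2$. Hence $\hat{\bm{x}}(t)=(\bm{C}^L(t))^{-1}\bm{B}^{(t)}$ is well-defined and tautologically satisfies $\bm{C}^L(t)\hat{\bm{x}}(t)=\bm{B}^{(t)}$. To check $\hat{\bm{x}}(t)\ge\bm{0}$, I would use the resolvent identity
$$\hat{\bm{x}}(t)-\bm{C}^{-1}\bm{B}^{(t)}=(\bm{C}^L(t))^{-1}\bigl(\bm{C}-\bm{C}^L(t)\bigr)\bm{C}^{-1}\bm{B}^{(t)},$$
and compare the benchmark $\bm{C}^{-1}\bm{B}^{(t)}=(T-t)\bm{C}^{-1}\bm{b}^{(t)}$ with $(T-t)\bm{x}^*/T$ (close because $\bm{b}^{(t)}\in\mathcal{Z}$), whose entries are at least on the order of $(T-t)\chi$ by the definition of $\chi$. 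The perturbation's operator-norm bound of order $d\theta/\sigma$ times this benchmark is strictly smaller than the minimum entry thanks to the $\sigma^2\chi/d^2$ scaling inside $\theta$, so every entry of $\hat{\bm{x}}(t)$ remains strictly positive.

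In parallel, I would set $\hat{\bm{y}}(t)=((\bm{C}^L(t))^\top)^{-1}\bm{\mu}^U(t)$ and run the same perturbation recipe against the true dual $\bm{y}^*=(\bm{C}^\top)^{-1}\bm{\mu}$, which has strictly positive entries since every constraint is binding (Proposition \ref{OPTij}); the positivity margin is absorbed by the $\delta$-factor inside $\theta$, yielding $\hat{\bm{y}}(t)\ge\bm{0}$. Dual feasibility is tight by construction: $(\bm{C}^L(t))^\top\hat{\bm{y}}(t)=\bm{\mu}^U(t)$. Since both $\bm{B}^{(t)}-\bm{C}^L(t)\hat{\bm{x}}(t)=\bm{0}$ and $\bm{\mu}^U(t)-(\bm{C}^L(t))^\top\hat{\bm{y}}(t)=\bm{0}$, complementary slackness holds automatically, and strong duality then certifies $\hat{\bm{x}}(t)$ as optimal.

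The main obstacle I expect is the dual non-negativity step: the primal lower bound is transparently $(T-t)\chi$, but translating the sub-optimality measure $\delta$ into a quantitative lower bound on $\min_j y_j^*$ requires invoking the structure of LP \eqref{opt_j} and the sensitivity of $\text{OPT}_{\text{LP}}$ to right-hand-side perturbations. The delicate bookkeeping is to simultaneously control all three sources of perturbation — the matrix $\bm{C}^L(t)$, the right-hand-side $\bm{B}^{(t)}$ (indirectly via $\bm{b}^{(t)}\in\mathcal{Z}$), and the objective $\bm{\mu}^U(t)$ — which is precisely what the definition of $\theta$ is tailor-made for.
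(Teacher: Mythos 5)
Your feasibility half (invertibility of $\bm{C}^L(t)$ via Weyl plus the warm-start bound, then non-negativity of $\hat{\bm{x}}(t)$ by perturbing off $\bm{x}^*\ge\chi T$ through a resolvent-type identity) is essentially identical to the paper's argument. The optimality half, however, takes a different route — an explicit dual certificate $\hat{\bm{y}}(t)=((\bm{C}^L(t))^\top)^{-1}\bm{\mu}^U(t)$ — and this is where there is a genuine gap. Your argument needs $\hat{\bm{y}}(t)\ge\bm{0}$, which you propose to get by perturbing off $\bm{y}^*=(\bm{C}^\top)^{-1}\bm{\mu}>\bm{0}$. But you then need a \emph{quantitative} lower bound on $\min_{j\in\mathcal{J}^*}y_j^*$ that dominates the perturbation, and no such bound is available from the parameters $(\delta,\chi,\sigma,b)$ that enter $\theta$. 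The measure $\delta$ is built from $\text{OPT}_i$ for $i\in\mathcal{I}^*$ and $\text{OPT}_j$ for $j\in\mathcal{J}'$; for binding constraints $\text{OPT}_j=\text{OPT}_{\text{LP}}$, so $\delta$ carries no direct information about how far the binding dual prices are from zero. Your claim that ``the positivity margin is absorbed by the $\delta$-factor inside $\theta$'' therefore presupposes an inequality of the form $\min_{j\in\mathcal{J}^*}y_j^*\gtrsim\delta$ that you neither state nor prove, and which would itself require a separate sensitivity argument (you flag this as the ``main obstacle'' but do not resolve it). A secondary issue: your dual perturbation also requires control of $\|\bm{\mu}^U(t)-\bm{\mu}\|_\infty$, whereas Assumption \ref{warmStart}(b) as stated only bounds the error in the columns of $\bm{C}^L(t)$.

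The paper avoids touching the magnitude of $\bm{y}^*$ altogether. It shows instead that $\tilde{\text{OPT}}^U_i(t)<\tilde{\text{OPT}}^U_{\text{LP}}(t)$ for every $i\in\mathcal{I}^*$, by transferring the gap $\text{OPT}_i\le\text{OPT}_{\text{LP}}-\delta T$ to the estimated, re-scaled LPs via Lemma \ref{lem:conv}-style perturbation bounds together with $\|\bm{b}^{(t)}-\bm{b}\|_\infty\le\epsilon$ (the dual of the estimated LP is only used to bound $\|\tilde{\bm{y}}\|_\infty\le 1/b$, a trivial upper bound, never a lower bound). This forces every coordinate of any optimal solution of \eqref{adaptLP} to be strictly positive; since $m=d$ and the optimum is attained at a vertex of a bounded polytope, all $d$ resource constraints must then be tight, which pins the optimizer to $(\bm{C}^L(t))^{-1}\bm{B}^{(t)}$. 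If you want to salvage the dual-certificate route, you would need to add $\min_{j\in\mathcal{J}^*}y_j^*$ as an explicit problem parameter (or prove it is bounded below by a function of $\delta,\chi,\sigma,b$) and extend the warm-start condition to the reward estimates; as written, the non-negativity of $\hat{\bm{y}}(t)$ does not follow.
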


\paragraph{Proof of Lemma \ref{lemma:random_resource_stability}}
\begin{proof}
First, we show that each arm will be played, on expectation, no less than $\frac{\chi t}{5}$ times in the first $t$ time periods. Denote the normalized probability used to play arm at time $t+1$ as $\tilde{\bm{x}}(t),$ then by the definition of $\bm{b}^{(t)}$, we know $$\tilde{\bm{x}}(t) = \left({\bm{C}}^{L}(t)\right)^{-1}\bm{b}^{(t)}.$$
Here, from Assumption \ref{assume_all_binding}, we know that $m=d$ in this case, so the constraint consumption matrix ${\bm{C}}^{L}(t)$ is a squared matrix. By the Lemma \ref{adaptLP_sol} and the definition of $\mathcal{Z},$ if $\bm{b}^{(t)}\in \mathcal{Z},$
  \begin{align}
        \bm{C}^{-1}\bm{b}^{(t)}
        &\geq
        \bm{C}^{-1}{\bm{b}}
        -\left\| \bm{C}^{-1}\left(\bm{b}-\bm{b}^{(t)}\right)\right\|_{\infty} \nonumber \\
        &\geq
        \chi-\| \bm{C}^{-1}\|_{\infty}\left\|\left(\bm{b}-\bm{b}^{(t)}\right)\right\|_{\infty}\nonumber \\
        &\geq
        \chi-\frac{\sqrt{d}}{\sigma}\frac{\chi}{5d^{3/2}}
        \geq
        \frac{4\chi}{5}, \label{x_t_part1}
   \end{align}  
where the first line comes from the triangle inequality, the second line comes from the definition of $\chi$, and the third line is obtained by the relation between the singular value and matrix infinity norm. Now we can transfer the bound to $\tilde{\bm{x}}(t).$ From warm start condition \eqref{warm_start} that
$$
\|{\bm{c}}^{L}_{i}(t)-\bm{c}_i\|_{\infty} \leq\frac{\min\{1,\sigma^2\}\min\{\chi,\delta\}}{12\min\{md,d^2\}},\ i\in\mathcal{I}^*,
$$
with the similar proof in the first part of Lemma \ref{adaptLP_sol}, we can show
\begin{align}
    &\left\|\left({\bm{C}}^{L}(t)\right)^{-1}\bm{b}^{(t)}-\bm{C}^{-1}\bm{b}^{(t)}\right\|_{\infty}
    \leq \frac{3\chi}{5}.
    \label{x_t_part2}
\end{align}

Specifically, the warm start condition \eqref{warm_start} provides an upper bound for deviation of ${\bm{C}}^{L}(t)$ and $\bm{C}.$
Putting together \eqref{x_t_part1} and \eqref{x_t_part2}, we have 
    \begin{align*}
       \tilde{\bm{x}}(t) = &\left({\bm{C}}^{L}(t)\right)^{-1}\bm{b}^{(t)}\\
        \geq&
        \bm{C}^{-1}\bm{b}^{(t)}
        -
        \left\|\left({\bm{C}}^{L}(t)\right)^{-1}\bm{b}^{(t)}
        -\bm{C}^{-1}\bm{b}^{(t)}\right\|_{\infty}\\
        \geq&
        \frac{\chi}{5}
 \end{align*} 
where the inequality holds element-wise. This inequality provides an lower bound on the probability distribution with which the arms are played. The implication is that if $\bm{b}^{(t)}\in \mathcal{Z},$ each of the arms will be played with probability no less than $\frac{\chi}{5}.$ In this way, as long as $\bm{b}^{(t)}\in \mathcal{Z}$ always holds, all the arms will be played with sufficiently amount of time to refine the corresponding estimation.
    
Now, we apply a concentration argument to show that each arm will be played for sufficiently many times, not only on expectation, but also with high probability. Concretely, we show that if $\bm{b}^{(s)}\in \mathcal{Z}$ holds for all $s\in[t]$ and $t\ge \frac{200\log T}{\chi^2},$ then the number of times that the $i$-th arm is played up to time $t$, $$n_i(t)\ge \frac{\chi t}{10}$$ with high probability for each arm $i\in [m]$. To see this, we first notice that if $\bm{b}^{(s)}\in \mathcal{Z}$ for all $s\in[t]$, then
$$\sum_{s=1}^t\tilde{x}_i(s) \ge \frac{\chi t}{5}$$
for each $i\in [m].$
Also, note that $n_i(t)-n_i(t-1)-\tilde{x}_i(t)$ is a martingale difference with respect to the filtration $\mathcal{H}_t$, we have the following holds for all $t\in[T],$
\begin{align*}
        &\mathbb{P}\left(
            \sum\limits_{s= 1}^{t}n_i(s)-n_i(s-1)-\tilde{x}_i(s)
            \leq
            - \sqrt{2t\log T}
        \right) \le 
        \frac{1}{T^4},
\end{align*}
by an application of Azuma–Hoeffding inequality. By rearranging terms, it becomes
\begin{align*}
        &\mathbb{P}\left(
          n_i(t)
            \geq
            \sum\limits_{s= 1}^{t}\tilde{x}_i(s)
            - \sqrt{2t\log T}
        \right) \le 
        \frac{1}{T^4},
\end{align*}
and consequently,
\begin{align}
        &\mathbb{P}\left(
          n_i(t)
            \geq
            \sum\limits_{s= 1}^{t}\tilde{x}_i(s)
            - \sqrt{2t\log T}, \text{ for some } t\in[T]
        \right) \le 
        \frac{1}{T^3}.
        \label{n_i_t_ineq}
\end{align}
On the event $$ \left\{n_i(t)
            \geq
            \sum\limits_{s= 1}^{t}\tilde{x}_i(s)
            - \sqrt{2t\log T},
            \text{ for some $t\in[T]$}
            \right\}^{\complement},$$
we know that the inequality $n_i(t)\geq\frac{\chi t}{10}$ holds if $\bm{b}^{(s)}\in\mathcal{Z}$ for all $s\in [t]$ and $t\ge \frac{200\log T}{\chi^2}.$

In other words, when $\bm{b}^{(s)}\in\mathcal{Z}$ for all $s\in [t]$, the $i$-th arm will be played for $\frac{\chi t}{10}$ with probability $1-\frac{1}{T^3}$, which implies the estimation error of the $i$-th arm is at most $\sqrt{\frac{20 \log T}{\chi t}}$ at the beginning of time $t+1$ with high probability. Then the bias in resource consumption has the following upper bound,
    \begin{align*}
        \left\|\mathbb{E}\left[\bm{C}_{i_{t+1}}(\bm{b}')|\mathcal{H}_{t}\right]-\bm{b}'\right\|_{\infty}
        =
        &\left\|\bm{C}\left(\bm{C}^{L}(t)\right)^{-1}\bm{b}'-\bm{b}'\right\|_{\infty}\\
        \leq
        & \left\|\left(\bm{C}-\bm{C}^{L}(t)\right)\left(\bm{C}^{L}(t)\right)^{-1}\bm{b}'\right\|_{\infty}\\
        \leq
        &\left\|\bm{C}-\bm{C}^{L}(t)\right\|_{\infty}\left\|\left(\bm{C}^{L}(t)\right)^{-1}\right\|_{\infty}\|\bm{b}'\|_{\infty}\\
        \leq
        &\sqrt{\frac{{20\min\{md,d^2\}\log T}}{\chi t}}\cdot\frac{2\sqrt{d}}{\sigma}\cdot\frac{6b}{5}\\
        \leq
        &\frac{11b\sqrt{\min\{md^2,d^3\}\log T}}{\sqrt{\chi\sigma^2(t+1)}} 
    \end{align*}
where the last line is defined to be $\epsilon_t$ when $t\ge \alpha T.$ Here,  the second line comes from the definition of the algorithm, the third and fourth line come from the property of matrix multiplication, and the fifth line comes from plugging in the estimation error of $\bm{C}^{L}(t)$. The choice of $\alpha$ ensures that $t\ge \alpha T \ge \frac{200 \log T}{\chi^2}$. Also, we choose $\underline{T}$ as the smallest integer such that
$$\alpha T \ge \frac{200 \log T}{\chi^2}.$$
In this way, when $T\ge \underline{T},$ we have the following two cases:
\begin{itemize}
    \item $t > \alpha T:$ We have
    $$\mathcal{E}_t
    \subset
            \left\{n_i(t)
            \geq
            \sum\limits_{s= 1}^{t}\tilde{x}_i(s)
            - \sqrt{2t\log T},
            \text{ for all $t$}
            \right\}^{\complement}, 
    $$ 
    \item $t\le \alpha T$: We have  $\bm{0}\leq\mathbb{E}\left[\bm{c}_{i_{t}}(\bm{b}')|\mathcal{H}_{t}\right]\leq \bm{1}$ and $\bm{0}\leq \bm{b}'\leq \frac{6}{5}\bm{1}$. Consequently, the following holds 
    $$\left\|\mathbb{E}\left[\bm{c}_{i_{t}}(\bm{b}')|\mathcal{H}_{t-1}\right]-\bm{b}'\right\|_{\infty}\leq\frac{6}{5}$$
    for all $\bm{b}'\in \mathcal{Z}.$ Thus it implies that $\mathcal{E}_t = \emptyset.$
\end{itemize}
Combining the two cases, when $T\ge \underline{T},$ we have
$$\bigcup\limits_{t=1}^{T}\mathcal{E}_t
        \subset
            \left\{n_i(t)
            \geq
            \sum\limits_{s= 1}^{t}\tilde{x}_i(s)
            - \sqrt{2t\log T},
            \text{ for all $t\in [T]$}
            \right\}^{\complement}
 $$
 and therefore, we know from \eqref{n_i_t_ineq},
    $$ \mathcal{P}
    \left(
        \bigcup\limits_{t=1}^{T}\mathcal{E}_t
    \right)
    \leq
    \frac{\min\{m,d\}}{T^3}.  $$
\end{proof}

\paragraph{Proof of Lemma \ref{lemma:tilde_tau_control}}

\begin{proof}
Note that
\begin{align*}
        \left\{{b}^{(s)}_j\not\in[b-\epsilon,b+\epsilon] \text{for some $s\leq t$, }
            \bigcap\limits_{s=1}^{t}\mathcal{E}_t^{\complement}\right\}
        \subset&
        \left\{
            \tilde{{b}}^{(s)}_j\not\in[b-\epsilon,b+\epsilon] \text{ for some } s\leq t
        \right\}
\end{align*}
for each constraint $j\in[d].$
Thus, we only need to analyze the event 
$$
        \left\{
            \tilde{{b}}^{(s)}_j\not\in[b-\epsilon,b+\epsilon] \text{ for some } s\leq t
        \right\}.
$$ 
Note that from telescoping sum,
\begin{align*}
 \tilde{{b}}^{(t)}_j-b & =  \tilde{{b}}^{(t)}_j-\tilde{{b}}^{(0)}_j \\
 & = \sum_{s=1}^t \left( \tilde{{b}}^{(s)}_j-\tilde{{b}}^{(s-1)}_j\right).
\end{align*}
We define $\beta_t\coloneqq\tilde{{b}}_{j}^{(t)}-\tilde{{{b}}}_{j}^{(t-1)}$, and then have
\begin{align}
 \tilde{{b}}^{(t)}_j-b & = \sum_{s=1}^t \beta_t \nonumber \\
 & = \sum_{s=1}^t \left(\beta_s-\mathbb{E}[\beta_s|\mathcal{H}_{s-1}]\right) + \sum_{s=1}^t \mathbb{E}[\beta_s|\mathcal{H}_{s-1}].
 \label{beta_decompose}
\end{align}
We remark that we do not index the process $\beta_t$ in that the analyses for all the constraints are the same, so we only focus on the analysis of a specific constraint $j$. Moreover, from the definition of $\tilde{\tau}$,
    \begin{align*}
        \beta_t
        &=\tilde{{b}}_{j}^{(t)}-\tilde{{b}}_{j}^{(t-1)}
        =
        \left\{
            \begin{matrix}
                0, & t>\tilde{\tau},\\ -\frac{1}{T-t}(C_{j,t}-\tilde{{b}}_{j}^{(t-1)}),   & t\leq\tilde{\tau}.
            \end{matrix}
        \right.
    \end{align*}

Next, we first develop a concentration argument for the first summation in \eqref{beta_decompose}. Specifically, the summand can be viewed as a martingale difference sequence. Note that $$|\beta_t-\mathbb{E}[\beta_t|\mathcal{H}_{t-1}]|\leq\frac{1+\epsilon}{T-t}$$
for $t\in[T-1].$ By applying Hoeffding's Inequality, we have 
\begin{align*}
\mathbb{P}\left(\left\vert\sum_{k=1}^s(\beta_k-\mathbb{E}\left[\beta_k|\mathcal{H}_{k-1}\right]) \right\vert \ge l \text{ for some } s\le t\right)
\leq& 2\exp\left\{-\frac{2l^2}{4\sum\limits_{s=1}^{t}\frac{(1+\epsilon)^2}{(T-t)^{2}}}\right\} \\
=&2\exp\left\{-\frac{(T-t-1)l^2}{2(1+\epsilon)^2}\right\}
\end{align*}
holds for all $t\leq T-2$ and $l \geq 0$. By taking $l=\frac{\epsilon}{3}$,  we have
    \begin{align}
        &\mathbb{P}\left(\left\vert\sum_{k=1}^s \left(\beta_k-\mathbb{E}[\beta_k|\mathcal{H}_{k-1}]\right) \right\vert \ge \frac{\epsilon}{3} \text{ for some } s\le t\right)
        \leq
        2\exp\left\{\frac{-(T-t+1)\epsilon^2}{18(1+\epsilon)^2}\right\}. \label{beta_part1}
    \end{align}

Then we analyze the second summation in \eqref{beta_decompose}. For $s\in[t]$,
    \begin{align*}
        \left\vert
            \sum\limits_{k=1}^{s} \mathbb{E}[\beta_k|\mathcal{H}_{k-1}]
        \right\vert
        &=
        \left\vert
            \sum\limits_{k=1}^{\min\{s,\tilde{\tau}\}} \mathbb{E}[\beta_k|\mathcal{H}_{k-1}]
        \right\vert        \\
        &\leq
        \sum\limits_{k=1}^{\tilde{\tau}}\left\vert
             \mathbb{E}[\beta_k|\mathcal{H}_{k-1}]
        \right\vert        \\
        &\leq
        \sum\limits_{k=1}^{\alpha T}\frac{\bar{\epsilon}}{T-k}
        +
        \sum\limits_{k=\alpha T+1}^{T-1}\frac{\epsilon_t}{T-k}\\
        &\leq
        \frac{\alpha\bar{\epsilon}}{1-\alpha}
        +
        \sum\limits_{k=\alpha T+1}^{T-1}\frac{\epsilon_t}{T-k},
    \end{align*}    
where the last line correspond to the condition of \eqref{eps_condition}. In above, the first line comes from the definition of $\tilde{\tau}$ and the third line comes from the definition of $\mathcal{E}_t$. 

So, if \eqref{eps_condition} holds, i.e.,
$$\frac{\alpha\bar{\epsilon}}{1-\alpha}
        +
        \sum\limits_{l=\alpha T+1}^{T-1}\frac{\epsilon_t}{T-l}\leq\frac{2}{3}\epsilon,$$ we have 
    \begin{equation}
               \left\vert
            \sum\limits_{k=1}^{s} \mathbb{E}[\beta_k|\mathcal{H}_{k-1}]
        \right\vert
        \leq
        \frac{2\epsilon}{3}, \label{beta_part2} 
    \end{equation}
for all $s\in[t]$. 

Finally, by combining \eqref{beta_part1} and \eqref{beta_part2} together into \eqref{beta_decompose}, we have
    \begin{align*}
        &\left\{
            \tilde{{b}}_{j}^{(s)}\not\in\left[b-\epsilon,b+\epsilon\right]\text{ for some }s\leq t
        \right\}\\
        \subset&
        \left\{
            \left|\sum\limits_{k=1}^{s} \beta_k\right|\geq\epsilon \text{ for some }s\leq t
        \right\}\\
        \subset&
        \left\{
            \left\vert\sum_{k=1}^s \left(\beta_k-\mathbb{E}[\beta_k|\mathcal{H}_{k-1}]\right) \right\vert \ge \frac{\epsilon}{3} \text{ for some } s\le t
        \right\}\bigcup\left\{
            \left\vert
                \sum\limits_{k=1}^{s} \mathbb{E}[\beta_k|\mathcal{H}_{k-1}]
            \right\vert
            >
            \frac{2\epsilon}{3}
             \text{ for some } s\le t
        \right\}.     
    \end{align*}
Therefore,
    \begin{align*}
        \mathbb{P}\left(
            \tilde{{b}}_{j}^{(s)}\not\in\left[b-\epsilon,b+\epsilon\right]\text{ for some }s\leq t
        \right)
        \leq&
        \mathbb{P}\left(
            \left\vert\sum_{k=1}^s \left(\beta_k-\mathbb{E}[\beta_k|\mathcal{H}_{k-1}]\right) \right\vert \ge \frac{\epsilon}{3} \text{ for some } s\le t
        \right)\\
        \leq&
        2\exp\left\{\frac{-(T-t+1)\epsilon^2}{18(1+\epsilon)^2}\right\}.
    \end{align*}
As noted earlier, the proof is applicable to all the constraint $j$. And by taking an union bound, we complete the proof.  
\end{proof}

\paragraph{Proof of Lemma \ref{adaptLP_sol}}

\begin{proof}
Consider the following LPs for all $i\in\mathcal{I}^*$, and denote their optimal objective value as $\tilde{\text{OPT}}^{U}_i(t)$ and $\tilde{\text{OPT}}^{U}_{\text{LP}}(t)$, respectively.
    \begin{align*}
        \tilde{\text{OPT}}_i^{U}(t)=\max_{\bm{x}} \ \ & \left(\bm{\mu}^U(t) \right)^\top \bm{x},\\
        \text{s.t.}\ \ &  \bm{C}^L(t) \bm{x} \le \bm{B}^{(t)},  \nonumber  \\
        & x_i=0,\ \bm{x}\ge \bm{0}. \nonumber
    \end{align*}
    \begin{align}
    \label{adapt_LCBLP}
        \tilde{\text{OPT}}^{U}_{\text{LP}}(t)=\max_{\bm{x}} \ \ & \left(\bm{\mu}^U(t) \right)^\top \bm{x},\\
        \text{s.t.}\ \ &  \bm{C}^L(t) \bm{x} \le \bm{B}^{(t)},  \nonumber  \\
        & \bm{x}\ge \bm{0}. \nonumber
    \end{align}
    
Then, similar as the proof in Lemma \ref{OPTij}, to prove that $\hat{\bm{x}}(t)$ is a optimal solution to LP \eqref{adaptLP}, it is sufficient to show that $\hat{\bm{x}}(t)$ is feasible to LP \eqref{adaptLP} and  $\tilde{\text{OPT}}_i^{U}(t)<\tilde{\text{OPT}}^{U}_{\text{LP}}(t)$ for all $i\in\mathcal{I}^*$. 
    
First, we show the feasibility. To see its feasibility, it is sufficient to check the following two results:
\begin{itemize}
    \item[(a)] The matrix ${\bm{C}}^{L}(t)$ is non-singular and thus $\hat{\bm{x}}(t)$ is well-defined.
    \item[(b)] $\hat{\bm{x}}(t)\ge \bm{0}$ and thus $\hat{\bm{x}}(t)$ is a feasible solution to the LP \eqref{adaptLP}.
\end{itemize}

To show part (a), we prove that the smallest singular value of the matrix is positive. Recall we use $\sigma_{\min}(\bm{M})$ and $\sigma_{\max}(\bm{M})$ to denote the smallest and the largest singular value of a matrix $\bm{M}$, respectively.
\begin{align*}
   \sigma_{\min}\left({\bm{C}}^{L}(t)\right)
    &\geq
    \sigma_{\min}\left({\bm{C}}\right) -\sigma_{\max}\left({\bm{C}}^{L}(t)-\bm{C}\right)\\
    &=
    \sigma - \sigma_{\max}\left({\bm{C}}^{L}(t)-\bm{C}\right)\\
    &\geq
    \sigma - \sqrt{d}\|{\bm{C}}^{L}(t)-\bm{C}\|_{\infty}\\
    &\geq
    \sigma - \sqrt{d\cdot\min\{m,d\}}\max\limits_{i\in\mathcal{I}^*}\|{\bm{c}}^{L}_i-\bm{c}_i\|_{\infty},
\end{align*}
where the first line comes from Weyl's inequality on matrix eigenvalues/singular values, the second line comes from the definition of $\sigma$, the third line is obtained from the relation between the spectral norm and the infinity norm of a matrix, and the last line comes from the relation between the matrix infinity norm and the vector infinity norm of each column in the matrix. From the warm start condition (\ref{warm_start}), we have
$$\sigma_{\min}\left({\bm{C}}^{L}(t)\right) \geq \frac{\sigma}{2},$$ and consequently,
\begin{align}
\label{bdd_purturb_matrix}
\sigma_{\max}\left(\left({\bm{C}}^{L}(t)\right)^{-1}\right)\leq\frac{2}{\sigma}.
\end{align}

For part (b), we show the non-negativeness of the solution $\hat{\bm{x}}(t).$ The starting point is the condition that the optimal solution to the Primal LP (\ref{primalLP}) has strictly positive elements, i.e., from Section \ref{param_def}, we know
$${\bm{x}}^* \geq \chi T>0,$$
where the inequality holds element-wise. Now, we show that, 
$$
\left\|\hat{\bm{x}}(t)-{\bm{x}}^*\right\|_{\infty}
=
\left\|\left({\bm{C}}^{L}(t)\right)^{-1}\bm{B}-\bm{C}^{-1}\bm{B}\right\|_{\infty}
\leq \frac{\chi T}{2},
$$
which implies that  
$$
    \hat{\bm{x}}(t)\geq\frac{\chi T}{2}
$$
holds element-wise by triangle inequality. To see this, from condition (\ref{warm_start}) that 
$\max\limits_{i\in\mathcal{I}^*}\|{\bm{c}}^{L}_i-\bm{c}_i\|_{\infty}\leq\frac{\chi\sigma^2}{12{d\min\{m,d\}}}$, then
\begin{align*}
    \left\|\left({\bm{C}}^{L}(t)\right)^{-1}\bm{B}-\bm{C}^{-1}\bm{B}\right\|_{\infty}
    &\leq
    \left\|\left({\bm{C}}^{L}(t)\right)^{-1}-\bm{C}^{-1}\right\|_{\infty}\|\bm{B}\|_{\infty}\\
    &\leq
    T\left\|\left({\bm{C}}^{L}(t)\right)^{-1}-\bm{C}^{-1}\right\|_{\infty}\\
    &\leq
    T\|\bm{C}^{-1}\|_{\infty} \left\|{\bm{C}}\left({\bm{C}}^{L}(t)\right)^{-1}-\bm{I}\right\|_{\infty}\\
    &\leq
    \frac{\sqrt{d}T}{\sigma}\left\|\left({\bm{C}}-{\bm{C}}^{L}(t)+{\bm{C}}^{L}(t)\right)\left({\bm{C}}^{L}(t)\right)^{-1}-\bm{I}\right\|_{\infty}\\
    &=
    \frac{\sqrt{d}T}{\sigma}\left\|\left({\bm{C}}-{\bm{C}}^{L}(t)\right)\left({\bm{C}}^{L}(t)\right)^{-1}\right\|_{\infty}\\
    &\leq
    \frac{\sqrt{d}T}{\sigma}\left\|{\bm{C}}-{\bm{C}}^{L}(t)\right\|_{\infty}\left\|\left({\bm{C}}^{L}(t)\right)^{-1}\right\|_{\infty}\\
    &\leq 
    \frac{2{d}T}{\sigma^2}\left\|{\bm{C}}-{\bm{C}}^{L}(t)\right\|_{\infty}\\
    &\leq
     \frac{2\min\{md,d^2\}T}{\sigma^2}\max_{i\in \mathcal{I}^*}\|{\bm{c}}^{L}_i-\bm{c}_i\|_\infty \le \frac{\chi T}{2}.
\end{align*}
The first line comes from the definition of matrix L$_\infty$ norm. The second line comes from the assumption that $B\le T$. The third and sixth line come from the sub-multiplicativity of matrix L$_\infty$ norm. The fourth and seventh line come from the definition of $\sigma$ following Section 3.3 and the relation between the spectral norm $\sigma_{max}$ and L$_\infty$ norm, i.e., $\left\|\left({\bm{C}}^{L}(t)\right)^{-1}\right\|_{\infty}\leq \sqrt{d}\ \sigma_{max}\left(\left({\bm{C}}^{L}(t)\right)^{-1}\right).$ The last line reduces the matrix infinity norm to vector infinity norm and applies condition (\ref{warm_start}). Thus we finish the part on the feasibility of $\hat{\bm{x}}(t).$
    
Next, we show $\tilde{\text{OPT}}_i^{U}(t)<\tilde{\text{OPT}}^{U}_{\text{LP}}(t)$ for all $i\in\mathcal{I}^*$ by utilizing $\text{OPT}_i^{U}(t)$ and $\text{OPT}^{U}(t)$. 
    
    Recall that $\text{OPT}_i^{U}(t)$ is the optimal objective value of the following LP. 
    \begin{align*}
        \text{OPT}_{i}^U(t) \coloneqq  \max_{\bm{x}} \ \ & \left(\bm{\mu}^U(t-1)\right)^\top \bm{x}, \\
        \text{s.t.}\ \ &  \bm{C}^L(t-1) \bm{x} \le \bm{B},  \nonumber  \\
        & x_i=0, \bm{x}\ge \bm{0}. \nonumber
    \end{align*} 
    With the similar proof in Lemma \ref{lem:conv} and Assumption \ref{warmStart}, we can show that 
    \begin{align}
    \label{warm_conv}
        \text{OPT}^{U}_i(t)&\leq\text{OPT}_i+\frac{\delta}{5}T.
    \end{align}
    
Then, from LP's duality, we know that $\text{OPT}^{U}_i(t)$ and $\tilde{\text{OPT}}^{U}_i(t)$ are also the optimal values of the dual LP \eqref{LCB_DLP} and \eqref{adapt_LCBDLP}, respectively.
    \begin{align}
    \label{LCB_DLP}
        \min_{\bm{y}} \ \ & \bm{B}^{\top}\bm{y} \nonumber,\\
        \text{s.t.}\ \ & ({{\bm{C}}}_{-i}^{L}(t-1))^{\top}\bm{y} \geq {\bm{\mu}}_{-i}^{U}(t-1),\\
        &\bm{y}\ge 0. \nonumber
    \end{align} 
    \begin{align}
    \label{adapt_LCBDLP}
        \min_{\bm{y}} \ \ & \left(\bm{B}^{(t-1)}\right)^{\top}\bm{y} \nonumber,\\
        \text{s.t.}\ \ & ({{\bm{C}}}_{-i}^{L}(t-1))^{\top}\bm{y} \geq {\bm{\mu}}_{-i}^{U}(t-1),\\
        &\bm{y}\ge 0, \nonumber
    \end{align} 
    where $\bm{M}_{-i}$ is a matrix (vector) that delete the $i$-th column (entry) of the matrix (vector) $\bm{M}$. Denote the optimal solution of LP \eqref{LCB_DLP} as $\tilde{\bm{y}}$. By duality, we have 
    \begin{align*}
        \tilde{y}_j
        =\frac{1}{B}B\tilde{y}_j
        \leq\frac{1}{B}\bm{B}^{\top}\tilde{\bm{y}}
        \leq \frac{1}{b}.
    \end{align*}
    for all $j\in[d]$. Moreover, since $\tilde{\bm{y}}$ is also feasible to LP \eqref{adapt_LCBDLP}, by duality, we have
    \begin{align*}
        \tilde{\text{OPT}}_i^{U}(t)
        \leq&
        (\bm{B}^{(t-1)})^\top\tilde{\bm{y}}\\
        =&
        \frac{T-t}{T}(\bm{B})^\top\tilde{\bm{y}}+(\bm{B}^{(t)}-(T-t)\bm{b})^\top\tilde{\bm{y}}\\
        \leq&
        \frac{T-t}{T}\text{OPT}_{i}^{U}(t)+\frac{d}{b}\cdot\frac{\delta}{5d^{3/2}}(T-t)\\
        \leq&
        \frac{T-t}{T}\text{OPT}_i^{U}(t)+\frac{\delta}{5}(T-t)\\
        \leq&
        \frac{T-t}{T}\text{OPT}_i+\frac{2\delta}{5}(T-t).
    \end{align*}
    where the first line comes from weak duality, the second line comes from re-arranging terms, the third line comes from  $\|\bm{b}^{(t)}-\bm{b}\|_{\infty}\leq\frac{\delta}{5d^{3/2}}$ and Holder's Inequality that $|(\bm{B}^{(t)}-(T-t)\bm{b})^\top\tilde{\bm{y}}|\leq\|\bm{B}^{(t)}-(T-t)\bm{b}\|_{\infty}\|\tilde{\bm{y}}\|_1$, and the last line is obtained by the inequality \eqref{warm_conv}. Then, we can have a similar statement that 
    $$
        \tilde{\text{OPT}}^{U}_{\text{LP}}(t)\geq
        \frac{T-t}{T}{\text{OPT}}_{\text{LP}}-\frac{2\delta}{5}(T-t).
    $$
    Thus, since $\text{OPT}_i\leq\text{OPT}_{\text{LP}}-\delta$, we have 
    \begin{align*}
        \tilde{\text{OPT}}^{U}_{\text{LP}}(t)\geq&
        \frac{T-t}{T}{\text{OPT}}_{\text{LP}}-\frac{2\delta}{5}(T-t)\\
        \geq&
        \frac{T-t}{T}\tilde{\text{OPT}}^{U}_i(t)+\frac{\delta}{5}(T-t)\\
        >&
        \tilde{\text{OPT}}^{U}_i(t),
    \end{align*}
for all $i\in\mathcal{I}^*$.  Thus we complete the proof by combining the feasibility and the optimality of $\hat{\bm{x}}(t).$
\end{proof}

\subsection{Relaxation of Warm Start Assumption}
\label{sectionWarmStart}

In this subsection, we relax Assumption \ref{warmStart} for the analysis in Section \ref{MainProof}. Indeed, The following proposition shows that the warm start condition in Assumption \ref{warmStart} will be automatically satisfied after $\frac{400\log T}{\chi\theta^3}$ time periods after Phase I, where $\theta$ being the threshold parameter in Assumption \ref{warmStart}. The idea of the proof is straightforward: at the beginning of Phase II, although the parameter estimation may not satisfy the warm start condition, it will ensure that each arm will be played with at least certain probability. Then, by continually playing the arms, the corresponding parameters for the optimal arms will be gradually refined until the warm start condition is met. The refinement takes at most $\frac{400\log T}{\chi\theta^3}$ time periods with high probability.

\begin{proposition}
In Phase II of Algorithm \ref{alg_BwK}, it will automatically achieve the warm start condition in Assumption \ref{warmStart} after $\frac{400\log T}{\chi\theta^3}$ time periods (excluding the periods in Phase I) with probability $1-\frac{20md}{T}$.
\end{proposition}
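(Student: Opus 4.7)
The plan is to argue that after $T_0 \coloneqq \frac{400 \log T}{\chi \theta^3}$ time periods in Phase II, each optimal arm $i \in \mathcal{I}^*$ has been played at least $8 \log T / \theta^2$ times, at which point the confidence radius $\sqrt{2 \log T / n_i(t)}$ drops below $\theta$ and part (b) of Assumption \ref{warmStart} follows from the definition of $\bm{c}_i^L$. Part (a) is automatic on the high-probability concentration event of Lemma \ref{paramEst}, which fails with probability at most $4md/T^2$. Conditional on the same concentration event, Proposition \ref{PhaseI_UB} guarantees Phase I correctly identifies $\mathcal{I}^*$ and $\mathcal{J}'$, so the adaptive LP \eqref{adaptLP} solved in Phase II never assigns positive weight to a suboptimal arm, and one may treat it as an LP restricted to the columns of $\mathcal{I}^*$.

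First I would establish a uniform lower bound on the sampling probability: at every time step $t$ during Phase II, each optimal arm $i \in \mathcal{I}^*$ is drawn with probability at least $p_0 = \Omega(\chi \theta)$. Since Phase I already delivers $n_i(t) = \Omega(\log T/\delta^2)$ at the onset of Phase II, the initial estimation error on the columns of $\bm{C}_{:,\mathcal{I}^*}$ is $O(\delta)$, well inside the regime where $\bm{C}^L$ stays invertible with $\sigma_{\min}(\bm{C}^L) \ge \sigma/2$. A perturbation analysis analogous to \eqref{bdd_purturb_matrix} and the opening steps of the proof of Lemma \ref{adaptLP_sol}, but stopped at a weaker tolerance, yields $\tilde{x}_i(t)/T \geq \chi - O(\|\bm{C} - \bm{C}^L(t)\|_\infty \sqrt{d}/\sigma^2)$, from which a bound of order $\chi \theta$ drops out after the bookkeeping constants are traced.

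Next I would replay the martingale argument used in Lemma \ref{lemma:random_resource_stability}: the sequence $n_i(t) - \sum_{s=1}^{t}\tilde{x}_i(s)$ is a bounded martingale with increments in $[-1,1]$, so Azuma--Hoeffding gives
$$\prob\left(n_i(t) \le \sum_{s=1}^{t}\tilde{x}_i(s) - \sqrt{2t\log T}\ \text{for some } t \le T_0\right) \le \frac{1}{T^3}.$$
Combining this with $\sum_{s=1}^{T_0} \tilde{x}_i(s) \ge p_0 T_0 = \Omega(\log T/\theta^2)$ shows $n_i(T_0) \ge 8\log T/\theta^2$ on the complement of the above event, which in turn yields $\|\bm{c}_i^L(t) - \bm{c}_i\|_\infty \le \theta$ for all $t \ge T_0$ under the concentration event of Lemma \ref{paramEst}. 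A union bound over $i \in \mathcal{I}^*$ and $j \in [d]$ aggregates the failure probabilities to $O(md/T)$, which, after absorbing constants, is at most $20md/T$.

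The chief obstacle is the bootstrap: one needs a lower bound on the sampling probability before the warm start condition holds, whereas such bounds are ordinarily derived \emph{from} good estimates of $\bm{C}$. My resolution is to exploit the nontrivial accuracy already paid for by Phase I (error $O(\delta)$, so $\bm{C}^L$ is comfortably non-singular) and to track how the lower bound on $\tilde{x}_i$ degrades as the error contracts from $\delta$ down to $\theta$. The extra factor $\theta$ in the denominator of $T_0$, compared with the naive $\frac{\log T}{\chi \theta^2}$ one would expect, is precisely the price of this non-tight perturbation estimate; it should in principle be removable by a finer inductive shrinkage argument.
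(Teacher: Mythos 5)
Your overall skeleton matches the paper's: lower-bound the per-step probability of playing each optimal arm by $\Omega(\chi\theta)$, run an Azuma--Hoeffding argument on $n_i(t)-\sum_{s\le t}\tilde{x}_i(s)$ to conclude $n_i \ge 8\log T/\theta^2$ within $\frac{400\log T}{\chi\theta^3}$ steps, and then read off condition (b) of Assumption \ref{warmStart} from the confidence radius. The gap is in the one step you yourself flag as ``the chief obstacle'': how to get the $\Omega(\chi\theta)$ sampling lower bound \emph{before} the warm start holds. Your resolution --- a matrix-perturbation argument around $\tilde{\bm{x}}(t)=(\bm{C}^L(t))^{-1}\bm{b}^{(t)}$ seeded by the $O(\delta)$ accuracy from Phase I --- does not go through, for three reasons. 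First, an estimation error of order $\delta$ is not ``well inside the regime where $\sigma_{\min}(\bm{C}^L)\ge\sigma/2$'': that regime requires error at most $\sigma/(2\sqrt{d\min\{m,d\}})$, and $\delta$ (an optimality-gap parameter) bears no a priori relation to $\sigma/d$; likewise the positivity of $(\bm{C}^L)^{-1}\bm{b}^{(t)}$ needs error of order $\chi\sigma^2/d^2$, which is essentially the warm-start threshold $\theta$ itself --- so the argument is circular. Second, the closed form $\tilde{\bm{x}}(t)=(\bm{C}^L(t))^{-1}\bm{b}^{(t)}$ is the \emph{conclusion} of Lemma \ref{adaptLP_sol}, proved only under Assumption \ref{warmStart}; before warm start the LP \eqref{adaptLP} may select a different basis and set $\tilde{x}_i=0$ for some $i\in\mathcal{I}^*$ outright, which is exactly the failure mode one must exclude. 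Third, even granting everything, your perturbation bound $\tilde{x}_i(t)/T\ge\chi-O(\|\bm{C}-\bm{C}^L\|_\infty\sqrt{d}/\sigma^2)$ would yield a lower bound of order $\chi$, not $\chi\theta$; there is no mechanism in your argument from which the factor $\theta$ ``drops out,'' which suggests the constant is being fitted to the answer rather than derived.

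The paper closes this bootstrap by an entirely different, duality-based mechanism that never inverts $\bm{C}^L$. It compares the adaptive LP value $\tilde{\text{OPT}}^U_{\text{LP}}(t)$ with the arm-deleted value $\tilde{\text{OPT}}^U_i(t)$ and uses the elementary inequality $\tilde{\text{OPT}}^U_{\text{LP}}(t)\le\tilde{\text{OPT}}^U_i(t)+\tilde{x}_i(t)$ (zeroing coordinate $i$ of the optimizer loses at most $\mu^U_i\tilde{x}_i\le\tilde{x}_i$), so a lower bound on the gap is a lower bound on $\tilde{x}_i(t)$. The gap itself is bounded below by $\frac{\chi\theta}{45}(T-t)$ by combining (i) the Phase~I termination guarantee $\text{OPT}^U_i(t)<\text{OPT}^L_{\text{LP}}(t)$ (preserved by taking running extrema of the confidence bounds), with (ii) the key observation that while arm $i$ is still under-sampled (confidence radius above $\theta/2$), the surplus of the UCB bonus $\sqrt{2\log T/n_i}$ over the true deviation $\sqrt{3\log T/(2n_i)}$ inflates $\text{OPT}^U_{\text{LP}}(t)$ above $\text{OPT}_{\text{LP}}$ by at least $\frac{1}{15}\theta\chi T$. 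This ``optimism forces exploration of under-sampled optimal arms'' step is the idea your proposal is missing, and it is what legitimately produces the $\theta$ in the sampling probability and hence the $\theta^3$ in the time bound. The probability accounting also differs: the $1-\frac{20md}{T}$ comes from the weaker per-event tail $2T^{-3}$ of the $\sqrt{3\log T/(2n_i)}$ concentration, not from Lemma \ref{paramEst} alone, so your $O(md/T^2)$ tally does not reproduce the stated bound for the right reason.
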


\begin{proof}
First, we show that each arm will be played linear times even if the Phase II does not initialize with a warm start, or, sufficiently, there exists an arm $i\in\mathcal{I}^*$ so that $2\sqrt{\frac{2\log T}{n_i(t)}}>\theta$ for some $t$ in Phase II. W.L.O.G., we assume that all UCB estimates is non-increasing and all LCB estimates are non-decreasing since we can use the minimal UCB estimates and maximal LCB estimates for all quantities from time 1 to $t-1$ at each time $t$. Thus, $\text{OPT}^{U}_{\text{LP}}(t)$ and $\text{OPT}^{U}_i$ are non-increasing and $\text{OPT}^{L}_{\text{LP}}$ is non-decreasing. 
    
Now, consider the following LPs for all $i\in\mathcal{I}^*$ (the same as the LPs used in Lemma \ref{adaptLP_sol}):
    \begin{align*}
        \tilde{\text{OPT}}_i^{U}(t)=\max_{\bm{x}} \ \ & \left(\bm{\mu}^U(t-1) \right)^\top \bm{x},\\
        \text{s.t.}\ \ &  \bm{C}^L(t-1) \bm{x} \le \bm{B}^{(t-1)},  \nonumber  \\
        & x_i=0,\ \bm{x}\ge \bm{0}, \nonumber
    \end{align*}
    \begin{align*}
        \tilde{\text{OPT}}^{U}_{\text{LP}}(t)=\max_{\bm{x}} \ \ & \left(\bm{\mu}^U(t-1) \right)^\top \bm{x},\\
        \text{s.t.}\ \ &  \bm{C}^L(t-1) \bm{x} \le \bm{B}^{(t-1)},  \nonumber  \\
        & \bm{x}\ge \bm{0}. \nonumber
    \end{align*}
    Note $\|\frac{\bm{B}^{(t)}}{T-t}-\bm{b}\|_{\infty}\leq\frac{2t}{T}$ if $t\leq\frac{T}{2}$. With the similar proof in Lemma \ref{adaptLP_sol}, we can show that 
    \begin{align*}
        \frac{1}{T-t}|\tilde{\text{OPT}}_i^{U}(t)-\frac{T-t}{T}\text{OPT}_i^{U}(t)|\leq\frac{\min\{\chi\theta,\delta\theta\}}{45},\\
        \frac{1}{T-t}|\tilde{\text{OPT}}^{U}_{\text{LP}}(t)-\frac{T-t}{T}\text{OPT}_{\text{LP}}^{U}(t)|\leq\frac{\min\{\chi\theta,\delta\theta\}}{45}
    \end{align*}
    for all $t\leq\frac{b\min\{\chi\theta,\delta\theta\}}{90d}T$. Then, by triangle inequality, we have, for all $t$ in Phase II,
    \begin{align}
        \label{opti_lp}
        \tilde{\text{OPT}}^{U}_{\text{LP}}(t)
        -
        \tilde{\text{OPT}}_i^{U}(t)
        \geq&
        \frac{T-t}{T}\text{OPT}_{\text{LP}}^{U}(t)-\frac{T-t}{T}\text{OPT}_i^{U}(t)
        -\frac{2\min\{\chi\theta,\delta\theta\}}{45}(T-t)\\
        \geq&
        \frac{T-t}{T}\text{OPT}_{\text{LP}}^{U}(t)-\frac{T-t}{T}\text{OPT}_{\text{LP}}^{L}(t)\nonumber
        -\frac{2\min\{\chi\theta,\delta\theta\}}{45}(T-t),
    \end{align}
    where the second line is obtained by $$\text{OPT}_i^{U}(t)<\text{OPT}^{L}_{\text{LP}}(t)$$ after Phase I. 
    This motivates us to focus on the difference between $\text{OPT}_{\text{LP}}^{U}(t)$ and $\text{OPT}_{\text{LP}}^{L}(t)$.
    With the proof in Lemma \ref{paramEst}, we can show that with probability $1-\frac{4m}{T}$ 
    \begin{align*}
        \mu_i\leq\hat{\mu}_i(t)+\sqrt{\frac{3\log T}{2n_i(t)}},
    \end{align*}
    hold for all $i\in[m]$, $j\in[d]$ and $t\in[T]$. Notice that $\bm{x}^*$ is also a feasible solution to LP \eqref{UCB_LP}. Thus, we have
    \begin{align*}
        \text{OPT}_{\text{LP}}^{U}(t)
        \geq&
        \sum\limits_{i=1}^{m}\mu_i^{U}(t)x_i^*\\
        =&
        \sum\limits_{i=1}^{m}\left(\hat{\mu}_i^{U}(t)+\sqrt{\frac{3\log T}{2n_i(t)}}\right)x_i^*
        +\sum\limits_{i=1}^{m}\left(\sqrt{\frac{2\log T}{n_i(t)}}-\sqrt{\frac{3\log T}{2n_i(t)}}\right)x_i^*\\
        \geq&
        \text{OPT}_{\text{LP}}+\sum\limits_{i=1}^{m}\left(\sqrt{\frac{2\log T}{n_i(t)}}-\sqrt{\frac{3\log T}{2n_i(t)}}\right)\chi T\\
        \geq&
        \text{OPT}_{\text{LP}}+\frac{1}{15}\theta\chi T
    \end{align*}
hold with probability no less than $1-\frac{8md}{T}$ and all $t$ in Phase II. Here, the first line comes from the definition of $\text{OPT}_{\text{LP}}^{U}(t)$, the second line comes from re-arranging terms, the third line comes from $x_i^*\geq\chi$ and $\mu_i\leq\hat{\mu}_i(t)+\sqrt{\frac{3\log T}{2n_i(t)}}$ for all $i$, and the last line comes from calculation. Take this result into the inequality \eqref{opti_lp}, we have
    \begin{align*}
        \tilde{\text{OPT}}^{U}_{\text{LP}}(t)
        -
        \tilde{\text{OPT}}_i^{U}(t)
        \geq
        &\frac{\min\{\chi\theta,\delta\theta\}}{45}(T-t),
    \end{align*}
    for all $i\in\mathcal{I}^*$. Denotes the optimal solution corresponding to $\tilde{\text{OPT}}^{U}_{\text{LP}}(t)$ as $\tilde{\bm{x}}(t)$ and notice that $$\tilde{\text{OPT}}^{U}_{\text{LP}}(t)
        \leq
        \tilde{\text{OPT}}_i^{U}(t)+\tilde{x}_i(t)$$ for all $i\in\mathcal{I}^*$. we have 
    $$
        \tilde{\bm{x}}(t)\geq\frac{\chi\theta}{45}(T-t).
    $$
    Thus, in expectation, the algorithm plays each arm at least $\frac{\chi\theta}{45}$ times for each step if $t\leq\frac{b\min\{\chi\theta,\delta\theta\}T}{90d}$ during Phase II. Then, with similar proof in Lemma \ref{lemma:random_resource_stability}, we can show that there exist a $T_0$ depending polynomially on $m$, $d$, $1/b$, $1/\sigma$, $1/\chi$ and $1/\delta$ such that for all $T>T_0$, the algorithm can play each arm for at least $\frac{8\log T}{\theta^2}$ times within $\frac{400\log T}{\chi\theta^3}$ steps with probability no less than $1-\frac{20md}{T}$. Then, the algorithm achieve the warm start. Moreover, since the algorithm only plays arms in $\mathcal{I}^*$ and $T\geq T_0$ is large enough during this process, it will not cause additional regret.
\end{proof}

\subsection{Analysis of the Non-Binding Constraints}

\label{ana_non_binding}

The way how we analyze the non-binding constraints is the same as how we analyze the binding constraints. Previously, we derive an upper bound for $\E[T-\tau]$ in Section \ref{MainProof} under the assumption that all the constraints are binding. Indeed, with the presence of non-binding constraints, we only need to show that the non-binding constraints will not be exhausted before exhausting any binding constraints. To adapt the analysis in Section \ref{MainProof} to the case when there are non-binding constraints, we only need to change the event $\mathcal{E}_t$ to the joint of following event $\tilde{\mathcal{E}}_t$ and $\mathcal{E}_t$. All the remaining parts in Section \ref{MainProof} can then be extended to the case when there are non-binding constraints,

Define
$$ \tilde{\mathcal{E}}_t
    \coloneqq
    \left\{
    \begin{matrix}
        {b}^{(s)}_j\in [b-\epsilon,b+\epsilon] \text{ for all $s\in [t-1]$} \text{ and for all binding constraints $j\in\mathcal{J}^*$, } \\
        B^{(s)}_j \geq\bm{0} \text{ for all $s\in[t-1]$} \text{ and for all non-binding constraints $j\in\mathcal{J}'$,}\\
        B^{(t)}_j \geq\bm{0} \text{ does not hold} \text{ for some non-binding constraint $j\in\mathcal{J}'$}
    \end{matrix}
    \right\}.
$$
In this way, the event $\tilde{\mathcal{E}}_t$ describes the ``bad'' event that the binding dimensions of the process $\bm{b}^{(s)}$ stays within the region $\mathcal{Z}$ for $s=1,...,t-1$ and the non-binding dimensions of the demand process $\bm{B}^{(s)}$ is not exhausted before time $t-1$, but certain non-binding constraint is exhausted at time $t$. The following lemma establishes that such event will happen with a small probability. It thus implies that as long as $\bm{b}^{(t)}\in\mathcal{Z}$, the non-binding constraints will not be exhausted. Its proof shares the same structure as Lemma \ref{lemma:random_resource_stability}. 
\begin{lemma}
    \label{lemma:non_binding}
        There exists constants $\underline{T}$ depending on $d$, $\chi$ such that for all $T\geq\underline{T}$, the following inequality holds
        $$
            \mathbb{P}\left(
                \bigcup\limits_{t=1}^{T}
                    \tilde{\mathcal{E}}_t
            \right)
            \leq
            \frac{4d}{T^3}.
        $$
\end{lemma}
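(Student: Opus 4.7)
The plan is to exploit the strict positive slack of every non-binding constraint at the LP optimum. For each $j\in\mathcal{J}'$ set $\eta_j\coloneqq b-\tfrac{1}{T}\bm{C}_{j,\cdot}\bm{x}^*$; feeding $\bm{x}^*$ as a feasible solution into the primal form \eqref{p-opt_j} of $\mathrm{OPT}_j$ gives $\mathrm{OPT}_j\geq \mathrm{OPT}_{\mathrm{LP}}-T\eta_j$, which combined with the definition of $\delta$ in Section \ref{symmetry} yields the uniform lower bound $\eta_j\geq \delta$. Thus the $j$-th non-binding resource has strict slack of at least $\delta$ per period under the LP optimum, and that slack is the mechanism that will prevent accidental depletion.

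Next I would mimic the architecture of Lemma \ref{lemma:random_resource_stability}. Under the complementary event $\bigcap_{s\leq t}\mathcal{E}_s^{\complement}$ all binding coordinates of $\bm{b}^{(s)}$ stay in $\mathcal{Z}$, so a modest generalization of Lemma \ref{adaptLP_sol} (allowing rectangular $\bm{C}^{L}(s)$ with additional non-binding rows that stay strictly slack under the warm-start perturbation) identifies the adaptive LP solution with the vector supported on $\mathcal{I}^*$ whose $\mathcal{J}^*$ coordinates solve $(\bm{C}^{L}(s))_{\mathcal{J}^*,\mathcal{I}^*}\tilde{\bm{x}}(s)=\bm{B}^{(s)}_{\mathcal{J}^*}$. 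Pairing this solution with the $j$-th true row of $\bm{C}$, and using the warm-start bounds on $\bm{C}^L(s)$ together with $\|\bm{b}^{(s)}-\bm{b}\|_\infty\leq\epsilon$ on binding coordinates, would yield
\[
\mathbb{E}\!\left[C_{j,s}\mid\mathcal{H}_{s-1}\right]\leq b-\eta_j+O(\epsilon+\epsilon_s)\leq b-\tfrac{\delta}{2}
\]
for every $s$ in Phase II, once $\epsilon$ and $\epsilon_s$ are chosen as in Lemma \ref{lemma:random_resource_stability}.

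The last step is a standard Azuma--Hoeffding maximal inequality for the martingale $M^{(j)}_t\coloneqq\sum_{s=1}^{t}\bigl(C_{j,s}-\mathbb{E}[C_{j,s}\mid\mathcal{H}_{s-1}]\bigr)$, whose differences are bounded by $1$: one gets $\mathbb{P}\bigl(\max_{s\leq T}|M^{(j)}_s|\geq \sqrt{3T\log T}\bigr)\leq 2/T^3$, and on the complementary event
\[
B_j^{(t)}=Tb-\sum_{s=1}^{t}C_{j,s}\geq (T-t)b+\tfrac{t\delta}{2}-\sqrt{3T\log T}>0
\]
for every $t\leq T$, provided $T\geq\underline{T}$ for a threshold $\underline{T}$ depending only on $d$ and $\delta$ (and hence on $d$ and $\chi$ via the warm-start parameter $\theta$). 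A union bound over $j\in\mathcal{J}'$ then gives $\mathbb{P}(\bigcup_t\tilde{\mathcal{E}}_t)\leq 2d/T^3\leq 4d/T^3$. I expect the principal obstacle to be the structural extension of Lemma \ref{adaptLP_sol} to the rectangular setting: one must verify that the adaptive LP's primal optimum is still pinned down by the $|\mathcal{J}^*|$ binding rows while the non-binding rows remain strictly slack under the warm-start perturbation. Once this structural fact is in hand, the bias-and-concentration steps are routine variants of arguments already executed in Lemma \ref{lemma:random_resource_stability} and Lemma \ref{lemma:tilde_tau_control}.
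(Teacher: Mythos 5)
Your proposal follows essentially the same route as the paper's proof: both rest on the per-period slack of the non-binding constraints at the LP optimum, the identification of the adaptive LP solution with $(\bm{C}^{L}(s))^{-1}_{\mathcal{J}^*,\mathcal{I}^*}(\bm{b}^{(s)})_{\mathcal{J}^*}$ on the event that the binding coordinates stay in $\mathcal{Z}$, and an Azuma--Hoeffding bound showing that the $O(\sqrt{T\log T})$ fluctuation cannot overcome the $\Omega(\delta t)$ cumulative slack. The only organizational difference is that you concentrate $\sum_{s}\bigl(C_{j,s}-\E[C_{j,s}\mid\mathcal{H}_{s-1}]\bigr)$ in a single step, whereas the paper first concentrates the arm counts $n_i(s)$ around $\sum_{l}\tilde{x}_i(l)$ and then the realized consumption around $\bm{c}_{i_s}$; your derivation of the slack bound $\eta_j\ge\delta$ from the primal form of $\mathrm{OPT}_j$ also makes explicit a step the paper merely asserts.
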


\paragraph{Proof of Lemma \ref{lemma:non_binding}}
\begin{proof}
Based on the definition of $\delta$, we have  $$\sum\limits_{i=1}^{m}\frac{c_{ij}x_i^*}{T}\leq b-\delta$$
holds for all non-binding constraint $j$, where $\bm{x}=(x_1^*,...,x_m^*)^\top$ denotes the optimal solution to \eqref{primalLP}.  Recall that for non-binding constraints, we only need to ensure that they are not depleted before $\tau'$. The implication of this inequality is that the non-binding constraints have $\delta T$ amount of resource surplus to prevent the depletion.

    Intuitively, if the average consumption of all binding resources are close to $b$, i.e. $b_j^{(s)}\in[b-\epsilon,b+\epsilon]$ for each binding resource $j$, the number of average played times for each optimal arm $i$ will be close to $\frac{\bm{x}_i^*}{T}$. Then, the average consumption of each non-binding resources will close to . Follow the intuition, we will bound the probability of the statement in this lemma by concentration inequality. 
    
    First, we estimate played times for each arm. To bound the average played times, with the similar proof for lemma \ref{lemma:random_resource_stability}, we can show that, with probability no less than $\frac{2d}{T^3}$, the following inequality holds for all $i\in[m]$ and $s\in [T]$:
    $$
        n_i(s)\leq\sum\limits_{l=1}^{s}\tilde{x}_i(l)+\sqrt{2s\log T}.
    $$
    Based on the algorithm, if $b^{(s)}_j\in[b-\epsilon,b+\epsilon]$ for all binding resources, we know that,
    $$
        \tilde{\bm{x}}(s)=({\bm{C}}^{L}(s))_{\mathcal{J}^*,\mathcal{I}^*}^{-1}({\bm{b}}^{(s)})_{\mathcal{J}^*},
    $$
    for all $s\leq\tau_n<\tau_b$. Thus, if $b^{(s)}_j\in[b-\epsilon,b+\epsilon]$, for all $s\leq t$ and binding resources, we have 
    $$\bm{n}(s)\leq\sum\limits_{l=1}^{s}({\bm{C}}^{L}(l))_{\mathcal{J}^*,\mathcal{I}^*}^{-1}({\bm{b}}^{(l)})_{\mathcal{J}^*}+\sqrt{2s\log T}\bm{1},$$
    where $\bm{n}(s)=\{n_i(s)\}_{i=1}^{m}$. Moreover, for all $t\leq T$, on the event $\mathcal{E}_t$, we know that no resource is used up.
     
    Second, we estimate the resource consumption for non-binding resources. Let $\tilde{\mathcal{H}}_s=(\bm{r}_l,\bm{C}_{l},i_{l+1})_{l=1}^{s}$, apply concentration inequality and we have  
    \begin{align*}
        \mathbb{P}\left(
            \sum\limits_{l=1}^{s} \bm{C}_{l}-\mathbb{E}[\bm{C}_{l}|\tilde{\mathcal{H}}_{l-1}]>\sqrt{2s\log T}
        \right)
        \leq&
        2\exp\left\{
            -\frac{4s\log T}{s}
        \right\}\\
        =&
        2\exp\{-4\log T\}\\
        =  &      
        \frac{1}{T^4}.
    \end{align*}
    Thus, with probability no more than $\frac{2d}{T^3}$, we have $\sum\limits_{l=1}^{s} \bm{C}_{l}-\mathbb{E}[\bm{C}_{l}|\tilde{\mathcal{H}}_l]\leq\sqrt{2s\log T}$ for all $s\in[T]$ and $j\in[d]$. Moreover, notice that $\mathbb{E}[\bm{C}_{s}|\tilde{\mathcal{H}}_{s-1}]=\bm{c}_{i_s}$ for all $s\leq t\leq T-2$ on the event $\mathcal{E}_t^{(n)}$,  we have 
    \begin{align*}
        \sum\limits_{s=1}^{t}\mathbb{E}[\bm{C}_{ji_s,s}|\tilde{\mathcal{H}}_l]
        =&
        \bm{C}\bm{n}(t)\\
        \leq
        &\bm{C}\left(\sum\limits_{s=1}^{t}({\bm{C}}^{L}(s))_{\mathcal{J}^*,\mathcal{I}^*}^{-1}({\bm{b}}^{(s)})_{\mathcal{J}^*}+\sqrt{2t\log T}\bm{1}\right) \\
        =
        &\sum\limits_{s=1}^{t}\bm{C}\left( \bm{C}_{\mathcal{J}^*,\mathcal{I}^*}
         \bm{C}^{-1}_{\mathcal{J}^*,\mathcal{I}^*}({\bm{C}}^{L}(s)-\bm{C})_{\mathcal{J}^*,\mathcal{I}^*}\left({\bm{C}}^{L}(s)\right)^{-1}_{\mathcal{J}^*,\mathcal{I}^*}
        \right)({\bm{b}}^{(s)})_{\mathcal{J}^*}\\
        &+
        \sqrt{2t\log T}\bm{C}\bm{1}\\
        \leq
        &\frac{t}{T}\bm{C}\bm{x}^*+\frac{\chi\bm{C}\bm{1}}{5d}t+\sqrt{2t\log T}\bm{C}\bm{1}+\frac{\chi t}{5}\bm{C}\bm{1}\\
        \leq
        &\frac{t}{T}\bm{C}\bm{x}^*+\frac{2\chi t}{5}\bm{1}+\sqrt{2d^2t\log T}\bm{1}
    \end{align*}
    holds on the event 
    \begin{align*}
        \tilde{\mathcal{E}}_t\backslash&\left\{n_i(s)>\sum\limits_{l=1}^{s}\tilde{x}_i(l)+\sqrt{2s\log T} \text{ for some $s\in[T]$ and $i\in[m]$}\right\},    
    \end{align*}
    where the first in the inequalities line comes from $\mathbb{E}[\bm{C}_{i_s,s}|\tilde{\mathcal{H}}_{s-1}]=\bm{c}_{i_s}$, the second line comes from the analysis of $\bm{n}(s)$, the third line comes from the decomposition of $(\hat{\bm{C}}_(s-1))_{\mathcal{J}^*,\mathcal{I}^*}^{-1}$, the fifth line comes from $\bm{b}^{(s)}\in\mathcal{Z}$, and, the last line comes form $\bm{C}\bm{1}\leq d$. Henceforth, on the event
    \begin{align*}
        \tilde{\mathcal{E}}_t
        \backslash
        &\left\{n_i(s)>\sum\limits_{l=1}^{s}\tilde{x}_i(l)+\sqrt{2s\log T}\text{ or } \right.\\
        &\left.\sum\limits_{l=1}^{s} {C}_{ji_l,l}-\mathbb{E}[{C}_{ji_l,l}|\tilde{\mathcal{H}}_l]>\sqrt{2s\log T}\text{ for some $s\in[T]$ $i\in[m]$ and $j\in[d]$}\right\},
    \end{align*}
    and $t\leq T-2$ we have
    \begin{align*}
        \sum\limits_{s=1}^{t}C_{j}(s)
        \leq&
        \frac{t}{T}\left(\bm{C}\bm{x}^*\right)_j+\frac{2\chi t}{5}+2\sqrt{2d^2t\log T}\\
        \leq&
        \left(\bm{C}\bm{x}^*\right)_j+\frac{2\chi t}{5} +2\sqrt{2d^2t\log T}\\
        \leq&
        B-\chi T-\chi+\frac{2\chi T}{5} +2\sqrt{2d^2T\log T}\\
        \leq&
        B-\frac{\chi T}{5},
    \end{align*}
    if $T\geq\frac{50d^2\log T}{\chi^2}$. Thus, as $T\geq\frac{10}{\chi}$, non-binding resources are no less than $2$ at the end of time $t$ for all non-binding resources, implying $\bm{B}^{(t+1)}\geq\bm{1}$. Thus, we have
    \begin{align*}
        \tilde{\mathcal{E}}_t
        \subset
        &\left\{n_i(s)>\sum\limits_{l=1}^{s}\tilde{x}_i(l)+\sqrt{2s\log T}\text{ or } \right.\\
        &\left.\sum\limits_{l=1}^{s} {C}_{ji_l,l}-\mathbb{E}[{C}_{ji_l,l}|\tilde{\mathcal{H}}_l]>\sqrt{2s\log T} \text{ for some $s\in[T]$ $i\in[m]$ and $j\in[d]$}\right\},
    \end{align*}
    and consequently, 
    \begin{align*}
        \bigcup\limits_{t=0}^{T-2}\tilde{\mathcal{E}}_t
        \subset
        &\left\{n_i(s)>\sum\limits_{l=1}^{s}\tilde{x}_i(l)+\sqrt{2s\log T}\text{ or } \right.\\
        &\left.\sum\limits_{l=1}^{s} {C}_{ji_l,l}-\mathbb{E}[{C}_{ji_l,l}|\tilde{\mathcal{H}}_l]>\sqrt{2s\log T}\text{ for some $s\in[T]$ $i\in[m]$ and $j\in[d]$}\right\},
    \end{align*}   
    \begin{align*}
        \mathbb{P}\left(
        \bigcup\limits_{t=0}^{T-2}\tilde{\mathcal{E}}_t
        \right)
        \leq&
        \mathbb{P}
            \left(n_i(s)>\sum\limits_{l=1}^{s}\tilde{x}_i(l)+\sqrt{2s\log T}\text{ for some $s\in[T]$ and $i\in[m]$}
        \right)\\
        &+
        \mathbb{P}
        \left(\sum\limits_{l=1}^{s} {C}_{ji_l,l}-\mathbb{E}[{C}_{ji_l,l}|\tilde{\mathcal{H}}_l]>\sqrt{2s\log T}
        \text{ for some $s\in[T]$ $i\in[m]$ and $j\in[d]$}
        \right)\\
        \leq&
        \frac{4d}{T^3}.
    \end{align*}
\end{proof}

\section{Proof of Proposition \ref{regret_prop}}
\begin{proof}
The regret analysis can be accomplished by a combination of Proposition \ref{prop_upper_bound}, Proposition \ref{PhaseI_UB} and Proposition \ref{prop_B_tau}. Proposition \ref{prop_upper_bound} shows that
    \begin{align}
    \label{regret_up}
            \text{Regret}_T^{\pi}(\mathcal{P}, \bm{B}) \le   \sum_{i\in \mathcal{I}'} n_i(t)\Delta_i +\mathbb{E}\left[\bm{B}^{(\tau)}\right]^\top\bm{y}^*.
    \end{align}
    
First, we bound the first term in the right hand side of inequality \eqref{regret_up}. From the duality, we know that $\bm{B}^{\top}\bm{y}^*\leq T$. Thus, we have
    \begin{align*}
        \bm{y}^*&\leq\frac{1}{b}, \ \ \ 
        \Delta_i\le \bm{1}^{\top}\bm{y}^*\le\frac{d}{b}.
    \end{align*}
    Moreover, Proposition \ref{PhaseI_UB} shows that with probability no less than $1-\frac{4md}{T^2}$, the algorithm only plays arms in $\mathcal{I}'$ during the Phase I and that Phase I will terminate within 
    $$O\left(\left(2+\frac{1}{b}\right)^2\frac{\log T}{\delta^2}\right)$$ time periods. Thus,
    \begin{align}
        \label{term1}
        \sum_{i\in \mathcal{I}'} n_i(t)\Delta_i
        =
        O\left(\left(2+\frac{1}{b}\right)^2\frac{md\log T}{b\delta^2}\right). 
    \end{align}

    Next, for the second term in the inequality \eqref{regret_up}. Applying Proposition \ref{prop_B_tau}, we have 
    \begin{align}
        \label{term2}
        \mathbb{E}\left[\bm{B}^{(\tau)}\right]^\top\bm{y}^*
        \leq&
        \sum\limits_{j\in\mathcal{J}^*}\mathbb{E}\left[B_j^{(\tau)}\right]{y}^*_j\\
        \leq&
        O\left( \frac{d^4}{b^2\min\{\chi^2, \delta^2\}\min\{1,\sigma^2\}}\right), 
    \end{align}
    
Finally, plugging \eqref{term1} and \eqref{term2} into \eqref{regret_up}, we complete the proof.
\end{proof}

\end{document}